
\documentclass[11pt]{article}

\usepackage[utf8]{inputenc} 

\pdfoutput=1

\usepackage{algorithm}
\usepackage{algorithmic}
\usepackage{amsmath}
\usepackage{amssymb}
\usepackage{amsthm,amsfonts}
\usepackage{array}
\usepackage{authblk}
\usepackage{babel}
\usepackage{booktabs}
\usepackage{caption}
\usepackage{colortbl}
\usepackage{enumitem}
\usepackage[T1]{fontenc}    
\usepackage{framed}
\usepackage{fullpage}
\usepackage{graphicx}
\usepackage[colorlinks, linkcolor=red, anchorcolor=blue, citecolor=blue]{hyperref}
\usepackage{indentfirst}
\usepackage[utf8]{inputenc}
\usepackage{lipsum}        
\usepackage{makecell}
\usepackage{mathrsfs}
\usepackage{mathtools}

\usepackage[framemethod=default]{mdframed}
\usepackage{microtype}     
\usepackage{multicol}
\usepackage{multirow}
\usepackage{natbib}
\usepackage{nicefrac}       
\usepackage{tablefootnote}
\usepackage{url}            
\usepackage{wrapfig,lipsum}
\usepackage{xcolor}         
\usepackage{xspace}

\usepackage{doi}
\usepackage{tikz}
\usetikzlibrary{arrows.meta}
\usetikzlibrary{decorations.pathreplacing}
\parindent 15pt



\usepackage{amsmath,amsfonts,bm}

















\def\1{\bm{1}}




\def\rvx{{\mathbf{x}}}
\def\rvy{{\mathbf{y}}}





\def\vf{{\bm{f}}}

\def\vs{{\bm{s}}}

\def\vx{{\bm{x}}}
\def\vy{{\bm{y}}}



\DeclareMathAlphabet{\mathsfit}{\encodingdefault}{\sfdefault}{m}{sl}
\SetMathAlphabet{\mathsfit}{bold}{\encodingdefault}{\sfdefault}{bx}{n}






%






\newcommand{\E}{\mathbb{E}}

\newcommand{\R}{\mathbb{R}}

\newcommand{\KL}[2]{\mathrm{KL}\left(#1 \big\| #2\right)}




\makeatletter
\def\thickhline{%
  \noalign{\ifnum0=`}\fi\hrule \@height \thickarrayrulewidth \futurelet
   \reserved@a\@xthickhline}
\def\@xthickhline{\ifx\reserved@a\thickhline
               \vskip\doublerulesep
               \vskip-\thickarrayrulewidth
             \fi
      \ifnum0=`{\fi}}
\makeatother

\newlength{\thickarrayrulewidth}
\setlength{\thickarrayrulewidth}{3\arrayrulewidth}


\newtheorem{lemma}{Lemma}[section]
\newtheorem{theorem}[lemma]{Theorem}
\newtheorem{corollary}[lemma]{Corollary}

\newtheorem{remark}{Remark}
\newtheorem{assumption}{Assumption}



\newcommand*\samethanks[1][\value{footnote}]{\footnotemark[#1]}



\title{Multi-step Consistency Models: Fast Generation with  \\ Theoretical Guarantees}


\author[$\dagger$]{\normalsize Nishant Jain\thanks{Equal Contribution, Mail to \href{nj27@illinois.edu}{nj27@illinois.edu}, \href{xhuangck@connect.ust.hk}{xuh031@ucsd.edu} }}
\author[$\S$]{Xunpeng Huang\samethanks}
\author[$\S$]{Yian Ma}
\author[$\dagger$]{Tong Zhang}

\affil[$\dagger$]{University of Illinois Urbana-Champaign}
\affil[$\S$]{University of California San Diego}

\begin{document}

\date{}
\maketitle

\begin{abstract}
     Consistency models have recently emerged as a compelling alternative to traditional SDE-based diffusion models. They offer a significant acceleration in generation by producing high-quality samples in very few steps. Despite their empirical success, a proper theoretic justification for their speed-up is still lacking. 
 In this work, we address the gap by providing a theoretical analysis of consistency models capable of mapping inputs at a given time to arbitrary points along the reverse trajectory. We show that one can achieve a KL divergence of order \( O(\varepsilon^2) \) using only \( O\left(\log\left(d/\varepsilon\right)\right) \) iterations with a constant step size. 
 Additionally, under minimal assumptions on the data distribution (non-smooth case)—an increasingly common setting in recent diffusion model analyses—we show that a similar KL convergence guarantee can be obtained, with the number of steps scaling as \( O\left(d \log\left(d/\varepsilon\right)\right) \). 
 Going further, we also provide a theoretical analysis for estimation of such consistency models, concluding that accurate learning is feasible using small discretization steps, both in smooth and non-smooth settings. Notably, our results for the non-smooth case yield best-in-class convergence rates compared to existing SDE/ODE-based analyses under minimal assumptions.

\end{abstract}

\section{Introduction}
Lately, diffusion models \cite{sohl2015deep} have become an important topic of research in computer vision and generative modelling \cite{song2019generative, croitoru2023diffusion, lugmayr2022repaint, song2020denoising, nichol2021glide, song2021maximum, ho2020denoising}, with applications ranging from generating images or videos \cite{epstein2023diffusion, chen2023control} controllably to other areas like drug/protein design \cite{gruver2023protein, guo2024diffusion}. These models comprise a forward process that gradually adds noise to the data, and then the generation is done by the corresponding denoising process, which is sometimes referred to as the reverse/generation process.
These forward/reverse processes can either be seen as transition kernels \cite{huang2024reverse, song2020denoising, ho2020denoising}\ or instead modeled as stochastic differential equations (SDEs) \cite{song2020score, song2020sliced, song2021maximum, chen2022sampling}. 
This is popularly referred as \textit{score based generative modeling} due to its reliance on the neural-network parameterized \textit{score function}, which at a given time instant is the gradient of log probability of the marginal distribution corresponding that time.

Following from the particle-density transition property,~\cite{song2020score} firstly argued that score-based diffusion method can be implemented by an ordinary differential equation (ODE) version.
That means, under the unbiased score estimation, the ODE-based generation will share the same marginal distribution of the particles as those in the SDE-based generations.
From a practical perspective, ODE-based methods can often bring the underlying distribution of particles closer to the data distribution faster than SDE-based methods while maintaining comparable generation quality~\cite{lu2022dpm, zhou2024fast, zhang2022fast}.
Moreover, researchers appreciate the deterministic update property in ODE-based methods since all the randomness is left in the particle initialization, which inspires the proposal of consistency models~\cite{song2023consistency}.

Motivated by distilling the deterministic mapping from ODE-based diffusion models,
the original consistency model paper \cite{song2023consistency} takes any point along the probability flow ODE to the start \textit{i.e.} the true data distribution with single-step generation function.
Following the attention growth, a recent work \cite{kim2023consistency} attempted to improve the consistency model by extending the consistency function to any timestamp pair along the reverse ODE trajectory and consequently proposed a multi-step training scheme to achieve this, showing empirical effectiveness. 
From a theoretical perspective, \cite{dou2024theory, li2024towards} investigates the sample efficiency or the number of iterations to train such consistency models.
However, the essential advantage of the consistency model in the inference process remains unknown. 
Although~\cite{lyu2024sampling} provides some initial exploration for the inference efficiency of the consistency model, for achieving the convergence, an $\tilde{O}(\varepsilon^2)$ step size is required that shares the same order as that in typical SDE or ODE-based inference algorithm.
That result can neither show the advantages of introducing the deterministic update distillation nor match the real practice experience. 
Therefore, a natural question is raised:
\begin{quote}
    \emph{Can the consistency model achieve convergence with a larger step size, matching the real practice experience, and what kind of convergence will it achieve?}
\end{quote}

In this work, we argue that the inference of consistency models via an adapted version of the multi-step updates allows a constant-level step size, which leads to a linear KL convergence toward the original data distribution under minimal smooth assumptions.
Specifically, {with this setup, we show that at the inference time, one can achieve $O(\varepsilon^2)$ error with a constant step size. We provide this analysis for two scenarios: a) having popular assumptions used in the diffusion model analysis \cite{chen2022sampling,song2020score,xu2024provably,lyu2024sampling}, which includes Lipschitzness of the score function, small score estimation error, finite second moment, and b) without assuming that the score function is Lipschitz \cite{chen2023improved, benton2024nearly}  a scenario that is recently considered and deemed closer to the to real-world applications.} We are able to achieve this convergence by using the multi-step generation where after every application of the consistency model, there is a noising step during inference. Intuitively, this noise is effective in cancelling the accumulative score approximation error. Along, with this, another major ingredient is the modified formulation of the original consistency model that can map a sample from a given time instant to any arbitrary instant along the reverse ODE. We thereby analyse a modified multi-step sampling (version adapted to this formulation) in the KL divergence. 

\noindent
 We summarize the major contributions of this work as follows:

\begin{itemize}
    \item We provide an inference time analysis to achieve the $O(\varepsilon^2)$ KL divergence in $O\left(\log(\frac{d}{\varepsilon})\right)$ and a constant step size, utilizing the consistency function corresponding to the reverse probability flow ODE.
    \item We further relax the smoothness assumption and provide the first analysis for consistency models under this scenario to adapt them more general data distributions, showing that the number of steps scales linearly in dimension as $O\left(d\log(\frac{d}{\varepsilon})\right)$. 
    \item We finally provide a theoretical analysis for estimating such consistency functions (under both smooth and non-smooth scenarios) and conclude that under fine-grained discretization at the training time, they can be estimated with very high accuracy. 
\end{itemize}

\subsection{Related Work}

\paragraph{SDE-Based Analysis of Diffusion Models.}
The foundational work establishing the effectiveness of diffusion models for generative tasks is the Denoising Diffusion Probabilistic Models (DDPM) framework introduced by \cite{ho2020denoising}. Building on this, \cite{song2020score} demonstrated that the forward noising process in DDPMs can be interpreted as a stochastic differential equation (SDE), laying the groundwork for continuous-time formulations of diffusion models. Subsequent works \cite{chen2022sampling,li2023towards, li2024towards, lee2022convergence} have focused on providing convergence guarantees for such SDE-based generative processes under smoothness or other regularity conditions.
More recent advancements have relaxed the smoothness assumptions traditionally imposed on the score function. For example, \cite{chen2023improved} and \cite{benton2023nearly} showed that the generative process can still converge to a Gaussian-perturbed version of the data distribution, even in the absence of score smoothness. Notably, the recent work of \cite{li2024d} achieved an improved convergence rate of $\mathcal{O}(d/T)$ for DDPM samplers without requiring smoothness of the score function.

\paragraph{ODE based diffusion analysis.} 
Since the discovery of the probability flow ODE, there has been growing interest in deterministic generation using diffusion models.
 One of the prominent works is DDIM \cite{song2020denoising}.
Others include a recent work \cite{chen2023probability} which showed under the standard assumptions the ODE also converges quickly.
Convergence analysis of this DDIM sampler has also been discussed in a couple of recent works \cite{li2024sharp, gao2024convergence, huang2025convergence, li2023towards, li2024unified} but require some additional assumptions. The best bounds for a general data distribution requires an assumption on the divergence of the estimated score \cite{li2024sharp}. A recent work \cite{li2024unified} instead exploited the Fokker-Planck equation but again with the additional assumption of Jacobi of estimated score for TV distance analysis. It also shows that without such additional assumptions the TV distance will always be lower bounded by a constant. 

\paragraph{Consistency Model Analysis} The original consistency models paper \cite{song2023consistency} proposed a single as well multi-step sampling scheme along with distillation based (which requries a pre-trained diffusion model to distill knowledge) and self-consistency training based setups. \cite{lyu2024sampling} provides theoritical analysis in the wassertian distance for both single and multi-step sampling, using the score estimation and lipschitz smoothness assumptions, along with the TV error analysis but with additional assumptions. It resulted in the step size/discretization complexity comparable to the state of the art SDE based diffusion. On the training side, a recent work showed how can we achieve consistency trajectory models \cite{kim2023consistency} where the consistency function can take you from any time $t_1$ to $t_2$ along the probability flow ODE. Another work \cite{daras2023consistent} exploits the consistency property of diffusion models to mitigate drifts in the data by modifying the de-noising score matching objective in diffusion. Furthermore, a recent work \cite{dou2024theory} also considered the analysis for consistency diffusion models from a statistical learning theory perspectives and proposed some statistical convergence rates for this based on the Wassertian distance. On similar lines, another work theoretically targeted the number of training steps required for consistency models \cite{li2024towards}.

\section{Preliminaries, Setup and Assumptions}
\label{subsec_3_notations}


We begin this section by discussion the formulation for typical SDE and ODE-based diffusion models. Next, we introduce the consistency model framework, a means to accelerate generation, and then provide the theoretical setup considered in this work. We then describe a multi-step generation algorithm under this formulation. Finally, we state the necessary assumptions for analyzing the convergence of these diffusion-based generation methods.

\paragraph{Diffusion Models.} Generative modelling via diffusion comprises of two parts. First corresponds to adding noise to the original data distrbution $p_{data}$ as a forward process which can be expressed as the following SDE:
\begin{align}
\label{general_sde_eq}
    d\vx_t = \mu(\vx,t) dt + \sigma(t)dw_t, \qquad x_0 \sim p_0 = p_{data} ,
\end{align}
where $\vx_t \in \mathbb{R}^d$, $t \in [0,T]$ where $T$ is the total time for which we run the noising forward process, $\mu, \sigma$ correspond to \textit{drift} and \textit{diffusion} coefficients and $w_t$ corresponds to the Brownian motion, $p_t= \text{law}(\vx_t)$ or the marginal distribution of the complete process at a given $t$. 
The corresponding backward probability flow ODE \cite{song2020score} would then be:
\begin{align}
\label{general_rode_eq}
    d\vx_t = \left[\mu(\vx_t,t) - \frac{1}{2}\sigma(t)^2\nabla\log p_t(\vx_t)\right]dt ,
\end{align}
where $\nabla \log p_t(\vx_t)$ is the score function.
It will have the same marginal as the SDE \cite{song2020score} and generation using it starts from $x_T \sim p_T$ in the reverse direction. Using the popular choice of OU process as the forward noising procedure for these diffusion models results in $\mu(\vx_t,t) = -\vx_t$, $\sigma(t)=\sqrt{2}$. Solving the SDE results in the following equation for the forward process:
\[
        \rvx_t = e^{-t} \rvx_{0} + \sqrt{1-e^{-2t}} z, \quad
        z \sim \mathcal{N}(0,I_d), \quad \rvx_0 \sim p_{data}
\]  
The marginal, joint, and conditional distribution w.r.t. $\rvx_t$ is denoted as
\begin{equation}
    \label{def:fwd_prob}
    \rvx_t\sim p_t,\quad (\rvx_{t^\prime}, \rvx_t)\sim p_{t^\prime, t},\quad \text{and}\quad p_{t|t^\prime}(\vx|\vx^\prime) = p_{t^\prime, t}(\vx^\prime, \vx)/p_{t^\prime}(\vx^\prime).
\end{equation}
A straightforward observation for this OU process then is that for the time period $0\le t^\prime < t\le T$, suppose $\rvx_{t^\prime}\sim p_{t^\prime}$ and 
    \[
        \rvx_t = e^{t'-t} \rvx_{t'} + \sqrt{1-e^{2(t'-t)}} z, \quad
        z \sim \mathcal{N}(0,I_d),
    \]    
where the underlying distribution of $\rvx_t$ is $p_t$.
The resultant probability flow ODE corresponding to this OU process becomes:
\begin{align}
\label{true_pf_ode}
  d \rvx_t = \left(-\rvx_t - \vs_t(\rvx_t)\right)dt ,
  \qquad
  \vs_t(\vx) = \nabla \log p_t(\vx) ,
\end{align}

\paragraph{Estimating the score function:} The true score function ($\vs_t$) is usually not available in the real world scenarios and is estimated
via \textit{denoising score matching} \cite{song2019generative}. Denoting the estimated score function as $\hat{\vs}_t(\cdot)$, it will result in the following probability flow ODE, which is also termed as \textit{empirical PF ODE} \cite{song2023consistency}:
\begin{align}
\label{empirical_pf_ode}
    d \hat{\rvx}_t = \left(-\hat{\rvx}_t - \hat{\vs}_t(\hat{\rvx}_t)\right)dt,
\end{align}
where $\hat{\rvx}_t$ can be treated as the empirical counterpart of $\rvx_t$ (and $\hat{p}_t$ as the counterpart for $p_t$, $\hat{\rvx}_t \sim \hat{p}_t$) which evolves according to estimated $\hat{\vs}_t$ as against the true score function.

\paragraph{Consistency Model.} For a given process $\{\vx_t\}_{t\in [\delta, T]}$ following the probability flow ODE (Eq.\ref{true_pf_ode}), \cite{song2023consistency} discussed the existence of a consistency function $f (x_t, t)$ as a backward mapping $f: \mathbb{R}^d \times \mathbb{R^+} \rightarrow \mathbb{R}^d$, which maps process at any time $t$ to the start of the trajectory $f(\vx_t,t) = \vx_\delta \, \forall \, t\in [\delta, T]$. Intuitively, this consistency function is associated to the velocity field $v: \mathbb{R}^d \times \mathbb{R^+} \rightarrow \mathbb{R}^d $ of the corresponding ODE: $d\vx_t = v(\vx_t,t)dt$.
The paper argued that estimating this function through the empirical PF-ODE (Eq.\ref{empirical_pf_ode}) can replace the iterative generation process in a single step and also proposed a distillation-based training scheme to achieve this.  

\noindent
We consider an alternative formulation for this consistency function which instead of always mapping to the start, can map to any arbitrary instant along the reverse ODE. To formalize this, we say, corresponding to the probability flow ODE in Eq. \ref{true_pf_ode}, there exists some consistency function $\vf\colon \R\times \R\times \R^d \rightarrow \R^d$ satisfying: 
\begin{equation*}
        \vf(t^\prime,t, \rvx_t)\sim p_{t^\prime}\quad \mathrm{when}\quad  \rvx_t\sim p_t.
\end{equation*}

Denoting the corresponding process associated with the empirical PF-ODE as $\{\hat{x}_t\}_{t\in [\delta,T]}$, similarly, we say that there exists a corresponding consistency function
, i.e.,
\begin{equation*}
    \hat{\vf}(t^\prime,t, \hat{\rvx}_t)\sim p_{t^\prime}\quad \mathrm{when}\quad \hat{\rvx}_t\sim \hat{p}_t.
\end{equation*} 
Since it might not seem obvious whether obtaining such a formulation for empirical PF-ODE is possible or not, we also provide a theoretical analysis for estimating this $\hat{f}$.

\noindent
\textbf{Notational Remark.} For the proofs provided in the appendix corresponding to the theorems mentioned in the main paper, we sometimes denote the variable corresponding to true (Eq. \ref{true_pf_ode}) and empirical PF ODE (Eq. \ref{empirical_pf_ode}) at $k^{th}$ point of a sequence of time stamps $t_k$ by $\rvx_{k}$, $\hat{\rvx}_k$ respectively as against using $\rvx_{t_k}$, $\hat{\rvx}_{t_k}$. \\

\subsection{Multi-step Sampling using consistency functions}
\label{subsec_3_multi_step_algo}

\noindent
We now consider a sequence $0< t_0 < t_1 < t_2 < \cdots < t_K$ and let $t_j' \in
[0, t_j)$. Also, from the notations defined above, the law at time $t_k$ for process $\hat{\rvx}_t$ is denoted as $\hat{p}_{t_K}$ which can also be seen as an approximation of $p_{t_K}$ (corresponding to $\rvx_{t_k}$). Similarly, to keep consistency from the notation above, we will denote the joint of distribution for $(\hat{\rvx}_{t_0}, \hat{\rvx}_{t_1}, ..., \hat{\rvx}_{t_K})$ (and correspondingly $({\rvx}_{t_0}, {\rvx}_{t_1}, ..., {\rvx}_{t_K})$) as $\hat{p}_{t_0, t_1,..., t_K}$ (${p}_{t_0, t_1,..., t_K}$ respectively). 
The multi-step sampling using this empirical PF ODE is defined in Algorithm \ref{alg:1}. It can be interpreted as first following the empirical PF ODE (eq. \ref{empirical_pf_ode}) to go from time $t_k$ to some $t'_{k-1}$ in the reverse (generation) direction and then take a step away (forward) from the generation by adding noise. Figure \ref{fig_alg} shows both these steps. This noise can act as a regularizer (smoothener) for the generation process and translates the ODE based generation from the consistency model to some intermediate between ODE and SDE based generation. This, as we will discuss below, leads to better convergence guarantees.
Along with this Algorithm \ref{alg:1}, we also define sampling when using the true values in this algorithm (Algorithm \ref{alg:2} provided in the Appendix) which leads to the true data distribution. 



\vspace{0.14in}

\noindent
We will now consider the convergence of this multi-step sampling Algorithm \ref{alg:1} in the KL divergence w.r.t. true data distribution (or equivalently Algorithm 2) in the subsection below. For clarity, we also define the notation corresponding to the step size corresponding to the sequences defined above, as $h_k = t_k-t_{k-1}$ and $h'_k = t_k-t'_{k-1}$. These can be interpreted as step sizes corresponding to travelling \textit{along the reverse trajectory} and \textit{going back along the forward} respectively. Thus, we have the following set of relations:
\begin{align}
\label{h_h'_dependency}
    h_k=t_k-t_{k-1}<h'_k = t_k-t'_{k-1}
\end{align}
since based on our definition of the sequence $t'_k$ above, we will have $t'_{k-1} < t_{k-1}$.

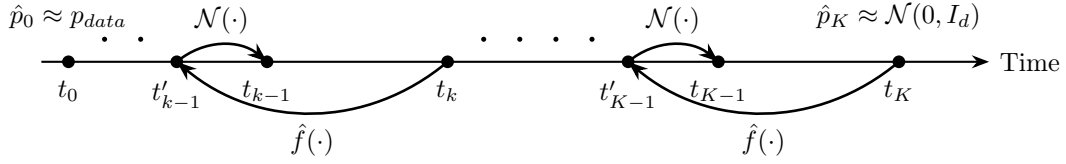
\begin{figure}
\centering
\begin{tikzpicture}[scale=1.2, >=Stealth, 
    thick, 
    every node/.style={font=\small}
  ]

  \draw[->] (-0.5,0) -- (10,0) node[right] {Time};

  \node at (-0.2,0.5) {$\hat{p}_0 \approx p_{data}$};
  \fill (-0.2,0) circle (2pt) node[below=3pt] {$t_{0}$};
  \fill (0.2,0.25) circle (0.8pt) node[below=3pt] {};
  \fill (0.6,0.25) circle (0.8pt) node[below=3pt] {};
  \fill (1,0) circle (2pt) node[below=3pt] {$t'_{k-1}$};
  \fill (2,0) circle (2pt) node[below=3pt] {$t_{k-1}$};
  \fill (4,0) circle (2pt) node[below=3pt] {$t_k$};
  \fill (4.4,0.25) circle (0.8pt) node[below=3pt] {};
  \fill (4.8,0.25) circle (0.8pt) node[below=3pt] {};
  \fill (5.2,0.25) circle (0.8pt) node[below=3pt] {};
  \fill (5.6,0.25) circle (0.8pt) node[below=3pt] {};
  \fill (6,0) circle (2pt) node[below=3pt] {$t'_{K-1}$};
  \fill (7,0) circle (2pt) node[below=3pt] {$t_{K-1}$};
  \fill (9,0) circle (2pt) node[below=3pt] {$t_K$};
  \node at (9,0.5) {$\hat{p}_K \approx \mathcal{N}(0,I_d)$};

  \draw[->, bend left=40, line width=1pt] (4,0) to node[below] {$\hat{f}(\cdot)$} (1,0); 
  \draw[->, bend left=40, line width=1pt] (1,0) to node[above] {$\mathcal{N}(\cdot)$ } (2,0);
  \draw[->, bend left=40, line width=1pt] (9,0) to node[below] {$\hat{f}(\cdot)$} (6,0); 
  \draw[->, bend left=40, line width=1pt] (6,0) to node[above] {$\mathcal{N}(\cdot )$ } (7,0);

\end{tikzpicture}
\caption{Demonstrating the step 3 ($\hat{f}$) and 5 ($\mathcal{N}$) of our algorithm w.r.t. the reverse ODE.} 
\label{fig_alg}

\end{figure}

As discussed in the introduction, we consider two analysis based on the assumptions used. In both of our analysis the following assumptiosn are common:

\begin{assumption}
\label{assumption_score_est}
    \textit{The score function estimate} $\{\hat{\vs}_t\}_{1\leq t \leq T}$ obeys for all $t$:
    \begin{equation}
    \label{eq_score_est}
     \mathbb{E}_{\rvx \sim p_{t}}\left [\|\hat{\vs}_{t}(\rvx)- \vs_{t}(\rvx)\|^2 \right ] \leq \varepsilon^2_{score}.
\end{equation}

\end{assumption}

\begin{assumption}
\label{finite_moment}
    The data distribution $p_{data}$ has finite second order moment $\mathbb{E}_{\rvx_0 \sim p_{\text{data}}} \left[ \|\rvx_0\|_2^2 \right] = m_2 < \infty$.
\end{assumption}

\noindent
 Both of these assumptions are pretty standard and have been used in all of the prior works in theoretical analysis of diffusion based generation. We now formalize our setup and discuss the multi-step sampling scheme using consistency models.

\noindent
\begin{algorithm}[H]
\caption{Multi-Step Consistency Generation}
\label{alg:1}
\begin{algorithmic}[1]
\STATE Sample $\hat{\rvx}_K \sim \hat{p}_{t_K}$
\FOR{$k = K, K-1, \ldots, 1$}
    \STATE $\hat{\rvx}'_{k-1} = \hat{f}(t'_{k-1}, t_k, \hat{\rvx}_K)$ \\
    \STATE Sample $z \sim \mathcal{N}(0, I_d)$
    \STATE $\hat{\rvx}_{k-1} = e^{t'_{k-1} - t_{k-1}} \hat{\rvx}'_{k-1} + \sqrt{1 - e^{2(t'_{k-1} - t_{k-1})}}\, z$ 
\ENDFOR
\STATE \textbf{Output} $\hat{\rvx}_0$
\STATE $\hat{p}_{t_0}$ denotes the density of $\hat{\rvx}_0$
\STATE $\hat{p}_{t_0,.., t_K}$ denotes joint density of $(\hat{\rvx}_0, \hat{\rvx}_1, \ldots, \hat{\rvx}_K)$
\end{algorithmic}
\end{algorithm}
\section{Main Results}
\label{sec_3_results}
\noindent
In this section, we provide the theoretical results which advocate for the empirical effectiveness \cite{heek2024multistep} (generating high quality samples in few steps) of the consistency model based formulation. We first provide inference time analysis utilizing the exact consistency function for the empirical PF-ODE (Eq.\ref{empirical_pf_ode}). Then, we provide a theoretical analysis on such consistency functions can be accurately estimated using the consistency distillation training scheme \cite{kim2023consistency}. 
This segregation of training and inference time is done since the usual applications of these diffusion models are majorly concerned with accurate estimation during train time and once trained, are efficient in generation (or inference). Thus, we can train them with arbitrarily small discretization for many steps but for inference only a few steps (and consequently a high discretization complexity) are required for high quality generation.
We now discuss the convergence analysis for the multi-step sampling in Algorithm \ref{alg:1}.

\subsection{Convergence in the KL divergence for multi-step sampling}
\label{subsec_3_analysis}

\noindent
We first discuss the analysis involving the smoothness of the score function, which has also been widely used in the literature \cite{lyu2024sampling, xu2024provably, chen2022sampling}. The assumption is as follows: 
\begin{assumption}
\label{smoothness_assumption}
    The approximate score function $\hat{\vs}_t$ $L-$Lipschitz with $L\geq 1$ for all $t \geq 1$.
\end{assumption}

\noindent
Notice, that it is a bit different from the previous works \cite{chen2022sampling, xu2024provably, lyu2024sampling, chen2023improved} since they assume the true score to be Lipschitz.
We provide another analysis where we relax this assumption and incur additional dependence on the dimension $d$ replacing $L$. We now provide our first main result as follows.


\begin{theorem}
\label{thm:smooth_score_case}
For  Algorithm~\ref{alg:1}, if $h'_k \leq \frac{1}{2(1+L)}$, along with assumptions \ref{assumption_score_est}, \ref{finite_moment}, \ref{smoothness_assumption}  provided, we have:
  \begin{align}
  \label{smooth_bound_complete_eq}
    KL(p_{t_0}\|\hat{p}_{t_0}) \leq (d+m_2) e^{-T} +  e^{2}{h'}_k^2 \varepsilon^2_{score} \sum_{k=1}^K\frac{1}{4(t_k-t'_k)}
\end{align}

\end{theorem}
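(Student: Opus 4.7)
My plan is to combine the data processing inequality, the Markov chain-rule decomposition of KL, and a Gronwall comparison of the true and empirical probability flow ODEs. First I would apply data processing along the time grid, $KL(p_{t_0}\|\hat p_{t_0})\le KL(p_{t_0,\dots,t_K}\|\hat p_{t_0,\dots,t_K})$, and then use the fact that both the true (Algorithm~2) and empirical (Algorithm~\ref{alg:1}) processes are Markov when read in reverse time to decompose the joint KL into an initialisation term plus a sum of expected conditional KLs,
\[
KL(p_{t_K}\|\hat p_{t_K}) + \sum_{k=1}^K \mathbb{E}_{\rvx_{t_k}\sim p_{t_k}}\bigl[KL\bigl(p_{t_{k-1}|t_k}(\cdot|\rvx_{t_k})\,\|\,\hat p_{t_{k-1}|t_k}(\cdot|\rvx_{t_k})\bigr)\bigr].
\]
The initialisation term is controlled by exponential ergodicity of the OU forward process: taking $\hat p_{t_K}$ to be the standard Gaussian and invoking Assumption~\ref{finite_moment} produces the $(d+m_2)e^{-T}$ summand of \eqref{smooth_bound_complete_eq}.

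For each transition, conditional on $\rvx_{t_k}$ Algorithm~\ref{alg:1} first applies the deterministic consistency map and then an exact forward OU step from $t'_{k-1}$ to $t_{k-1}$, so both conditional laws are Gaussian with common covariance $\sigma_k^2 I_d$ and means $a_k f(t'_{k-1},t_k,\rvx_{t_k})$ and $a_k\hat f(t'_{k-1},t_k,\rvx_{t_k})$, where $a_k=e^{t'_{k-1}-t_{k-1}}$ and $\sigma_k^2=1-e^{2(t'_{k-1}-t_{k-1})}$. The closed-form Gaussian KL then collapses to
\[
\frac{a_k^2}{2\sigma_k^2}\,\bigl\|f(t'_{k-1},t_k,\rvx_{t_k})-\hat f(t'_{k-1},t_k,\rvx_{t_k})\bigr\|^2,
\]
and the elementary inequality $1-e^{-2y}\ge y$ for $y\le 1/2$ (guaranteed by the step-size hypothesis) bounds the prefactor by $1/(4(t_{k-1}-t'_{k-1}))$, matching the $1/(4(t_k-t'_k))$ form of the theorem after the natural re-indexing.

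The main technical work is then bounding $\mathbb{E}_{\rvx_{t_k}\sim p_{t_k}}\|f-\hat f\|^2$. Both maps are flows of ODEs started from the same $\rvx_{t_k}$ at time $t_k$ and driven by $-\rvx - \vs_s(\rvx)$ and $-\rvx - \hat\vs_s(\rvx)$ respectively; writing $u_s=\|\rvx_s-\hat\rvx_s\|$, subtracting and using Assumption~\ref{smoothness_assumption} to bound $\|\hat\vs_s(\rvx_s)-\hat\vs_s(\hat\rvx_s)\|\le L\,u_s$ yields the integral inequality $u_s\le (1+L)\int_s^{t_k} u_\tau\,d\tau + \int_s^{t_k}\|\vs_\tau(\rvx_\tau)-\hat\vs_\tau(\rvx_\tau)\|\,d\tau$, and Gronwall gives
\[
u_s\le e^{(1+L)(t_k-s)}\int_s^{t_k}\|\vs_\tau(\rvx_\tau)-\hat\vs_\tau(\rvx_\tau)\|\,d\tau.
\]
Applying Cauchy--Schwarz at $s=t'_{k-1}$ and taking expectation over $\rvx_{t_k}\sim p_{t_k}$, I invoke the crucial fact that the true PF-ODE preserves OU marginals (so $\rvx_\tau\sim p_\tau$ all along the trajectory) together with Assumption~\ref{assumption_score_est} to conclude
\[
\mathbb{E}\bigl\|f(t'_{k-1},t_k,\rvx_{t_k})-\hat f(t'_{k-1},t_k,\rvx_{t_k})\bigr\|^2\le e^{2(1+L)h'_k}(h'_k)^2\varepsilon^2_{score}.
\]
The hypothesis $h'_k\le 1/(2(1+L))$ caps the exponential prefactor by an absolute constant, and multiplying with the Gaussian-KL prefactor and summing over $k$ yields the second summand in \eqref{smooth_bound_complete_eq}.

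The main obstacle is the Gronwall step: the linear $-\rvx$ drift has to be absorbed into the Lipschitz bound so that the effective constant enters as $1+L$ (additively) rather than multiplicatively with $L$, and the marginal-preservation property of the true PF-ODE must then be invoked at exactly the right moment to convert the pathwise integral $\int\|\vs_\tau-\hat\vs_\tau\|^2$ into the score-estimation bound $\varepsilon^2_{score}$. Everything else is a careful accounting of constants matching the step-size hypothesis.
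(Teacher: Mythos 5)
Your proposal takes essentially the same route as the paper's proof: data processing inequality on the time grid, chain-rule decomposition of the joint KL into an initialisation term plus expected conditional Gaussian KLs, the closed-form Gaussian KL to reduce each transition to $\mathbb{E}\|f-\hat f\|^2$, a Gronwall comparison of the two probability-flow ODEs combined with marginal preservation to invoke Assumption~\ref{assumption_score_est}, and OU ergodicity for the initialisation term. Two small technical points are worth flagging.

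First, you run Gronwall on the norm $u_s=\|\Delta_s\|$ and apply Cauchy--Schwarz afterwards, whereas the paper differentiates $\|\Delta_s\|^2$ and uses Young's inequality to decouple the cross term; this injects an extra additive $+1$ into the paper's exponent ($e^{(2+2L)h'_k+1}\le e^2$), while yours caps at $e^{2(1+L)h'_k}\le e$. Your variant is cleaner and tighter.

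Second, your prefactor inequality is not quite what you need as stated: $1-e^{-2y}\ge y$ only yields $\tfrac{e^{-2y}}{2(1-e^{-2y})}\le \tfrac{1}{2y}$, not $\tfrac{1}{4y}$. The paper gets $\tfrac{1}{4y}$ from $\tfrac{e^{-2y}}{2(1-e^{-2y})}=\tfrac{1}{2(e^{2y}-1)}$ together with $e^{2y}-1\ge 2y$. This slip is harmless here because the tighter Gronwall constant $e$ (versus $e^2$) more than compensates ($e/2<e^2/4$), so your combined per-step bound is still below the theorem's claim; but if you want to land exactly on the stated $e^2 h'^2_k\varepsilon_{score}^2\sum_k \tfrac{1}{4(t_k-t'_k)}$, the prefactor argument should use $e^{2y}-1\ge 2y$.
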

\textit{Proof Sketch.}  Please refer Appendix~\ref{appendix:proof_of_thm1} for the complete proof. Here, we discuss a higher level sketch. The proof involves considering the two sources of error: a) Initialization error due to starting the reverse process from a normal distribution (lemma \ref{convergence_of_ou}) and b) the error incurred by using empirical PF ODE (eq. \ref{empirical_pf_ode}) instead of the true PF ODE (eq. \ref{true_pf_ode}, which will depend on $\varepsilon_{score}$). Also, for intuition, the tight control of the KL is due to adding the noise and then re-applying the consistency function $\hat{f}$ in the Algorithm \ref{alg:1} (steps 3 and 5) of the empirical PF-ODE (Eq.\ref{empirical_pf_ode}). 
    

\noindent
Since we know that the total time of the forward process $T$ would be $\sum^K_{k=1}h_k$, we can conclude the following from this theorem:
\begin{corollary}
 Under Assumptions \ref{assumption_score_est}, \ref{finite_moment}, \ref{smoothness_assumption},   Algorithm \ref{alg:1} achieves the KL divergence error $O(\varepsilon^2)$, if we run it for a total time $T = \log(\frac{d+m_2}{\varepsilon})$
    with the constant step size say $h_k = \frac{1}{3(L+1)}$ and $h'_k = \frac{1}{2(L+1)}$ (which leads to $t_k-t'_k = h'_{k+1}-h_{k+1} = \frac{1}{6(L+1)}$), thereby inducing an iteration/discretization complexity $K=\frac{T}{h_k} = 3(L+1)\log(\frac{d+m_2}{\varepsilon})$ given that the score estimation error from denoising score matching is $\varepsilon_{score} = O\left(\frac{\varepsilon}{\sqrt{log(\frac{d+m_2}{\varepsilon})}}\right)$. 
    
\end{corollary}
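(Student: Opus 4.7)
The plan is to obtain the corollary as a direct specialization of Theorem~\ref{thm:smooth_score_case}. I would fix the constant step sizes $h_k = \tfrac{1}{3(L+1)}$ and $h'_k = \tfrac{1}{2(L+1)}$, verify the hypothesis $h'_k \le \tfrac{1}{2(1+L)}$ of Theorem~\ref{thm:smooth_score_case} (which holds with equality so the theorem applies), and then evaluate each of the two summands in the bound~\eqref{smooth_bound_complete_eq} separately.

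For the initialization term $(d+m_2)e^{-T}$, taking $T = \Theta\bigl(\log((d+m_2)/\varepsilon)\bigr)$ is enough to drive the contribution down to the target $O(\varepsilon^2)$ level, and the induced iteration count becomes $K = T/h_k = 3(L+1)\log((d+m_2)/\varepsilon)$ as stated. For the score-estimation term, the arithmetic identity $t_k - t'_k = h'_{k+1} - h_{k+1} = \tfrac{1}{2(L+1)} - \tfrac{1}{3(L+1)} = \tfrac{1}{6(L+1)}$ (coming directly from~\eqref{h_h'_dependency}) gives $(h'_k)^2 \cdot \tfrac{1}{4(t_k - t'_k)} = \tfrac{3}{8(L+1)}$, so summing over the $K$ indices yields
\begin{align*}
e^{2}(h'_k)^2 \varepsilon^2_{score} \sum_{k=1}^K\frac{1}{4(t_k-t'_k)} \;=\; \tfrac{9 e^2}{8}\,\log\!\Big(\tfrac{d+m_2}{\varepsilon}\Big)\, \varepsilon^2_{score}.
\end{align*}
Plugging in the prescribed $\varepsilon_{score} = O\bigl(\varepsilon/\sqrt{\log((d+m_2)/\varepsilon)}\bigr)$ makes this $O(\varepsilon^2)$, and combining the two contributions yields the claim.

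No step presents a real obstacle: the argument is essentially a plug-in calculation once Theorem~\ref{thm:smooth_score_case} is in hand. The only bookkeeping to track is that the constant step sizes must simultaneously respect the theorem's hypothesis $h'_k \le \tfrac{1}{2(1+L)}$, produce the claimed gap $t_k - t'_k$ through $h'_{k+1} - h_{k+1}$, and balance the initialization and score-approximation terms to the same $O(\varepsilon^2)$ order. This balance is exactly what dictates the $\Theta(\log)$ scaling of $K$ and the prescribed order of $\varepsilon_{score}$, so once the two terms are computed as above the corollary follows immediately.
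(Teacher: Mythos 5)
Your route---specialize Theorem~\ref{thm:smooth_score_case} to the stated constant step sizes and control the two terms of~\eqref{smooth_bound_complete_eq} separately---is the same as the paper's. Your arithmetic on the score-error contribution is correct and is in fact more faithful to the corollary's quoted constants than the appendix's own calculation, which substitutes $t_k - t'_k = 1/K$ (a choice incompatible with constant $h_k$, $h'_k$ as given in the corollary): with $h'_k = \tfrac{1}{2(L+1)}$ and $t_k - t'_k = \tfrac{1}{6(L+1)}$, each summand is $\tfrac{3}{8(L+1)}$, the full sum is $\tfrac{9}{8}\log\bigl(\tfrac{d+m_2}{\varepsilon}\bigr)$, and substituting $\varepsilon_{score} = O\bigl(\varepsilon/\sqrt{\log((d+m_2)/\varepsilon)}\bigr)$ yields $O(\varepsilon^2)$.

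The one place you skate over a genuine gap is the initialization term. Taking $T = \log\bigl((d+m_2)/\varepsilon\bigr)$ literally gives $(d+m_2)e^{-T} = \varepsilon$, which is $O(\varepsilon)$, dominates the $O(\varepsilon^2)$ score-error term, and so the combined bound is only $O(\varepsilon)$ with these exact numbers. Your phrase ``$T = \Theta(\log((d+m_2)/\varepsilon))$'' tacitly allows the implied constant to fix this, but if you set $T = 2\log((d+m_2)/\varepsilon)$ so that $(d+m_2)e^{-T} \le \varepsilon^2/(d+m_2)$, then $K = T/h_k = 6(L+1)\log((d+m_2)/\varepsilon)$, not the stated $3(L+1)\log((d+m_2)/\varepsilon)$. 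This factor-of-two mismatch is baked into the corollary as printed (and the appendix proof also elides it, using yet another set of parameters), so it is a small point, but a careful write-up should either double $T$ and $K$, or report the error as $O(\varepsilon)$ rather than $O(\varepsilon^2)$; an honest verification cannot have both.
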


\noindent
Therefore, we can have the step size (and correspondingly the number of iterations) independent (logarithmically dependent) of $\varepsilon$ to achieve $O(\varepsilon^2)$ accuracy in the KL divergence which is better then any of the existing results $\left(O(\frac{1}{\epsilon})\right)$ for DDPM \cite{ho2020denoising}/DDIM \cite{song2020denoising} samplers which need step size to be atleast $O(\varepsilon)$ for the SDE based generation \cite{li2024d} (thereby inducing an iteration complexity of $O(\frac{1}{\varepsilon})$) and also require much stricter assumptions for ODE based generation (like the error between Jacobi of true and estimated score is small) \cite{li2024sharp,li2024unified}.
This shows the effectiveness of using the consistency model based formulation for generation tasks with constant step sizes. 
Also, we can see that this would not be possible to achieve without multi-step sampling, which requires adding noise at each step into the generated samples. This acts as a regularizer and helps to prevent error accumulation. Intuitively, this is similar to what a stochastic differential equation (SDE) achieves in the score-based formulation. Therefore, it is reasonable to conclude that the theoretical advantages of using consistency models become effective when employing the multi-step iterative sampling approach, while requiring far fewer steps compared to standard diffusion-based generation.

\noindent
We now consider relaxing the assumption 3.3 and provide the convergence in KL for the multi-step sampling in the next subsection.

\subsection{The Non-Smooth Case}
\label{non_smooth_subsec}
\noindent
Taking inspiration from recent works \cite{chen2023improved,benton2024nearly} on relaxing the smoothness assumption of the true score function, we first define the noise schedule/conditional variance for $\rvx_t$ given $\rvx_0$ for the forward OU process as $\sigma_t = 1-e^{-2t}$ and arrive at the following result for the multi-step sampling (Algorithm \ref{alg:1}) in absence of any smoothness. 
\begin{theorem}
\label{thm:non_smooth_3.3}
   For Algorithm \ref{alg:1}, using only assumptions \ref{assumption_score_est} and \ref{finite_moment}, if we have $ {h'}_k < \frac{\sigma^2_{t'_{k-1}}}{d}$ and the number of iterations $K=\frac{d}{\sigma^2_{t'_{k-1}}}\log\left(\frac{(d+m_2)}{\varepsilon_{score}}\right)$, then:
  \begin{align}
KL(p_{t_\delta}\|\hat{p}_{t_\delta}) \leq (d+m_2) e^{-T} +  e^4{h'}_k^2\varepsilon^2_{score} \sum_{k=1}^K\frac{1}{4{(t_k-{t'}_k)}}
  \end{align}
where $t'_0 = \delta > 0$.
\end{theorem}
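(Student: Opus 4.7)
The plan is to mirror the decomposition used in Theorem~\ref{thm:smooth_score_case} while replacing every appearance of the Lipschitz constant $L$ by the Gaussian-smoothing bound $d/\sigma_t^2$ that is available for any $p_t$ arising as a convolution of $p_{data}$ with $\mathcal{N}(0,\sigma_t^2 I_d)$. I would first split $KL(p_{t_\delta}\|\hat{p}_{t_\delta})$ into two pieces: the initialization error $KL(p_{t_K}\|\hat{p}_{t_K})$ incurred by starting the reverse chain from $\mathcal{N}(0,I_d)$ instead of $p_{t_K}$, which decays as $(d+m_2)e^{-T}$ via the standard OU contraction lemma (Lemma~\ref{convergence_of_ou}); and the accumulated per-step discrepancy along the $K$ reverse steps produced by replacing $\vs_t$ with $\hat{\vs}_t$. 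The chain rule for KL combined with the data processing inequality applied after the noise-injection step~5 of Algorithm~\ref{alg:1} will reduce the second piece to a sum of one-step KL divergences between the true and empirical ODE-plus-noise updates from $t_k$ to $t_{k-1}$.

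For each such one-step term I would set up a Girsanov-style comparison between the true empirical PF-ODE (eq.~\ref{empirical_pf_ode}) pushed through $\hat{f}(t'_{k-1},t_k,\cdot)$ and then augmented by the Gaussian of variance $1-e^{2(t'_{k-1}-t_{k-1})}\approx 2(t_{k-1}-t'_{k-1})$, versus the corresponding true construction with $\vs_t$. The noise variance in the denominator of the resulting Radon--Nikodym computation is precisely what lets me avoid a Lipschitz bound on $\hat{\vs}_t$: the numerator can be controlled by $\mathbb{E}_{\rvx\sim p_t}\|\hat{\vs}_t(\rvx)-\vs_t(\rvx)\|^2 \leq \varepsilon_{score}^2$ from Assumption~\ref{assumption_score_est}, together with the a priori bound $\mathbb{E}_{\rvx\sim p_t}\|\nabla\log p_t(\rvx)\|^2\lesssim d/\sigma_t^2$ used in \cite{chen2023improved,benton2024nearly}. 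Propagating this over the interval of length $h'_k$ and dividing by the noise variance $\sim (t_k-t'_k)$ gives the announced per-step contribution ${h'_k}^2\varepsilon_{score}^2/(4(t_k-t'_k))$, provided $h'_k$ is small enough that the linearisation of the empirical ODE along the step is valid; the step-size condition $h'_k<\sigma_{t'_{k-1}}^2/d$ is exactly what makes this linearisation legitimate in the absence of Lipschitzness, in the same way that $h'_k<1/(2(1+L))$ was used in the smooth case.

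The main obstacle will be handling the consistency map $\hat{f}(t'_{k-1},t_k,\cdot)$ without any a priori regularity of its Jacobian. In the smooth case the Lipschitz bound on $\hat{\vs}_t$ yields control of the distortion of $\hat{f}$ between $t_k$ and $t'_{k-1}$ almost for free, whereas here I must argue directly on the path space: by comparing laws of the coupled process $(\hat{\rvx}_{t_k},\hat{\rvx}_{t_{k-1}})$ and $(\rvx_{t_k},\rvx_{t_{k-1}})$ and using that the extra Gaussian convolution at the end of the step is equivalent to integrating the probability flow ODE against a Brownian bridge, the difference of drifts reduces again to an integrated score error averaged against a measure dominated by $p_t$, so Assumption~\ref{assumption_score_est} can still be applied. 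Summing the per-step bounds gives the claimed $\sum_k {h'_k}^2\varepsilon_{score}^2/(4(t_k-t'_k))$, and matching the initialization error $(d+m_2)e^{-T}$ to $\varepsilon_{score}^2$ forces $T=\log((d+m_2)/\varepsilon_{score})$; dividing by the constant step $h'_k=\sigma_{t'_{k-1}}^2/d$ yields the stated $K=(d/\sigma_{t'_{k-1}}^2)\log((d+m_2)/\varepsilon_{score})$.
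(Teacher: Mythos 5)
Your high-level decomposition matches the paper exactly: split $KL(p_{t_\delta}\|\hat p_{t_\delta})$ into the OU-initialization term $(d+m_2)e^{-T}$ (Lemma~\ref{convergence_of_ou}) plus a sum of per-step conditional KLs obtained via data processing and the chain rule (Lemma~\ref{lemma_second_KL_data_processing_ineq}), where each one-step KL is between two Gaussians with identical covariance and means $e^{t'_{k-1}-t_{k-1}}f(\cdot)$ versus $e^{t'_{k-1}-t_{k-1}}\hat f(\cdot)$ (Lemma~\ref{lemma_first_kl_bw_conditionals}). That reduces the problem, as you say, to bounding $\mathbb{E}\|f(t'_{k-1},t_k,\rvx_k)-\hat f(t'_{k-1},t_k,\rvx_k)\|^2$.

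The gap is in how you propose to bound that quantity. You invoke the second-moment bound $\mathbb{E}_{p_t}\|\nabla\log p_t\|^2\lesssim d/\sigma_t^2$ and a ``Girsanov-style'' comparison, but neither actually applies here. The deterministic ODE leg from $t_k$ to $t'_{k-1}$ accumulates an error $\Delta_t=\rvx_t-\hat\rvx_t$ whose evolution involves $\vs_t(\rvx_t)-\hat\vs_t(\hat\rvx_t)$, and the dangerous piece is the cross term $\vs_t(\rvx_t)-\vs_t(\hat\rvx_t)$ --- the \emph{same} score evaluated at \emph{two different points}. A bound on $\mathbb{E}\|\vs_t\|^2$ says nothing about this difference; you need control of the \emph{Hessian} $\nabla^2\log p_t$. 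The paper's Lemma~\ref{adapted_from_improved} (from \cite{chen2023improved}) supplies precisely this: the sub-exponential norm $\|\nabla^2\log p_t\|_{F,\psi_1}\leq d/\sigma_t^2$, not a score-norm bound. Lemma~\ref{expected_smooth} then converts it, via a change-of-measure argument exploiting the Gaussian structure of $p_{t|0}$, into $\mathbb{E}\|\vs_t(\rvx_t)-\vs_t(\hat\rvx_t)\|^2\leq\frac{d^2}{\sigma_t^4}\mathbb{E}\bigl[\|\Delta_t\|^2\exp(\|\Delta_t\|^2/(2\sigma_t^2))\bigr]$, which is what feeds the Gronwall step in Lemma~\ref{main_lemma_non_smooth}. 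Your proposed citation is to the wrong quantity, and the error is not cosmetic: without the Hessian control the Gronwall bound on $\mathbb{E}\|\Delta_{t'_{k-1}}\|^2$ simply does not close.

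The Girsanov/Brownian-bridge idea does not rescue this. Step~3 of Algorithm~\ref{alg:1} is a deterministic flow; the Gaussian noise is only injected afterwards at step~5. You cannot redistribute that terminal Gaussian back along the ODE interval as a diffusion coefficient without changing the marginals (the pushforward of a Gaussian under a nonlinear deterministic flow is not Gaussian, and the ODE marginal at $t'_{k-1}$ is not the solution of an SDE on $[t'_{k-1},t_k]$ with constant diffusion). So the Radon--Nikodym comparison you sketch between the two ``ODE$+$noise'' laws is not a Girsanov computation on $[t'_{k-1},t_k]$; it is just the Gaussian KL of Lemma~\ref{lemma_first_kl_bw_conditionals}, whose numerator $\|f-\hat f\|^2$ still has to be bounded by the Gronwall-plus-Hessian argument above. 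The step-size condition $h'_k<\sigma^2_{t'_{k-1}}/d$ enters there (to kill the $\exp(2d^2 h'^2_k/\sigma^4)$ factor from Gronwall), not as a ``linearisation validity'' heuristic. Correcting the cited lemma and replacing the pathwise Girsanov sketch with the explicit Gronwall-on-$\|\Delta_t\|^2$ argument would close the gap.
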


\begin{remark}
The proof can be found in the Appendix \ref{appendix_non_smooth}. Again similar to the idea of Theorem \ref{thm:smooth_score_case} proof, we have to consider both initialization error and the error due to empirical ODE (eq. \ref{empirical_pf_ode}). However, since the score function hasn't been provided as smooth here, bounding the error due to the empirical ODE will be a bit more tricky here. Taking inspiration from the previous works, we first try to bound the operator norm of the score function along the true trajectory since, using the fact that the forward process is just a convolution with gaussian distribution and the perturbation can be bounded. 
\end{remark}

\noindent
It is easy to observe that absence of smoothness (the constant $L$) induces a factor of $d$ but the remaining result is similar to the previous theorem and thus, we can again have a similar conclusion as follows.

\begin{corollary}
Under Assumptions \ref{assumption_score_est}, \ref{finite_moment}, Algorithm \ref{alg:1} achieves the KL divergence error $O(\epsilon^2)$, if we run it for a total time $T= \log(\frac{d+m_2}{\epsilon})$
    with the constant step size, say, $h_k = \frac{1-e^{-\delta}}{2d}$ and $h'_k = \frac{1-e^{-\delta}}{d}$, thereby inducing an iteration/discretization complexity $K = \frac{T}{h_k} = \frac{2d}{1-e^{-\delta}}\log(\frac{d+m_2}{\epsilon})$ given that the score estimation error from denoising score matching is $\varepsilon_{score} = O\left(\frac{\epsilon}{\sqrt{\log(\frac{d+m_2}{\epsilon})}}\right)$, better then any of the existing results \cite{benton2023nearly} $\left(O\left(\frac{d}{\epsilon^2}\right)\right)$ for DDPM \cite{ho2020denoising} sampler when the smoothness of the score function is not assumed.
    
\end{corollary}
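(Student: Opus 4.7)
The corollary is a direct quantitative instantiation of Theorem~\ref{thm:non_smooth_3.3}: the plan is to plug the prescribed schedule into the two-term bound and verify that each piece is $O(\varepsilon^2)$, while also checking that the step-size precondition $h'_k < \sigma^2_{t'_{k-1}}/d$ of the theorem is actually respected by the chosen constants. Since the statement involves a constant step size, all the relevant quantities collapse to closed-form expressions, so no new analytical tool should be needed beyond the theorem itself.

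First I would verify the precondition. With the chosen $h'_k = (1-e^{-\delta})/d$ and with the convention $t'_0 = \delta$, the sequence $\{t'_{k-1}\}$ is increasing in $k$, so $\sigma^2_{t'_{k-1}} = 1 - e^{-2t'_{k-1}} \geq 1 - e^{-2\delta}$ for every $k$. The required inequality $h'_k < \sigma^2_{t'_{k-1}}/d$ then reduces to $1 - e^{-\delta} < 1 - e^{-2\delta}$, which is immediate. With this in hand, Theorem~\ref{thm:non_smooth_3.3} applies at every step.

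Next I would bound the two summands of the KL bound separately. For the initialization term, setting $T = \log((d+m_2)/\varepsilon)$ (read up to a harmless factor of $2$, exactly as in the smooth corollary) makes $(d+m_2)e^{-T} = O(\varepsilon^2)$. For the discretization term, I would use the definitions $h_k = t_k - t_{k-1}$, $h'_k = t_k - t'_{k-1}$ together with \eqref{h_h'_dependency} to compute $t_k - t'_k = h'_{k+1} - h_{k+1} = (1-e^{-\delta})/(2d)$, so that each summand equals $\frac{d}{2(1-e^{-\delta})}$. Multiplying by $K = \frac{2d}{1-e^{-\delta}}\log((d+m_2)/\varepsilon)$, then by $(h'_k)^2 = (1-e^{-\delta})^2/d^2$, telescopes cleanly to $\log((d+m_2)/\varepsilon)$, which is exactly absorbed by the prescribed $\varepsilon^2_{\mathrm{score}} = O(\varepsilon^2/\log((d+m_2)/\varepsilon))$ to give $O(\varepsilon^2)$. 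Finally, $K = T/h_k = \frac{2d}{1-e^{-\delta}}\log((d+m_2)/\varepsilon)$ follows by the very definition of $h_k$.

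There is no real obstacle in this proof, just bookkeeping; the substantive work is entirely inside Theorem~\ref{thm:non_smooth_3.3}. The one subtlety worth highlighting in the write-up is the origin of the linear dimension factor: in the non-smooth regime the admissible step size is forced down to $O(\sigma^2_{t'}/d)$ rather than the $O(1/(L+1))$ of the smooth corollary, and this is the only place where $d$ enters $K$. Comparing the resulting $O(d \log(1/\varepsilon))$ iteration count with the $O(d/\varepsilon^2)$ rate for DDPM samplers of~\cite{benton2023nearly} under the same minimal assumptions then immediately delivers the advertised improvement.
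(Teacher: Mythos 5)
Your proposal is correct and proceeds exactly as the paper does (the end of the appendix proof of Theorem~\ref{thm:non_smooth_3.3} carries out the same plug-and-chug with $T$, $h_k$, $h'_k$, $K$): verify the step-size precondition $h'_k<\sigma^2_{t'_{k-1}}/d$, then substitute the constant schedule into the two-term bound. Your observation that the precondition at its tightest ($k=1$, $t'_0=\delta$) reduces to $1-e^{-\delta}<1-e^{-2\delta}$ is the right check, and the arithmetic $t_k-t'_k = h'_{k+1}-h_{k+1}=(1-e^{-\delta})/(2d)$, each summand $=\frac{d}{2(1-e^{-\delta})}$, followed by $K\cdot(h'_k)^2\cdot\frac{d}{2(1-e^{-\delta})}=\Theta\bigl(\log\frac{d+m_2}{\varepsilon}\bigr)$, matches the paper's computation and is cleanly absorbed by the prescribed $\varepsilon^2_{\mathrm{score}}=O\bigl(\varepsilon^2/\log\frac{d+m_2}{\varepsilon}\bigr)$. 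You are also right to flag the initialization term: as literally written, $T=\log\frac{d+m_2}{\varepsilon}$ gives $(d+m_2)e^{-T}=\varepsilon$, not $\varepsilon^2$; the paper's appendix actually takes $T=\log\frac{d+m_2}{\varepsilon^2_{\mathrm{score}}}$, so reading the corollary's $T$ up to a factor of $2$ (equivalently, $T=\log\frac{d+m_2}{\varepsilon^2}$) is the correct repair. No gap.
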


\noindent
Both this theorem and the previous theorem \ref{thm:smooth_score_case} suffer from the limitation of $h'_k$ being bounded. This limits their applicability to the original consistency model formulation \cite{song2023consistency} which always takes as the end of the reverse flow ODE (or the data distribution)and thus for that the sequence $t'_k$ should be set to 0. This issue has further been discussed in the appendix after the proof of Lemma \ref{main_lemma_smooth}.\\
Seeing the proof of both these theorems, it can be observed that the analysis is almost tight and thus, to resolve this limitation, exploring other metrics like TV distance might be an interesting direction. 
 We now discuss learning the consistency function formulation $\hat{f}$ corresponding to the empirical ODE using the distillation technique proposed in the consistency model paper \cite{song2023consistency}.



\subsection{Consistency Distillation Training to estimate $\hat{f}$}
\label{learning_subsec}

\noindent
In the previous subsection, we discussed how we can exploit the given formulation of the consistency function \textit{i.e.} $\hat{f}(t',t,\rvx_t)$ corresponding to the empirical PF ODE to achieve state of the art convergence results when doing iterative multi-step sampling. However, it is still not clear whether such a consistency function correponding to empirical ODE (eq. \ref{empirical_pf_ode}) can be learned efficiently or not. In this section, we discuss this learning of such consistency function. \\

\noindent
The original consistency model paper proposes two schemes to learn any consistency function: distillation based training and the self-consistency training. Here, we will consider the first case. It involves using an ODE solver $\Phi$ and distilling its knowledge into the consistency model. Let us denote the parameterized approximation of $\hat{f}$ as $\hat{f}_\theta$. The distillation based training involves considering the true process at $t_{k+1}$: $\rvx_{t_{k+1}} = e^{-t_{k+1}}\rvx_0 + \sqrt{1-e^{-2t_{k+1}}}\epsilon,$ $\epsilon\sim \mathcal{N}(0,I_d)$ and taking one step back to get $\hat{\vx}^\phi_{t_k}$ using a pretrained diffusion model as ODE solver $\phi$ and the empirical PF ODE, denoting this overall one step update as $\Phi(\cdot)$ : 
\begin{align}
\label{update_x}
\hat{\vx}^\phi_{t_k} = \Phi(\rvx_{t_{k+1}},t_{k+1}, t_k) = \rvx_{t_{k+1}} - ({t_{k+1}-t_k}) \hat{s}_{t_{k+1}}(\rvx_{t_{k+1}})
\end{align}
The objective $\mathcal{L}_{CD}$ then is to feed both of these to $\hat{f}_\theta$ and minimize the euclidean distance between the resulting outputs:
\begin{align}
\label{cd_loss}
\mathcal{L}_{\text{CD}}(\theta, \theta^{-}; \Phi) 
:= \mathbb{E}\left[ \lambda(t_n) \left\| \hat{f}_{\theta}(t_0, t_{n+1}, \rvx_{t_{n+1}}) - \hat{f}_{\theta^{-}}(t_0, t_{n}, \hat{\vx}^{\Phi}_{t_n}) \right\|_2^2 \right]
\end{align}
where $\theta^-$is just the running averages of the parameters, done for a stable training and also for faster convergence and $\lambda(\cdot) \in \mathbb{R}^+$ is just a positive weighing function \cite{song2023consistency}. Now, to analyze the difference between true consistency function and the learned consistency function $\hat{f}_\theta$ via the above objective, we first state some assumptions on the training as well as on the parametrized function $\hat{f}_\theta$ itself. These are again standard in literature and have been used the all recent works involving consistency model analysis \cite{kim2023consistency, song2023consistency, lyu2024sampling}.
\\

\noindent
\begin{assumption}
\label{cd_assumption}
    We have the following assumption \cite{song2023consistency, lyu2024sampling} on consistency distillation error for the approximator of $\hat{f}$ \textit{i.e.} $\hat{f}_\theta$:
\begin{equation}
    \mathbb{E}_{\rvx_{t_{k+1}} \sim p_{t_{k+1}}}\left[\|\hat{f}_\theta(t'_{k-1}, t_{k+1}, \rvx_{t_{k+1}}) - \hat{f}_\theta(t'_{k-1}, t_k, \hat{\vx}^\phi_{t_k})\|_2^2\right] 
    \leq \varepsilon^2_\text{cd}(t_{k+1} - t_k)^2, \, \forall \, k \in [1, K - 1],
\end{equation}
\end{assumption}

\begin{assumption}
\label{smooth_f}
    \textit{$\hat{f}_\theta$ is $L_f-$lipschitz} \cite{kim2023consistency, song2023consistency, lyu2024sampling}. \\
\end{assumption}

\noindent
\textbf{Verifying Assumption \ref{smooth_f}.}  Since we have assumed that $\hat{\vs}_t$ is smooth in one of the analysis above, it is straightforward to verify that $\hat{f}(t',t,\cdot)$ would satisfy the following (for intution consider error accumulated in naive euler discretization):
\[
\hat{f}(t', t, x) = 1 + (t - t') + {O}((t - t')^2) \cdot x + \left((t - t') + {O}((t - t')^2)\right) \cdot s_t(x)
\]
Abstracting out the higher order $(t-t')$ terms since it is small, we will have:
\begin{align}
\label{approx_smooth_hat_f}
\|\hat{f}(t',t,x)-\hat{f}(t',t,y)\|_2
&\leq \|\left(1+ O(t-t')\right) (x-y)
+ O(t-t') (\hat{\vs}_t(x)-\hat{\vs}_t(y)) \|_2 \notag \\
&\leq \left(1+ O(t-t')\right) \|(x-y) \|_2
+ O(t-t') \|(\hat{\vs}_t(x)-\hat{\vs}_t(y)) \|_2 \notag\\
&\leq
\left(1+(1+L)\cdot O(t-t')\right) \|x-y\|_2  
\end{align}
where $L$ is the Lipschitz of $\hat{\vs}_t$ (Assumption \ref{smoothness_assumption}). For a tight upper bound, we can have:
\[
\|\hat{f}(t',t,x)-\hat{f}(t',t,y)\| \leq 
e^{(1+L)(t-t')} \|x-y\|  
\]
Thus, assuming that $\hat{f}$ would be lipschitz smooth is a reasonable assumption and $L_f$ would be approximately same as $(1+(1+L)(t-t'))$.
Also, we can now directly extend this finding to $\hat{f}_\theta(t',t,x)$ since it should just incur some additional error related to $\varepsilon_{cd}$ which would be of other order $O((t-t')\varepsilon_{cd})$. This will lead to the following:
\[
\| \hat{f}_{\theta}(t'_{n-1}, t_{n}, \vx_{t_n}) - \hat{f}_{\theta}(t'_{n-1}, t_{n}, \vy_{t_n}) \|_2 \leq (t_n - t'_{n-1})(\varepsilon_{\text{cd}}) + L_f \| \vx_{t_n} - \vy_{t_n} \|_2.
\]
Therefore, we can assume that $\hat{f}_\theta(t',t,\cdot)$ would be $L_f$-lipschitz.
\noindent
We now provide the following theorem regarding the difference between the estimated consistency function using the distillation based training and true consistency function corresponding to the PF-ODE using the above assumptions.


\begin{theorem}
\label{thm:gap_f_hat_f_theta}
\textbf{(Bounding error between $\hat{f}$ and estimated $\hat{f}_\theta$)}. Following the definition of $\hat{f}$ for some discretization $\{t_n\}_{n\in [1,N]}$ for the consistency distillation training, 
under assumption 3.1-3.5, we have:
\begin{align*}
\mathbb{E} \|\hat{f}_\theta(t'_{n-1}, t_{n}, \rvx_n) - \hat{f}(t'_{n-1}, t_{n}, \rvx_n)\|^2_2 \leq  L_fe^{{h_{n-1}}/{2}}(L^{{3}/{2}}d^{{1}/{2}}h_{n-1}) + \varepsilon_{cd}(t_n-t_1).
\end{align*}
\end{theorem}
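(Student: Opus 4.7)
The plan is an induction on the distillation index $n$ that unwinds $\|\hat f_\theta-\hat f\|$ one consistency-distillation step at a time, from $t_n$ back down to $t_1$. The structural fact that drives everything is that the exact consistency function $\hat f$ is invariant along the empirical PF-ODE flow, i.e.\
$\hat f(t'_{n-1},t_n,\rvx_n)=\hat f(t'_{n-1},t_{n-1},\hat{\rvx}_{t_{n-1}})$,
where $\hat{\rvx}_{t_{n-1}}$ denotes the \emph{exact} solution of \ref{empirical_pf_ode} initialized at $(\rvx_n,t_n)$ and flowed back to $t_{n-1}$, whereas $\hat f_\theta$ is trained (via Assumption~\ref{cd_assumption}) to be invariant only between $(\rvx_n,t_n)$ and the single-Euler-step image $(\hat{\vx}^\phi_{t_{n-1}},t_{n-1})$ defined in \ref{update_x}. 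The whole error budget is the mismatch between these two invariances.

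Concretely, I would insert the two intermediate evaluations $\hat f_\theta(t'_{n-1},t_{n-1},\hat{\vx}^\phi_{t_{n-1}})$ and $\hat f(t'_{n-1},t_{n-1},\hat{\vx}^\phi_{t_{n-1}})$ and apply a triangle inequality,
\begin{align*}
\|\hat f_\theta(t'_{n-1},t_n,\rvx_n)-\hat f(t'_{n-1},t_n,\rvx_n)\|
&\le \|\hat f_\theta(t'_{n-1},t_n,\rvx_n)-\hat f_\theta(t'_{n-1},t_{n-1},\hat{\vx}^\phi_{t_{n-1}})\| \\
&\quad + \|\hat f_\theta(t'_{n-1},t_{n-1},\hat{\vx}^\phi_{t_{n-1}})-\hat f(t'_{n-1},t_{n-1},\hat{\vx}^\phi_{t_{n-1}})\| \\
&\quad + \|\hat f(t'_{n-1},t_{n-1},\hat{\vx}^\phi_{t_{n-1}})-\hat f(t'_{n-1},t_n,\rvx_n)\|.
\end{align*}
The first summand is dispatched directly by Assumption~\ref{cd_assumption}, contributing $\varepsilon_{cd}h_{n-1}$. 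The second summand is the inductive error at index $n-1$, evaluated at $\hat{\vx}^\phi_{t_{n-1}}$ rather than along the training distribution; I would unfold the recursion down to a trivial base case near $t_1$, while tracking the $L_f$-Lipschitz compounding across steps. Using the exact invariance of $\hat f$ together with the $L_f$-Lipschitzness of $\hat f(t'_{n-1},t_{n-1},\cdot)$ (justified by the same Euler-style calculation that the paper employs to motivate Assumption~\ref{smooth_f}), the third summand collapses to $L_f\,\|\hat{\vx}^\phi_{t_{n-1}}-\hat{\rvx}_{t_{n-1}}\|$, i.e.\ the one-step Euler error of the empirical PF-ODE.

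The main obstacle will be controlling this Euler error. Assumption~\ref{smoothness_assumption} gives the drift $-x-\hat{\vs}_t(x)$ local Lipschitz constant $(1+L)$; combining the Tweedie identity $\hat{\vs}_t(x)=(\mathbb{E}[\rvx_0\mid\rvx_t=x]-x)/(1-e^{-2t})$ with Assumption~\ref{finite_moment} yields the moment controls $\mathbb{E}\|\hat{\vs}_t(\rvx_t)\|^2\lesssim Ld$ and $\mathbb{E}\|\rvx_t\|^2\lesssim d+m_2$, so a standard Grönwall/local-truncation estimate applied to $d\rvx_t=(-\rvx_t-\hat{\vs}_t(\rvx_t))\,dt$ gives $\mathbb{E}\|\hat{\vx}^\phi_{t_{n-1}}-\hat{\rvx}_{t_{n-1}}\|^2\lesssim L^{3}d\,h_{n-1}^2$, which produces the $L^{3/2}d^{1/2}h_{n-1}$ factor after taking square roots and multiplying by $L_f$. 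Telescoping the recursion aggregates the CD contributions to $\varepsilon_{cd}\sum_k h_k=\varepsilon_{cd}(t_n-t_1)$, while the compounding Lipschitz factor is bounded by $\prod_{k\le n-1}(1+\tfrac12 h_k)\le \exp\!\bigl(\tfrac12\sum_k h_k\bigr)$, which the fine-discretization regime lets us absorb into the advertised $e^{h_{n-1}/2}$ prefactor. Taking expectations (with Cauchy–Schwarz applied where a residual $L^2$ norm is nested inside the induction) then assembles the stated bound.
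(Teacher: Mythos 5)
Your triangle-inequality decomposition reproduces the paper's appendix proof: the paper inserts the same one-Euler-step intermediate point and splits into your three summands, which it labels $T_2$ (the distillation step, controlled by Assumption~\ref{cd_assumption}), $T_1$ (the $L_f$-Lipschitz factor times the integrator error bounded in Lemma~\ref{lemma_for_gap_f_theta}), and the recursive term $T_3$. The only departure is cosmetic: you apply Lipschitzness to $\hat f$ through its flow invariance and carry the induction explicitly, whereas the paper applies Assumption~\ref{smooth_f} directly to $\hat f_\theta$ and never actually closes the recursion --- $T_3$ simply disappears from the final display. One caution on your last step: the compounding factor $\prod_{k\le n-1}(1+\tfrac12 h_k)\le e^{\frac12\sum_k h_k}$ is of order $e^{(t_n-t_1)/2}$, a constant in the discretization that cannot be absorbed into the $e^{h_{n-1}/2}$ prefactor, and the per-step integrator errors likewise accumulate across the unfolded recursion to $\sum_k L^{3/2}d^{1/2}h_k \approx L^{3/2}d^{1/2}(t_n-t_1)$ rather than to the single $h_{n-1}$ term the theorem advertises; however, this mismatch is already latent in the paper's own argument (which reaches the single-term bound only by dropping $T_3$ without comment), so your more careful bookkeeping exposes, rather than introduces, the gap.
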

\begin{proof}
    For proof, please refer to Appendix~\ref{appendix:Thm3_6}.
\end{proof}
\noindent
\textbf{Achieving a good approximation of $\hat{f}$}. Based, on the previous theorem, it is easy to observe that the approximation mainly depends on the consistency distillation training error and the \textit{training time discretization}, both of which can be made arbitrarily small during training and thus, we can argue achieving a very close approximation for $\hat{f}$. Now, we will again consider this approximation analysis but this time when the smoothness assumption on score function is not provided.

\paragraph{Non-smooth case.} We will now consider the scenario where we do not have assumption \ref{smooth_f} where we have shown how to bound the expected value for $\|\nabla \vs_t(\cdot)\|$ on the true trajectory, since if $\hat{\vs}_t$ is not smooth, we cannot verify it and thus, is not a good assumption to have. Here, will use the idea from the proof of theorem \ref{thm:non_smooth_3.3} and will bound the expected difference instead. We can understand this for the true $f$ first as follows for some $\rvx, \rvy \in \mathbb{R}^d$:

\begin{align*}
&\mathbb{E}\|\hat{f}(t',t,\rvx)-\hat{f}(t',t,\rvy)\|_2\\
&\leq \mathbb{E}\|(1+ (t-t')) (\rvx-\rvy)
+ (t-t') (\vs_t(\rvx)-\vs_t(\rvy)) \|_2 \qquad \qquad \qquad\text{(similar argument as Eq. \ref{approx_smooth_hat_f})} \\
&\leq
(t-t') E\|\rvx-\rvy\|_2 + \frac{d}{\sigma_t^2}(t-t') E\left[\|\rvx-\rvy\|_2 \exp\left(\frac{\|\rvx-\rvy\|}{\sigma_t^2}\right)  ,\right] \qquad \text{(Lemma \ref{expected_smooth} Appendix)} \\
&\leq (t-t') E\|\rvx-\rvy\|_2 + \frac{2d}{\sigma_t^2}(t-t') E\|\rvx-\rvy\|_2 \qquad\qquad \qquad \qquad \quad \qquad \text{(when $\rvx,\rvy$ are close)}
\end{align*}

\begin{lemma}
\label{lemma_non_smooth_cf_lipschitz}
{(\textbf{Validating the assumption on $\hat{f}_\theta$ in the non-smooth case.})
Using the exponential integrator while the consistency distillation training:
\[
\hat{\vx}^\phi_{t_n} = e^{t_{n+1}-t_n} \rvx_{t_{n+1}} + (e^{t_{n+1}-t_n} - 1) \hat{\vs}_{t_{n+1}}(\rvx_{t_{n+1}}),
\]
and given the assumptions \ref{assumption_score_est}, \ref{finite_moment}, \ref{cd_assumption},  
we have:
\begin{align*}
\mathbb{E}\|f_\theta(t_1, t_n, \rvx_{t_n}) - f_\theta(t_1, t_n, \rvy_{t_n})\|_2 
& \leq 2(t_n-t_1)\varepsilon_{cd} + 2\varepsilon_{score}(t_n-t_1)  + n\mathbb{E}\|\rvx_{t_n}-\rvy_{t_n}\|_2 
\end{align*}
}
\noindent
where again $\vy_{t_n}$ lie on a (correspond to) different probability flow ODEs (for a given time-stamp $t_n$).
\end{lemma}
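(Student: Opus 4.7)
The plan is to combine a forward telescoping argument (using the consistency distillation assumption~\ref{cd_assumption}) with a backward Lipschitz-style recursion (using the score estimation error assumption~\ref{assumption_score_est} and the non-smooth bound on $\mathbb{E}\|\nabla \vs_t\|$ invoked in the proof of Theorem~\ref{thm:non_smooth_3.3}).

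First, I would telescope via Assumption~\ref{cd_assumption}. For each $k$, the assumption bounds
$\mathbb{E}\|f_\theta(t_1, t_{k+1}, \cdot) - f_\theta(t_1, t_k, \hat{(\cdot)}^\phi_{t_k})\|_2 \leq \varepsilon_{cd} \, h_{k+1}$. Applying this inductively to both the $\rvx_{t_n}$-trajectory and the $\rvy_{t_n}$-trajectory, stepping from $t_n$ down to $t_1$ via the exponential-integrator update
\[
\hat{\rvx}^\phi_{t_{k-1}} \;=\; e^{h_k}\hat{\rvx}^\phi_{t_k} \;+\; (e^{h_k}-1)\,\hat{\vs}_{t_k}(\hat{\rvx}^\phi_{t_k}),
\]
and invoking the boundary identity $f_\theta(t_1,t_1,\vx)=\vx$, gives after $n-1$ telescoping steps
\[
\mathbb{E}\|f_\theta(t_1,t_n,\rvx_{t_n}) - f_\theta(t_1,t_n,\rvy_{t_n})\|_2 \;\leq\; 2(t_n-t_1)\varepsilon_{cd} \;+\; \mathbb{E}\|X_1 - Y_1\|_2,
\]
where $X_1$ and $Y_1$ denote the iterates produced by $n-1$ backward exponential-integrator updates starting from $\rvx_{t_n}$ and $\rvy_{t_n}$ respectively.

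Second, I would bound $\mathbb{E}\|X_1-Y_1\|_2$ by a backward recursion. Subtracting the two updates gives $X_{k-1}-Y_{k-1} = e^{h_k}(X_k-Y_k) + (e^{h_k}-1)(\hat{\vs}_{t_k}(X_k)-\hat{\vs}_{t_k}(Y_k))$. The score-difference term is split by the triangle inequality into two score-estimation errors, each bounded by $\varepsilon_{score}$ under Assumption~\ref{assumption_score_est}, plus the true score difference $\|\vs_{t_k}(X_k)-\vs_{t_k}(Y_k)\|$, which in the non-smooth regime is controlled in expectation by the same lemma used in the proof of Theorem~\ref{thm:non_smooth_3.3} (the expected operator norm bound yielding a factor $O(d/\sigma_{t_k}^2)$). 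Under the step-size regime of Theorem~\ref{thm:non_smooth_3.3}, where $h_k \lesssim \sigma_{t_k}^2/d$, this yields a per-step recursion of the form
\[
\mathbb{E}\|X_{k-1}-Y_{k-1}\|_2 \;\leq\; (1+O(h_k))\,\mathbb{E}\|X_k-Y_k\|_2 \;+\; 2(e^{h_k}-1)\,\varepsilon_{score}.
\]

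Finally, I would iterate this recursion over $k$, using the inequality $\prod_{k=2}^{n}(1+O(h_k)) \leq n$ (valid under the bounded-total-time assumption $\sum h_k = t_n - t_1$ and small per-step $h_k$), together with a telescoping sum $\sum_k (e^{h_k}-1) \leq t_n - t_1$ for the $\varepsilon_{score}$ contributions. Combining with the first step produces the claimed bound $2(t_n-t_1)\varepsilon_{cd} + 2(t_n-t_1)\varepsilon_{score} + n\,\mathbb{E}\|\rvx_{t_n}-\rvy_{t_n}\|_2$. The main obstacle I anticipate is technical: the expected-operator-norm bound on the true score was derived for samples from the true marginal $p_{t_k}$, whereas the recursion evaluates $\vs_{t_k}$ at the empirical iterates $X_k,Y_k$ living on a perturbed trajectory. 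I would handle this either by a change-of-measure argument leveraging the KL bound from Theorem~\ref{thm:non_smooth_3.3}, or by conditioning on the event that the empirical iterates remain close to the true marginal, which under Assumption~\ref{assumption_score_est} incurs only an $O(\varepsilon_{score})$ correction that is absorbed into the middle term of the bound.
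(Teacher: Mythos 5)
Your proposal takes essentially the same route as the paper's proof: telescope via Assumption~\ref{cd_assumption} along an exponential-integrator discretization of the empirical PF ODE, use the triangle inequality to peel off $\varepsilon_{score}$ per step, invoke the non-smooth expected operator-norm bound (Lemma~\ref{expected_smooth}, the $d/\sigma_t^2$ factor) to control the true-score difference, and require $h_k \lesssim \sigma_{t_k}^2/d$ so the per-step contraction factor stays order one, yielding the $2(t_n-t_1)\varepsilon_{cd} + 2(t_n-t_1)\varepsilon_{score} + n\,\mathbb{E}\|\rvx_{t_n}-\rvy_{t_n}\|$ bound. The only organizational difference is that you bound $\mathbb{E}\|\hat{\rvx}_{t_1}-\hat{\rvy}_{t_1}\|$ directly with a single backward recursion, while the paper interposes the true-score iterates $\rvx_{t_1},\rvy_{t_1}$ and controls the three terms $\|\hat{\rvx}_{t_1}-\rvx_{t_1}\|$, $\|\hat{\rvy}_{t_1}-\rvy_{t_1}\|$, $\|\rvx_{t_1}-\rvy_{t_1}\|$ separately; these are algebraically equivalent and produce the same constants. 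Your closing remark about the distributional mismatch (Assumption~\ref{assumption_score_est} is stated for $\rvx\sim p_t$, but the recursion evaluates $\vs_t$ and $\hat{\vs}_t$ at discretized iterates) is a genuine and correctly identified gap that the paper's proof also has but does not acknowledge.
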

\textit{Proof Sketch.} A rough sketch starting from $\hat{\rvx}_{t_n}={\rvx}_{t_n}$ (similarly for ${\rvy}$) and decomposing the terms corresponding to $\rvx, \rvy$ into the additional error aggregation when ${\hat{\rvx}}_{t_{i}}$ is mapped to $t_1$ as against $\hat{\rvx}_{t_{i-1}}$ and similarly for $\rvy$, thereby bounding their difference using the sum of these terms.
Also, for this non-smooth scenario, we bound the expectation using our lemma \ref{expected_smooth} by bounding the expected hessian (or gradient of score). Please refer Appendix \ref{appendix:Thm3_6} for the complete proof. It is straightforward to further adapt for any $t'_{n-1}$ as the first parameter as against $t_1$. It has been omitted here for simplicity.

\noindent
Now, we again provide the analysis for the approximated consistency model (counterpart of Theorem \ref{thm:gap_f_hat_f_theta}) for the non-smooth case:
\begin{theorem}
\label{3.8thm_gap_f_non_smooth}
Following the definition of $\hat{f}$ for some discretization $\{t_n\}_{n\in [1,N]}$ in the consistency distillation training, 
using assumptions \ref{assumption_score_est}, \ref{finite_moment}, \ref{cd_assumption}, we have:
\begin{align*}
\E  
  \|
  f(t_{n-1}',t_n,{\rvx}_n)-\hat{f}_\theta(t_{n-1}',t_n,{\rvx}_n)\|_2 \leq ne^{{h_{n-1}}/{2}}(L^{{3}/{2}}d^{{1}/{2}}h_{n-1}) + (t_n-t_1)\left(3\varepsilon_{cd} +2\varepsilon_{score} \right)
\end{align*}

\end{theorem}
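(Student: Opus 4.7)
\textbf{Proof proposal for Theorem \ref{3.8thm_gap_f_non_smooth}.} The plan is to mirror the strategy used in Theorem \ref{thm:gap_f_hat_f_theta}, but with every appearance of the deterministic Lipschitz constant $L_f$ replaced by the expected Lipschitz-type bound from Lemma \ref{lemma_non_smooth_cf_lipschitz}, and with an extra triangle-inequality step that pays for passing from the true consistency function $f$ (which follows the true PF-ODE) to the empirical consistency function $\hat f$ (which follows the score-estimated PF-ODE). First I would split
\begin{align*}
\E\|f(t_{n-1}',t_n,\rvx_n)-\hat f_\theta(t_{n-1}',t_n,\rvx_n)\|_2
\;\leq\;& \E\|f(t_{n-1}',t_n,\rvx_n)-\hat f(t_{n-1}',t_n,\rvx_n)\|_2\\
&+\;\E\|\hat f(t_{n-1}',t_n,\rvx_n)-\hat f_\theta(t_{n-1}',t_n,\rvx_n)\|_2 .
\end{align*}
The first term measures how much the true and empirical PF-ODE drift away from each other over a horizon of length $t_n-t_1$; a Grönwall-type argument along the lines used in the proof of Theorem \ref{thm:non_smooth_3.3} gives a bound of order $2\varepsilon_{score}(t_n-t_1)$, since along the forward OU trajectory the score error accumulates linearly under Assumption \ref{assumption_score_est}.

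For the second term I would telescope over the training discretization $t_1<t_2<\cdots<t_n$. Using the exact consistency identity of $\hat f$ along the empirical PF-ODE, namely $\hat f(t_{n-1}',t_n,\rvx_n)=\hat f(t_{n-1}',t_{n-1},\hat\rvx_{t_{n-1}})$ where $\hat\rvx_{t_{n-1}}$ is the exact empirical flow from $\rvx_n$ back to $t_{n-1}$, I write
\begin{align*}
\hat f(t_{n-1}',t_n,\rvx_n)-\hat f_\theta(t_{n-1}',t_n,\rvx_n)
=\;&\bigl[\hat f(t_{n-1}',t_{n-1},\hat\rvx_{t_{n-1}})-\hat f_\theta(t_{n-1}',t_{n-1},\hat\vx^\phi_{t_{n-1}})\bigr]\\
&+\bigl[\hat f_\theta(t_{n-1}',t_{n-1},\hat\vx^\phi_{t_{n-1}})-\hat f_\theta(t_{n-1}',t_n,\rvx_n)\bigr].
\end{align*}
The second bracket is exactly the one controlled by Assumption \ref{cd_assumption}, yielding $\varepsilon_{cd}(t_n-t_{n-1})$. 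The first bracket is then unrolled inductively by inserting $\hat f(t_{n-1}',t_{n-1},\hat\vx^\phi_{t_{n-1}})$ and iterating the same decomposition down to the base case $t_1$, picking up at each level one $\varepsilon_{cd}\cdot h_i$ contribution and one ``propagation through $\hat f_\theta$'' contribution coming from the discretization error of the ODE solver $\Phi$, which is of order $L^{3/2}d^{1/2}h_{n-1}$ per step after using Lemma \ref{expected_smooth} to bound the hessian/score gradient in expectation, matching the expression appearing in Theorem \ref{thm:gap_f_hat_f_theta}.

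The main obstacle, and the place where Lemma \ref{lemma_non_smooth_cf_lipschitz} does the essential work, is the propagation step: in the smooth case one simply uses $L_f$-Lipschitzness of $\hat f_\theta$ to convert $\|\hat\rvx_{t_i}-\hat\vx^\phi_{t_i}\|_2$ into a bound on the $\hat f_\theta$-outputs, but in the non-smooth case no such deterministic constant is available. Instead, at every level of the telescope I would apply Lemma \ref{lemma_non_smooth_cf_lipschitz} in expectation, which introduces the linear-in-$n$ factor (rather than an exponential $L_f^n$) and also produces an additional $2\varepsilon_{score}(t_n-t_1)$-type contribution from the empirical score appearing inside the bound. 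Collecting the $n$ telescoped discretization terms and the distillation terms and combining with the first-term bound $2\varepsilon_{score}(t_n-t_1)$ produces exactly the stated inequality, with the $3\varepsilon_{cd}+2\varepsilon_{score}$ coefficient arising from the union of: one $\varepsilon_{cd}$ from Assumption \ref{cd_assumption}, one $\varepsilon_{cd}$ from Lemma \ref{lemma_non_smooth_cf_lipschitz}, one $\varepsilon_{cd}$ to account for comparing $\hat\vx^\phi$ to the exact empirical flow, and the two $\varepsilon_{score}$ contributions identified above. The only delicate point I anticipate is justifying that the iterates $\hat\rvx_{t_i}$ and $\hat\vx^\phi_{t_i}$ remain close enough in expectation for the ``when $\rvx,\rvy$ are close'' simplification in the remark following Lemma \ref{lemma_non_smooth_cf_lipschitz} to apply uniformly across the telescope; this should follow by induction on $i$, using the previously established bound at each level as the input to the next.
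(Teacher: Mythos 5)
Your proposal has the right ingredients -- telescoping along the training grid, Lemma~\ref{lemma_non_smooth_cf_lipschitz} as the propagation step, Lemma~\ref{lemma_for_gap_f_theta} for the solver error, and Assumption~\ref{cd_assumption} for the distillation term -- but the opening decomposition $f\to\hat f\to\hat f_\theta$ is not how the paper proceeds, and it creates two accounting problems that your write-up does not resolve.

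The paper never separately bounds $\E\|f-\hat f\|$. Instead it exploits the boundary condition $\hat f_\theta(\alpha,\alpha,\cdot)=\mathrm{id}$, which gives the exact identity $f(t'_{n-1},t_n,\rvx_n)=\rvx'_{n-1}=\hat f_\theta(t'_{n-1},t'_{n-1},\rvx'_{n-1})$. With that substitution the entire quantity $\E\|f-\hat f_\theta\|^2$ is rewritten as $\E\|\hat f_\theta(t'_{n-1},t'_{n-1},\rvx'_{n-1})-\hat f_\theta(t'_{n-1},t_n,\rvx_n)\|^2$ and the triangle inequality inserts $\hat f_\theta(t'_{n-1},t_{n-1},\rvx_{n-1})$ and $\hat f_\theta(t'_{n-1},t_{n-1},\hat\rvx^\phi_{n-1})$, producing the three pieces $T_3$ (recursive), $T_1$ (Lipschitz-type, handled by Lemma~\ref{lemma_non_smooth_cf_lipschitz}), and $T_2$ (Assumption~\ref{cd_assumption}). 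No $f$-versus-$\hat f$ error enters anywhere; the $\varepsilon_{score}$ in the final bound originates entirely from inside Lemma~\ref{lemma_non_smooth_cf_lipschitz}, where it appears because the exponential integrator used in consistency distillation runs on the \emph{estimated} score while the comparison trajectories $\{\rvx_{t_i}\},\{\rvy_{t_i}\}$ use the true one.

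Because you insert the extra $f\to\hat f$ step, your ledger no longer closes. You identify a $2\varepsilon_{score}(t_n-t_1)$ contribution from $\E\|f-\hat f\|$ and \emph{also} a $2\varepsilon_{score}(t_n-t_1)$ contribution coming out of Lemma~\ref{lemma_non_smooth_cf_lipschitz}, yet you claim the final coefficient is $2\varepsilon_{score}$; those two sources would add to $4\varepsilon_{score}$. Symmetrically, you attribute ``one $\varepsilon_{cd}$ to Lemma~\ref{lemma_non_smooth_cf_lipschitz}'' while the lemma in fact contributes $2(t_n-t_1)\varepsilon_{cd}$, so the $3\varepsilon_{cd}$ total in the theorem comes from $1+2$, not from the three one-unit sources you list. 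A further difficulty you would have to address is that your claimed bound $\E\|f-\hat f\|\lesssim\varepsilon_{score}(t_n-t_1)$ would need a Gr\"onwall argument over the full horizon $[t'_{n-1},t_n]$ without any smoothness of the score, which is exactly the regime where Lemma~\ref{main_lemma_non_smooth} requires the much more restrictive step size $h'_k<\sigma^2_{t'_{k-1}}/d$ and only controls a single small step. The cleanest fix is to drop the $f\to\hat f$ split altogether, invoke the boundary condition as the paper does, and then your telescope with Lemmas~\ref{lemma_non_smooth_cf_lipschitz} and~\ref{lemma_for_gap_f_theta} reproduces the stated inequality with the correct coefficients.
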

   \textit{Proof Sketch.} The proof is similar as Theorem \ref{thm:gap_f_hat_f_theta} but here we instead utilize the Lemma \ref{lemma_non_smooth_cf_lipschitz} and bound the expectation of the term. It incurs a factor of $d$ which arises from the bound on the hessian. Also, here as against Theorem \ref{thm:gap_f_hat_f_theta}, we have bounded the error w.r.t. the true consistency function corresponding to the actual reverse ODE and thus, we incur the additional term involving the score estimation.
    The detailed proof is provided in Appendix \ref{appendix:Thm3_6}. 
\section{Conclusions}
\noindent
In this work, we provided a theoretical analysis for multi-step generation using consistency models, showing that the number of iterations $K$ (and consequently the step size) for the probability flow ODE based generation  can be independent of $\varepsilon$ to generate samples from a distribution which is $\varepsilon^2-$close in KL divergence to the target distribution, under minimal assumptions. Here, we have achieved $O(d\log(\frac{d}{\varepsilon}))$ convergence which is both \textit{state-of-the-art} w.r.t. to dimension $d$ and also w.r.t. $\varepsilon$ since it has a logarithmic dependence as against $O(\frac{1}{\varepsilon})$ in the current best ODE/SDE based samplers. We also don't require any strict assumptions on Divergence/Jacobi in our ODE based generation as the existing works and even relax the smoothness assumption. 
Furthermore, we also provide a theoretical analysis for estimating the formulation of the consistency function required for our analysis: which can take a point at a given time instant to arbitrary time instants along the reverse probability flow ODE. We show that this can be efficiently estimated using the distillation objective proposed in the original consistency models paper, given we use a fine discretization at the training time. Therefore, it can be concluded from this analysis that estimating accurate consistency function and combining them with iterative generation scheme involving noise addition can lead to much faster generation theoretically as against the typical DDPM typle samplers. An interesting future direction might be to further loosen the bound on step size to accomodate original consistency model formulation and maybe achieve tigher theoretical guarantees.

\bibliographystyle{apalike}
\bibliography{0_contents/ref}  






\newpage
\tableofcontents
\newpage

\newpage
\appendix

\section{Actual counterpart of our algorithm}
As discussed in the paper, below we provide the true counterpart of our multi-step consistency sampling Algorithm \ref{alg:1} which involves using the true consistency function $f$ as against $\hat{f}$ and thereby leads us to the true distribution. Note, since it is using the true consistency function, it follows the true distribution $p_{t_1,...,t_K}$ as against the $\hat{p}_{t_1,...,t_K}$ and can also be treated as a \textit{True Reverse Process}.
\noindent
\begin{algorithm}[H]
\caption{Multi-Step Consistency Generation}
\label{alg:2}
\begin{algorithmic}[1]
\STATE Sample ${\rvx}_K \sim {p}_{t_K}$
\FOR{$k = K, K-1, \ldots, 1$}
    \STATE ${\rvx}'_{k-1} = {f}(t'_{k-1}, t_k, {\rvx}_K)$ \\
    \STATE Sample $z \sim \mathcal{N}(0, I_d)$
    \STATE ${\rvx}_{k-1} = e^{t'_{k-1} - t_{k-1}} {\rvx}'_{k-1} + \sqrt{1 - e^{2(t'_{k-1} - t_{k-1})}}\, z$ 
\ENDFOR
\STATE \textbf{Output} ${\rvx}_0$
\STATE ${p}_{t_0}$ denotes the density of ${\rvx}_0$
\STATE ${p}_{t_0,.., t_K}$ denotes joint density of $({\rvx}_0, {\rvx}_1, \ldots, {\rvx}_K)$
\end{algorithmic}
\end{algorithm}

\section{Proof of Theorem~\ref{thm:smooth_score_case}}

\label{appendix:proof_of_thm1}
\noindent


\noindent
Here, we provide the proof of our first main result. As highlighted in the high-level proof in the main paper, we bound the two errors: the error due to empirical PF ODE
and the initialization error. We first discuss bounding the former below.

\subsection{Error due to the empirical PF ODE (eq. \ref{empirical_pf_ode}).} 
\label{err_1}
As discussed in the algorithms provided in the main, we will denote the joint distribution of the true and approximate process by $p_{t_1, t_2, ..., t_{K}}$ and $\hat{p}_{t_1, t_2, ..., t_{K}}$ respectively. The overall idea is to bound the KL between the outputs using the data processing inequality and bounding the KL between $\hat{p}_{t_1, t_2, ..., t_{K}}$ and $p_{t_1, t_2, ..., t_{K}}$ which can be done by rewriting them using transition (conditional) probabilities. The following two lemmas describe this idea. \\

\noindent
\textbf{Notational Remark.} As discussed in the Subsection \ref{subsec_3_notations} of the main paper, we will use the notations $\rvx_{t_k}$ and $\rvx_{k}$ interchangeably both corresponding to true (resp. empirical) PF ODE at time $t_k$ (resp. $t'_{k-1}$) for a given sequence $\{t_k\}$ (resp. $\{t'_k\}$).

\vspace{0.2in}

\begin{lemma}
\label{lemma_first_kl_bw_conditionals}
 Denoting $\hat{p}_{k-1|k}$ be the conditional probability of
  $\hat{\rvx}_{k-1}$  given $\hat{\rvx}_k$, and let
  ${p}_{k-1|k}$ be the conditional probability of
  $\rvx_{k-1}$ given $\rvx_k$. Then
  \[
    \KL{{p}_{k-1|k}(\cdot|{\rvx}_{k})}{\hat{p}_{k-1|k}(\cdot|{\rvx}_k)}
    =  e^{2(t_{k-1}'-t_{k-1})}
    \frac{\|
      f(t_{k-1}',t_k,{\rvx}_k)-\hat{f}(t_{k-1}',t_k, {\rvx}_k)\|_2^2}
    {2 (1  -e^{2(t_{k-1}'-t_{k-1})})}     
  \]
\end{lemma}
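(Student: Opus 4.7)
The plan is to directly read off the two conditional distributions from the update rule in Algorithm~\ref{alg:1} (and its counterpart Algorithm~\ref{alg:2}), observe that they are Gaussians with a common covariance, and apply the closed-form KL formula for Gaussians with equal covariance. The result then drops out after a single line of algebra.

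Concretely, Step 3 and Step 5 of the algorithm together say that, conditional on $\hat{\rvx}_k$, the next iterate is
\[
\hat{\rvx}_{k-1} \mid \hat{\rvx}_k \sim \mathcal{N}\!\left( e^{t_{k-1}'-t_{k-1}}\, \hat{f}(t_{k-1}', t_k, \hat{\rvx}_k),\; \bigl(1 - e^{2(t_{k-1}'-t_{k-1})}\bigr) I_d \right),
\]
and identically for the true process using $f$ in place of $\hat{f}$. So $p_{k-1|k}(\cdot|\rvx_k)$ and $\hat{p}_{k-1|k}(\cdot|\rvx_k)$ are Gaussians that share the same (isotropic) covariance $\sigma^2 I_d$ with $\sigma^2 = 1 - e^{2(t_{k-1}'-t_{k-1})} > 0$ (positive because $t_{k-1}' < t_{k-1}$), and whose means differ only through the consistency function.

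The next step is to apply the standard identity
\[
\KL{\mathcal{N}(\mu_1, \sigma^2 I_d)}{\mathcal{N}(\mu_2, \sigma^2 I_d)} = \frac{\|\mu_1-\mu_2\|_2^2}{2\sigma^2},
\]
with $\mu_1 - \mu_2 = e^{t_{k-1}'-t_{k-1}}\bigl(f(t_{k-1}', t_k, \rvx_k) - \hat{f}(t_{k-1}', t_k, \rvx_k)\bigr)$. Substituting, the squared prefactor $e^{2(t_{k-1}'-t_{k-1})}$ factors out of the numerator, and the denominator $2(1-e^{2(t_{k-1}'-t_{k-1})})$ matches the stated formula.

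There is essentially no obstacle here: the only things to be careful about are (i) verifying that both conditionals are evaluated at the same conditioning value $\rvx_k$ (so the means to compare are precisely the two images of $\rvx_k$ under $f$ and $\hat{f}$, with no other distributional mismatch entering), and (ii) checking the sign/positivity of $1 - e^{2(t_{k-1}'-t_{k-1})}$ which uses $t_{k-1}' < t_{k-1}$ from the ordering $0 < t_0 < t_1 < \cdots$ and $t_{k-1}' \in [0, t_{k-1})$. Both are immediate from the setup in Section~\ref{subsec_3_multi_step_algo}.
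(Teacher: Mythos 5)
Your proposal is correct and follows essentially the same route as the paper: read both conditionals off the algorithms as Gaussians with identical isotropic covariance $\bigl(1-e^{2(t_{k-1}'-t_{k-1})}\bigr)I_d$, and apply the closed-form KL for same-covariance Gaussians. The only difference is cosmetic — you write the KL formula correctly with $\sigma^{-2}$ whereas the paper's displayed formula has a typo ($\Sigma$ in place of $\Sigma^{-1}$), though the paper's final substitution is still correct.
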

\begin{proof}
For this, we know that from Algorithm~\ref{alg:2} that the conditional ${p}_{k-1|k}(\cdot|{\rvx}_{k})$ is the following Gaussain: \[
{p}_{k-1|k}(\cdot|{\rvx}_{k}) \sim \mathcal{N}\left(e^{t_{k-1}' - t_{k-1}} f(t_{k-1}', t_k, \rvx_k), \left(1 - e^{2(t_{k-1}' - t_{k-1})}\right) I_d \right)
\]
where $I_d$ is the d-dimensional identity matrix. Similarly, from Algorithm \ref{alg:1}:
\[
\hat{p}_{k-1|k}(\cdot|{\rvx}_{k}) \sim \mathcal{N}\left(e^{t_{k-1}' - t_{k-1}} \hat{f}(t_{k-1}', t_k, \rvx_k), \left(1 - e^{2(t_{k-1}' - t_{k-1})}\right) I_d \right)
\]
Now, since the covariance matrices are same for both, we can just use the following formulae for calculating KL between two gaussians with different means but same variance:
\[
\KL{{p}_{k-1|k}(\cdot|{\rvx}_{k})}{{p}_{k-1|k}(\cdot|{\rvx}_k)} = \frac{1}{2}(\mu_1-\mu_2)^T\sum(\mu_1-\mu_2)  
\]
where $\mu_1$, $\mu_2$ corresponds to the mean of the two distributions and $\sum$ corresponds to their covariance. For this case, we have:
\begin{align*}
\mu_1 &= e^{t_{k-1}' - t_{k-1}} f(t_{k-1}', t_k, \rvx_k) \\
\mu_2 &= e^{t_{k-1}' - t_{k-1}} \hat{f}(t_{k-1}', t_k, \rvx_k) \\
\Sigma &= \left(1 - e^{2(t_{k-1}' - t_{k-1})} \right) I_d
\end{align*}
Merely substituting these values in the KL formulae will lead to the desired term. \\
\end{proof}

\noindent
We will now utilize this expression to bound the KL between outputs of Algorithms \ref{alg:1} and \ref{alg:2} using the following lemma:

\vspace{0.2in}
\begin{lemma}
\label{lemma_second_KL_data_processing_ineq}
  We have
  \begin{align*}
    \KL{{p}_{t_0}}{\hat{p}_{t_0}}
    \leq & \KL{{p}_{t_1, t_2, ..., t_{K}}}{\hat{p}_{t_1, t_2, ..., t_{K}}}\\
    =& \KL{{p}_{t_K}}{\hat{p}_{t_K}} + 
   \E_{{p}_{t_1,..,t_{K}}} \left[ \sum_{k=1}^K
       \KL{{p}_{k-1|k}(\cdot|{\rvx}_{k})}{\hat{p}_{k-1|k}(\cdot|{\rvx}_k)}\right]
  \end{align*}
\end{lemma}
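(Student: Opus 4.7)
The plan is to establish the inequality and the equality separately: the first follows from data processing, the second from the chain rule of KL divergence combined with the Markov structure of both Algorithms~\ref{alg:1} and~\ref{alg:2}.

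First I would prove the inequality $\KL{p_{t_0}}{\hat{p}_{t_0}} \le \KL{p_{t_0,t_1,\ldots,t_K}}{\hat{p}_{t_0,t_1,\ldots,t_K}}$ by observing that $p_{t_0}$ is precisely the marginal of the joint $p_{t_0,t_1,\ldots,t_K}$ obtained by integrating out $\rvx_1,\ldots,\rvx_K$, and similarly for $\hat{p}_{t_0}$. Since marginalization is a deterministic post-processing of a random vector, the data processing inequality for KL divergence gives the desired bound. (The lemma statement writes the joint starting from $t_1$, but the intended object is the full joint that includes $t_0$; this is the only way the first inequality makes sense, and it matches the notation introduced in Algorithms~\ref{alg:1} and~\ref{alg:2}.)

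Next, for the equality, I would apply the standard chain rule for KL divergence, namely $\KL{p(X,Y)}{q(X,Y)} = \KL{p(X)}{q(X)} + \E_{p(X)}\!\left[\KL{p(Y\mid X)}{q(Y\mid X)}\right]$, iteratively in the order $\rvx_K,\rvx_{K-1},\ldots,\rvx_0$. The first application peels off $\KL{p_{t_K}}{\hat{p}_{t_K}}$ plus an expected conditional KL over $(\rvx_{K-1},\ldots,\rvx_0) \mid \rvx_K$. To rewrite this conditional KL as a sum of one-step conditional KLs $\KL{p_{k-1\mid k}(\cdot\mid \rvx_k)}{\hat{p}_{k-1\mid k}(\cdot\mid \rvx_k)}$, I would invoke the Markov property of both processes: in Algorithm~\ref{alg:2} the sample $\rvx_{k-1}$ is obtained from $\rvx_k$ only, via $\rvx'_{k-1} = f(t'_{k-1},t_k,\rvx_k)$ followed by independent Gaussian noise, and the same structure holds for Algorithm~\ref{alg:1} with $\hat{f}$. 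Hence both joints factor as $p_{t_K}\prod_{k=1}^K p_{k-1\mid k}$ and $\hat{p}_{t_K}\prod_{k=1}^K \hat{p}_{k-1\mid k}$, so iterating the chain rule collapses the telescoping conditionals into the asserted sum.

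Finally, I would simplify the expectation: although the outer expectation is written as $\E_{p_{t_1,\ldots,t_K}}$, the integrand $\KL{p_{k-1\mid k}(\cdot\mid \rvx_k)}{\hat{p}_{k-1\mid k}(\cdot\mid \rvx_k)}$ depends only on $\rvx_k$, so it reduces to $\E_{p_{t_k}}$ for each summand; keeping the larger joint expectation is harmless and matches the way the bound will be used downstream when combined with Lemma~\ref{lemma_first_kl_bw_conditionals}. I do not anticipate a genuine obstacle here; the only care required is to verify the Markov factorization of both processes (so that the chain rule yields one-step kernels rather than kernels conditioned on the entire past), and to be explicit that the relevant conditional distributions in both algorithms are well-defined Gaussians with the forms already computed in Lemma~\ref{lemma_first_kl_bw_conditionals}.
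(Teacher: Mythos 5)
Your proposal is correct and follows the same route as the paper: the data processing inequality handles the marginalization step, and the chain rule for KL divergence together with the Markov factorization of both Algorithms~\ref{alg:1} and~\ref{alg:2} yields the equality (the paper compresses this into ``merely writing the RHS expression using the KL formulae,'' but the Markov property is exactly what makes the conditionals collapse to one-step kernels). You also correctly flag that the middle expression should be the joint over $t_0, t_1, \ldots, t_K$ including $t_0$, consistent with the joint densities defined in the algorithms; as written the data processing step would not go through.
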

\begin{proof}
Since we know that LHS corresponds to first marginalizing the corresponding joint distributions and then calculating KL and RHS is the KL div between the joint distributions. Using data processing inequality, it is straightforward to argue that the inequality holds. The second equation is just decomposing the KL of the joint distribution into conditionals which can be easily verified by merely writing the RHS expression using the KL formulae. 
\end{proof}

\noindent
We now provide another lemma which will be useful in relating the expected difference between the true and approximate process generated from empirical PF ODE with the corresponding consistency functions.

\vspace{0.2in}
\begin{lemma}
\label{change_of_var_lemma}
    Given deterministic functions $g$ and $\hat{g}$ on a random variable $\rvy$ and also given random variables $\rvx,\hat{\rvx}$
 such that $\rvx=g(\rvy)$ and $\hat{\rvx}=\hat{g}(\rvy)$, then we have:
 \[
 \mathbb{E}_{\rvx,\hat{\rvx}}\|\rvx-\hat{\rvx}\|^2 = \mathbb{E}_{\rvy}\|g(\rvy)-\hat{g}(\rvy)\|^2
 \]
 \end{lemma}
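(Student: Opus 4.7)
The plan is to prove this via a direct application of the law of the unconscious statistician (LOTUS) or, equivalently, the change-of-variables formula for pushforward measures. The key observation is that since $\rvx = g(\rvy)$ and $\hat{\rvx} = \hat{g}(\rvy)$ are both deterministic functions of the \emph{same} underlying random variable $\rvy$, the joint distribution of $(\rvx, \hat{\rvx})$ is completely determined by the law of $\rvy$; specifically, it is the pushforward of $P_{\rvy}$ under the map $y \mapsto (g(y), \hat{g}(y))$.

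Concretely, I would first define the joint pushforward $P_{(\rvx,\hat{\rvx})} = (g,\hat{g})_{\ast} P_{\rvy}$, and then consider the continuous function $\varphi(x,\hat{x}) = \|x - \hat{x}\|^2$. Applying the change-of-variables identity
\[
\int \varphi(x,\hat{x})\, dP_{(\rvx,\hat{\rvx})}(x,\hat{x}) \;=\; \int \varphi(g(y),\hat{g}(y))\, dP_{\rvy}(y)
\]
immediately yields $\mathbb{E}_{\rvx,\hat{\rvx}}\|\rvx-\hat{\rvx}\|^2 = \mathbb{E}_{\rvy}\|g(\rvy)-\hat{g}(\rvy)\|^2$, which is the claim.

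Since this is essentially a bookkeeping statement about pushforward measures, there is no real obstacle; the only mild subtlety worth flagging is to be careful that the expectation on the left-hand side is taken with respect to the joint law of $(\rvx,\hat{\rvx})$ (not the product of marginals), which is precisely what the coupling through the common source $\rvy$ provides. In the write-up I would simply note this coupling explicitly before invoking LOTUS, so the reader sees why no independence assumption is needed.
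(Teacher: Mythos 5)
Your argument is essentially identical to the paper's: both invoke the change-of-variables / LOTUS identity, recognizing that $(\rvx,\hat{\rvx})$ is the pushforward of $\rvy$ under $y\mapsto(g(y),\hat{g}(y))$ and then specializing to $\varphi(x,\hat{x})=\|x-\hat{x}\|^2$. The paper writes it out with explicit densities while you phrase it in measure-theoretic pushforward language, but the substance is the same.
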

 \begin{proof}
     We can write expectation of a deterministic function of random variables: 
     \[
     \mathbb{E}_{\rvx,\hat{\rvx}}[h(\rvx,\hat{\rvx}] = \int\int h(\rvx,\hat{\rvx})f_{\rvx,\hat{\rvx}}(\vx,\hat{\vx})d\vx d\hat{\vx}
     \]
     where $f_{\rvx,\hat{\rvx}}$ is the joint distribution of the two random variables. Now, since we know that $\rvx$, $\hat{\rvx}$ are not independent and each correspond to a deterministic function of some random variable $\rvy$ where $\rvx=g(\rvy)$ and $\hat{\rvx}=\hat{g}(\rvy)$. Thus, if we know $\rvy$, we have the value of $\rvx$, $\hat{\rvx}$ fixed and thus, we can use the change of variables in the previous expression:
     \[
     \mathbb{E}_{\rvx,\hat{\rvx}}[h(\vx,\hat{\vx})] = \int\int h(\vx,\hat{\vx})f_{\rvx,\hat{\rvx}}(\vx,\hat{\vx})d\vx d\hat{\vx} = \int h(g(\vy),\hat{g}(\vy))f_\rvy(\vy)d\vy = \mathbb{E}_\rvy[ h(g(\vy),\hat{g}(\vy))]
     \]
     where $f_\rvy$ is the distribution of $\rvy$ given $h$ is measurable. Now, using the choice of $h$ as $\|\cdot \|^2$, which satisfies the requirement, leads to our result. 
 \end{proof}

\vspace{0.2in}
\noindent
We now provide our most important lemma for this proof, which bounds the numerator in the RHS in Lemma \ref{lemma_first_kl_bw_conditionals} based on Young's inequality and Gronwall's inequality.

\begin{lemma}
\label{main_lemma_smooth}
  For any $\delta>0$ with $\varepsilon_{score} = O(\delta)$, we can choose $t'_{k-1}$ such that $h'_k=t_{k}-t'_{k-1}  \leq \frac{1}{2(1+L)}$, and discretization 
$h_{k-1} = t_{k}-t_{k-1} < \frac{1}{2(1+L)}$ (since $t'_{k-1}<t_{k-1}$ thus $h_k$<$h'_k$) and have:
\[
  \E_{{p}_{t_1,.., t_{K}}}  
  \|
  f(t_{k-1}',t_k,{\rvx}_k)-\hat{f}(t_{k-1}',t_k,{\rvx}_k)\|_2^2 \leq e^2({h'}_k^2\varepsilon^2_{score}) = {O}\left( \frac{ \varepsilon^2_{score}}{L^2}\right) = 
  {O}(\delta^2) .
\]
\end{lemma}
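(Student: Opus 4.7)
The plan is to express both $f(t'_{k-1}, t_k, \rvx_k)$ and $\hat{f}(t'_{k-1}, t_k, \rvx_k)$ as the time-$t'_{k-1}$ endpoints of two ODE trajectories started from the \emph{same} initial condition $\rvx_k$ at time $t_k$, and then to control their difference by a Gronwall-type argument driven solely by the score mismatch $\vs_t-\hat{\vs}_t$.

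Concretely, let $\vx(s)$ solve the true PF ODE (\ref{true_pf_ode}) with $\vx(t_k)=\rvx_k$, and let $\hat{\vx}(s)$ solve the empirical PF ODE (\ref{empirical_pf_ode}) with $\hat{\vx}(t_k)=\rvx_k$. By definition of the consistency functions, $f(t'_{k-1},t_k,\rvx_k)=\vx(t'_{k-1})$ and $\hat{f}(t'_{k-1},t_k,\rvx_k)=\hat{\vx}(t'_{k-1})$. Reparametrising by $u=t_k-s\in[0,h'_k]$ and setting $\Delta(u)=\vx(t_k-u)-\hat{\vx}(t_k-u)$, we have $\Delta(0)=0$ and
\[
\tfrac{d}{du}\Delta(u)=\Delta(u)+\bigl[\vs_{t_k-u}(\vx(t_k-u))-\hat{\vs}_{t_k-u}(\hat{\vx}(t_k-u))\bigr].
\]
Adding and subtracting $\hat{\vs}_{t_k-u}(\vx(t_k-u))$ and using Assumption~\ref{smoothness_assumption} (Lipschitzness of $\hat{\vs}$ --- which is all that is available) yields
\[
\|\dot\Delta(u)\|\le(1+L)\|\Delta(u)\|+g(u),\qquad g(u):=\|\vs_{t_k-u}(\vx(t_k-u))-\hat{\vs}_{t_k-u}(\vx(t_k-u))\|.
\]

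Gronwall's inequality then gives $\|\Delta(h'_k)\|\le\int_0^{h'_k}e^{(1+L)(h'_k-v)}g(v)\,dv$. Squaring, applying Cauchy--Schwarz, and exploiting the step-size hypothesis $h'_k\le 1/(2(1+L))$ to bound $e^{2(1+L)h'_k}\le e$ produces $\|\Delta(h'_k)\|^2\le e\,h'_k\int_0^{h'_k}g(v)^2\,dv$. Finally, taking expectation over $\rvx_k\sim p_{t_k}$ and using the crucial fact that the true PF ODE preserves marginals --- so that $\vx(t_k-v)\sim p_{t_k-v}$ along the trajectory --- Assumption~\ref{assumption_score_est} gives $\E[g(v)^2]\le\varepsilon_{score}^2$. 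Combining these ingredients yields $\E\|\Delta(h'_k)\|^2\le e(h'_k)^2\varepsilon_{score}^2\le e^2(h'_k)^2\varepsilon_{score}^2$, and substituting $h'_k\le 1/(2(1+L))$ recovers the final $O(\varepsilon_{score}^2/L^2)=O(\delta^2)$ form.

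The main subtlety is the decomposition through $\hat{\vs}$: because only the estimated score is Lipschitz, one must never apply Lipschitzness to $\vs$, so the Gronwall argument has to be arranged so that the score-estimation error $\vs-\hat{\vs}$ is evaluated only along the \emph{true} backward trajectory. Marginal preservation of the true PF ODE is what makes this work, letting Assumption~\ref{assumption_score_est} be invoked pointwise in $v$ without any additional bound on the density of the pushforward under the empirical ODE.
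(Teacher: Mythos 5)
Your proof is correct and follows essentially the same strategy as the paper: express both consistency functions as time-$t'_{k-1}$ endpoints of the two ODE flows started from the common initial condition $\rvx_k$, decompose the score mismatch through $\hat{\vs}_t(\rvx_t)$ so that Lipschitzness of the \emph{estimated} score bounds the dominant term, run Gronwall on the growing difference, and finally invoke Assumption~\ref{assumption_score_est} using the fact that the true PF-ODE flow preserves the marginals $p_{t_k-v}$. The only cosmetic difference is that you use the integral form of Gronwall's inequality followed by Cauchy--Schwarz, whereas the paper differentiates $\|\Delta_t\|^2$ and applies Young's inequality with weight $h'_k$; your route yields a slightly sharper constant ($e$ versus the paper's $e^2$) but is otherwise the same argument.
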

\begin{proof}
Given the definition of $f(\cdot)$ and $\hat{f}(\cdot)$ above that these correspond to the solutions of actual ODE (eq. \ref{true_pf_ode}) and empirical ODE at $t=t'_{k-1}$ (eq. \ref{empirical_pf_ode}), we have:
\begin{equation*}
    f(t_{k-1}^\prime, t_k, \rvx_k) = \rvx_{t_{k-1}^\prime},  \quad \hat{f}(t_{k-1}',t_k,{\rvx}_k) = \hat{\rvx}_{t^\prime_{k-1}}
\end{equation*}
Now, since $f$ and $\hat{f}$ are deterministic mappings being applied to $\rvy_k$ here, using Lemma \ref{change_of_var_lemma}, we can just rewrite this as:

\begin{equation*}
    \begin{aligned}
        &\E_{\rvx_k \sim {p}_{t_1,.., t_{K}}}\left\|f(t_{k-1}',t_k,{\rvx}_k)-\hat{f}(t_{k-1}',t_k,{\rvx}_k)\right\|^2 = \E_{\rvx_{t^\prime_{k-1}}, \hat{\rvx}_{t^\prime_{k-1}}}\left\|\rvx_{t^\prime_{k-1}} - \hat{\rvx}_{t^\prime_{k-1}} \right\|^2 
    \end{aligned}
\end{equation*}
Now, to bound this, we use $\Delta_t$ to denote the difference between $x_t$ and $\hat{x}_t$: $\Delta_t=\rvx_t-\hat{\rvx}_t$. Then, we have:
\begin{equation*}
\begin{aligned}
    \frac{d \|\Delta_t\|^2}{dt} = 2\langle\Delta_t, \frac{d \Delta_t}{dt}\rangle &= 2\|\Delta_t\|^2 + 2\langle\Delta_t,s_t(x_t)-\hat{s}_t(\hat{x}_t)\rangle \\
    &\leq 2\|\Delta_t\|^2 + 2\|\Delta_t\|\|s_t(\rvx_t)-\hat{s}_t(\hat{\rvx}_t)\| \\
    &\leq 2\|\Delta_t\|^2 + 2\|\Delta_t\|\left(\|\hat{\vs}_t(\rvx_t)-\hat{\vs}_t(\hat{\rvx}_t)\| +\|\vs_t({x}_t)-\hat{\vs}_t({\rvx}_t)\|\right) \\
    &\leq (2+2L)\|\Delta_t\|^2 + 2\|\Delta_t\|\left(\|\vs_t({\rvx}_t)-\hat{\vs}_t({\rvx}_t)\|\right) \\
    &\leq \left(2+2L+\frac{1}{h'_k}\right) \|\Delta_t\|^2 + h'_k\|\vs_t({\rvx}_t)-\hat{\vs}_t({\rvx}_t)\|^2 \qquad \text{(Young's Inequality)}
    \end{aligned}
\end{equation*}
where $h'_k = t_k-t'_{k-1}$ Now, using Gronwall's inequality, we have:
\begin{align*}
\mathbb{E}\left[\left\| \rvx_{t'_{k-1}} - \hat{\rvx}_{t'_{k-1}} \right\|^2\right] &\leq \exp\left(\left(2 + 2L +  \frac{1}{h'_k} \right)h'_k\right)\left(  \int_{t'_{k-1}}^{t_k}  h'_{k}\, \mathbb{E}\left[\left\|   \vs_t({\rvx}_t) - \hat{\vs}_{t}({\rvx}_{t}) \right\|^2\right] dt \right) \\
& \leq \exp\left(\left(2 + 2L +  \frac{1}{h'_k} \right)h'_k\right)\left(  \int_{t'_{k-1}}^{t_k}  h'_{k}\, \varepsilon^2_{score} dt \right) \quad \text{(using Assumption \ref{assumption_score_est})} \\
&= \exp\left(\left(2 + 2L +  \frac{1}{h'_k} \right)h'_k\right)\left(    h'^2_{k}\, \varepsilon^2_{score} \right) \quad \\
\end{align*}
The exponential part is given by:
\[
\exp\left(\left(2 + 2L + \frac{1}{h_k'}\right) h_k'\right) = \exp\left(2h_k' + 2Lh_k' + \frac{h_k'}{h_k'}\right) = \exp\left(h_k'( 2 + 2L) + 1\right).
\]
For large \( L \), the dominant term in the exponential is \( 2Lh_k' \). If \( h_k' \) does not decay sufficiently with \( L \), this term grows very rapidly. Thus, we need to control the first term in the exponential by constraining $h'_k < \frac{1}{2(1+L)}$ resulting in the overall term of the order ${O}(\frac{\varepsilon^2_{score}}{L^2})$ and we have the following final expression:

\begin{align}
     \E_{{p}_{t_1,.., t_{K}}}  
  \|
  f(t_{k-1}',t_k,{\rvx}_k)-\hat{f}(t_{k-1}',t_k,{\rvx}_k)\|_2^2 < e^2({h'}_k^2\varepsilon^2_{score})
\end{align}

\end{proof}

\noindent
\textit{\textbf{Setting} $\{t'_k\} = 0$} \textit{\textbf{to replicate the original consistency models?}} A possibility of setting the sequence $t'_k=0$ is there but then $h'_k < \frac{1}{2(1+L)}$ is not guaranteed and for this, we can just instead use $h'_k={O}(\log(1/\delta))$ but then $\varepsilon_{score}={O}(\frac{\delta^{L+1.5}}{\sqrt{log(1/\delta)}})$, where $L\geq 1$, which would thus require a very accurate estimation of score as against $\tilde{O}(L\delta)$ before. Thus, as highlighted in the main paper its adaptation to the original consistency model formulation is not straightforward.\\

\noindent
\subsection{Initialization Error.} 
\label{appendix_init_error}
If we define the forward noising process for a total time $T$ (and consequently $K$ total iterations where $\sum_{k=0}^K h_k + t_0= T$), we know that the $p_T={law}(\rvx_T)$ is still not exactly $\mathcal{N}(0,I_d)$ and is just close to it. So when we initialize the reverse/generation process with gaussian, this leads to the initialization error which is the difference between the distribution after running the forward process on the original data distribution for time $T$ and the standard gaussian distribution, which can be bounded as follows: \\

\begin{lemma}
\label{convergence_of_ou}
    (\textbf{Convergence of the OU process}). Under Assumption \ref{finite_moment}, for \( T > 1 \), we have
\[
KL(p_T \parallel \gamma_d) \leq (d + m_2) e^{-T}.
\]
where $T$ is the total time for the forward process and $m_2 = \mathbb{E}_{p_t}[\|\vx\|_2^2]$.
\end{lemma}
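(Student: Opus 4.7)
\textbf{Proof plan for Lemma~\ref{convergence_of_ou}.}
The plan is to reduce the bound to a pointwise Gaussian-to-Gaussian KL computation via joint convexity of the KL divergence, and then handle the residual terms with an elementary estimate on the logarithm.

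First, I would recall from the OU forward process that conditionally on $\rvx_0 = \vx_0$, the law of $\rvx_T$ is exactly $\mathcal{N}(e^{-T}\vx_0,\, (1-e^{-2T}) I_d)$, so that $p_T$ is the mixture obtained by averaging these Gaussians against $p_{\text{data}}$. Applying joint convexity of KL (equivalently, the data processing inequality to the projection $(\rvx_0,\rvx_T) \mapsto \rvx_T$, with $\gamma_d$ written as the trivial mixture of itself) gives
\begin{equation*}
\KL{p_T}{\gamma_d} \;\leq\; \mathbb{E}_{\vx_0 \sim p_{\text{data}}}\bigl[\KL{\mathcal{N}(e^{-T}\vx_0,\, (1-e^{-2T}) I_d)}{\mathcal{N}(0, I_d)}\bigr].
\end{equation*}

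Next, I would plug into the closed form for KL between two Gaussians, $\KL{\mathcal{N}(\mu, \Sigma)}{\mathcal{N}(0, I_d)} = \tfrac12\bigl[\mathrm{tr}(\Sigma) - d - \log\det(\Sigma) + \|\mu\|^2\bigr]$, with $\mu = e^{-T}\vx_0$ and $\Sigma = (1-e^{-2T}) I_d$. After algebraic simplification the trace and $-d$ terms collapse, and taking expectation over $\vx_0$ while invoking Assumption~\ref{finite_moment} to substitute $m_2$ for $\mathbb{E}\|\rvx_0\|^2$ yields
\begin{equation*}
\KL{p_T}{\gamma_d} \;\leq\; \tfrac12\bigl[-d e^{-2T} - d \log(1-e^{-2T}) + m_2 e^{-2T}\bigr].
\end{equation*}

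Finally, I would control the $-d\log(1-e^{-2T})$ term via the elementary estimate $-\log(1-x) \leq 2x$ valid for $0 \leq x \leq 1/2$, which applies here because $T > 1$ forces $e^{-2T} < e^{-2} < 1/2$. This bound exactly cancels the $-d e^{-2T}$ contribution and leaves an extra $d e^{-2T}$, so
\begin{equation*}
\KL{p_T}{\gamma_d} \;\leq\; \tfrac12 (d + m_2)\, e^{-2T} \;\leq\; (d + m_2)\, e^{-T},
\end{equation*}
which is the claimed bound. I do not expect any real obstacle: the computation is a textbook Gaussian-to-Gaussian KL, and the only place where care is needed is the logarithm bound, which is exactly what the hypothesis $T > 1$ is designed to accommodate.
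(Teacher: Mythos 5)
Your proof is correct, and it takes a genuinely different route from the paper's. Both proofs begin from the same observation — $p_T$ is a $p_{\text{data}}$-mixture of Gaussians, so one can push the bound inside the mixture by convexity (the paper applies Jensen to $x\log x$ to bound the entropy of the mixture; you apply the joint convexity of KL directly, which is the same inequality phrased at a slightly different level). After that the two proofs diverge. The paper bounds $\KL{p_t}{\gamma_d}$ at an intermediate time $t$ (their bound blows up as $t\to 0$), then invokes the \emph{exponential decay of KL divergence along the OU semigroup} to contract from time $t$ to time $T$, and finally optimizes by taking $t=\log 2$. You instead evaluate the conditional Gaussian-to-Gaussian KL directly at time $T$ via the closed form, and absorb the $-d\log(1-e^{-2T})$ term using the elementary estimate $-\log(1-x)\le 2x$ for $x\le 1/2$, which $T>1$ comfortably guarantees. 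Your route is entirely elementary and self-contained: it avoids the semigroup-contraction input (essentially a log-Sobolev/Bakry–Émery fact), and it even produces the marginally sharper constant $\tfrac12(d+m_2)e^{-2T}$ before relaxing to the stated $(d+m_2)e^{-T}$. The paper's route is less elementary but illustrates the general mechanism that would apply beyond the exactly-Gaussian case. For the statement as written your argument is arguably the cleaner of the two, and no gap appears in the algebra: the Gaussian KL formula with $\mu=e^{-T}\vx_0$, $\Sigma=(1-e^{-2T})I_d$ gives exactly the expression you state, and the cancellation of $-de^{-2T}$ against the log bound is correct.
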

\begin{proof}
For any \( t > 0 \), we can use Jensen’s inequality to bound the entropy of \( p_t \):

\begin{align*}
    \int_{\mathbb{R}^d} p_t(\vx) \log p_t(\vx) \, d\vx &= \int_{\mathbb{R}^d} \left( \int_{\mathbb{R}^d} p_{t|0}(\vx|\vx_0) dP(\vx_0) \right) \log \left( \int_{\mathbb{R}^d} p_{t|0}(\vx|\vx_0) dP(\vx_0) \right) d\vx \\
&\leq \int_{\mathbb{R}^d} \left( \int_{\mathbb{R}^d} p_{t|0}(\vx|\vx_0) \log p_{t|0}(\vx|\vx_0) dP(\vx_0) \right) d\vx \\
& = \int_{\mathbb{R}^d} \left( \int_{\mathbb{R}^d} p_{t|0}(\vx|\vx_0) \log p_{t|0}(\vx|\vx_0) d\vx \right) dP(\vx_0). \\
\end{align*}

\noindent
Since for the considered OU proces, we have  \( \rvx_t | \rvx_0 \sim \mathcal{N}(\alpha_t x_0, \sigma_t^2 I_d) \), where $\sigma_t^2=1-e^{-t}$, we have
\[
\int_{\mathbb{R}^d} p_{t|0}(\vx|\vx_0) \log p_{t|0}(\vx|\vx_0) \, d\vx = -\frac{d}{2} \log(2\pi \sigma_t^2) - \frac{d}{2}.
\]
Thus,
\[
\int_{\mathbb{R}^d} p_t(\vx) \log p_t(\vx) \, d\vx \leq -\frac{d}{2} \log(2\pi \sigma_t^2) - \frac{d}{2}.
\]
Therefore,
\[
KL(p_t \parallel \gamma_d) = \int_{\mathbb{R}^d} p_t(\vx) \log p_t(\vx) \, d\vx + \mathbb{E}_{p_t} \left[ \|\vx\|^2_2 + \frac{d}{2} \log(2\pi) \right]
\leq \frac{d}{2} \log \sigma_t^{-2} + \frac{1}{2} (m_2 - d).
\]
From the exponential convergence of Langevin dynamics with a strongly log-concave stationary distribution, we obtain
\[
KL(p_T \parallel \gamma_d) \leq e^{-T + t} \left( \frac{d}{2} \log \sigma_t^{-2} + \frac{1}{2} (m_2 - d) \right).
\]
By choosing \( t = \log 2 \), we have
\[
e^t \log \left( \frac{1}{\sigma_t^2} \right) \leq 1.
\]
Thus,
\[
KL(p_T \parallel \gamma_d) \leq e^{-T} (d + m_2).
\]
\end{proof}

\noindent
\subsection{Proving Theorem \ref{thm:smooth_score_case}.} 
\label{proving_thm_3.1}
Given the above lemmas corresponding to the error components, we now provide the proof for Theorem 3.3 as follows:
\begin{proof}
From {Lemma \ref{lemma_second_KL_data_processing_ineq}}, we have:
\begin{align*}
    \KL{{p}_{t_0}}{\hat{p}_{t_0}}
    \leq & \KL{{p}_{t_1, t_2, ..., t_{K}}}{\hat{p}_{t_1, t_2, ..., t_{K}}}\\
    =& \KL{{p}_{t_K}}{\hat{p}_{t_K}} + 
   \E_{{p}_{t_1,..,t_{K}}} \left[ \sum_{k=1}^K
       \KL{{p}_{k-1|k}(\cdot|{\rvx}_{k})}{\hat{p}_{k-1|k}(\cdot|{\rvx}_k)}\right]
\end{align*}

\noindent
From {Lemma \ref{lemma_first_kl_bw_conditionals}}, the conditional KL divergence between ${p}_{k-1|k}$ and $\hat{p}_{k-1|k}$ is given by:
\[
    \KL{{p}_{k-1|k}(\cdot|{\rvx}_{k})}{\hat{p}_{k-1|k}(\cdot|{\rvx}_k)}
    =  e^{2(t_{k-1}'-t_{k-1})}
    \frac{\|
      f(t_{k-1}',t_k,{\rvx}_k)-\hat{f}(t_{k-1}',t_k, {\rvx}_k)\|_2^2}
    {2 (1  -e^{2(t_{k-1}'-t_{k-1})})}     
  \]
Substituting this into the sum in {Lemma \ref{lemma_second_KL_data_processing_ineq}}, we get:
\[
\E_{{p}_{t_1,..,t_{K}}} \left[ \sum_{k=1}^K
       \KL{{p}_{k-1|k}(\cdot|{\rvx}_{k})}{\hat{p}_{k-1|k}(\cdot|{\rvx}_k)}\right] = 
\sum_{k=1}^K \frac{e^{2(t_k' - t_k)}}{2(1 - e^{2(t_k' - t_k)})} \mathbb{E}_{{p}_{t_1,..,t_{K}}} \| f(t_k', t_k, \rvx_k) - \hat{f}(t_k', t_k, \rvx_k) \|_2^2.
\]
From {Lemma \ref{main_lemma_smooth}}, we know:
\[
  \E_{{p}_{t_1,.., t_{K}}}  
  \|
  f(t_{k-1}',t_k,{\rvx}_k)-\hat{f}(t_{k-1}',t_k,{\rvx}_k)\|_2^2  < e^2({h'}_k^2\varepsilon_{score}) =  {O}\left( \frac{ \varepsilon^2_{score}}{L^2}\right) 
\]
when $h_k < h'_k < \frac{1}{2(L+1)}$. Let us denote the upper bound on this term as $Q =e^{2}{h'}_k^2\varepsilon_{score}^2$ for all $k$.  Therefore, we now have:
\[
E_{{p}_{t_1,..,t_{K}}} \left[ \sum_{k=1}^K
       \KL{{p}_{k-1|k}(\cdot|{\rvx}_{k})}{\hat{p}_{k-1|k}(\cdot|{\rvx}_k)}\right]\leq  Q \sum_{k=1}^K\frac{e^{-2(t_k-t'_k)}}{2(1-e^{-2(t_k-t'_k)})}
\]
\end{proof}

\noindent
\textbf{Bounding $KL({p}_{t_K}\|\hat{p}_{t_K})$}. Assuming that we start from a normal distribution as an approximateand taking $h_k=\tilde{O}(1/L)$, after running for $K=TL$ iterations (with $T$ being the total time) using {Lemma \ref{convergence_of_ou}}, we have:
\[
KL(\hat{p}_{t_K}\|p_{t_K}) = KL(p_{t_K}\|\gamma^d) \leq ({d}+m_2) \text{exp}(-T)
\]
Therefore, now have the following bound:
\[
KL(p_{t_0}\|\hat{p}_{t_0}) \leq (d+m_2) e^{-T} +  Q \sum_{k=1}^K\frac{e^{-2(t_k-t'_k)}}{2(1-e^{-2(t_k-t'_k)})} \leq (d+m_2) e^{-T} +  Q \sum_{k=1}^K\frac{1}{4(t_k-t'_k)}
\]
where the last inequality uses the fact $e^x\geq 1+x$ after multiplying the numerator and denominator with $e^{2(t_k-t'_k)}$. Substituting the value of $Q$ ow we have:
\begin{align*}
    KL(p_{t_0}\|\hat{p}_{t_0}) \leq (d+m_2) e^{-T} +  e^2{h'}_k^2\varepsilon^2_{score} \sum_{k=1}^K\frac{1}{4(t_k-t'_k)}
\end{align*}
Now choosing $K=2(L+1)\log\left(\frac{L(d+m_2)}{\varepsilon_{score}}\right)$, $t_k-t'_k = \frac{1}{K}$, we have: 
\[
KL(p_{t_0}\|\hat{p}_{t_0}) \leq {O}\left(\frac{\varepsilon^2_{score}}{L^2}\cdot K^2\right) = {O}\left({\varepsilon^2_{score}}\log^2\left(\frac{L(d+m_2)}{\varepsilon_{score}}\right)\right)
\]

\vspace{0.2in}
\section{ Proof of theorem \ref{thm:non_smooth_3.3}}
\label{appendix_non_smooth}
\noindent
The proof in this part also similar to the proof in the previous section, however, since we do not have an assumption on the smoothness of the score function, we need to find an alternate way to control the $\|\vs_t(\rvx_t)-{\vs}_t(\hat{\rvx}_t)\|^2$ term in Lemma \ref{main_lemma_smooth}. For this, we take inspiration from literature \cite{chen2023improved,benton2024nearly} on SDE based diffusion analysis which analyses in the absence of smoothness assumptions. We begin by first providing a lemma adapted from \cite{chen2023improved} that bounds the Gaussian perturbation of a given probability distribution in $d-$dimension as follows. 

\subsection{Error due to empirical PF-ODE (Eq. \ref{empirical_pf_ode}): Non-smooth case}
\label{error_due_to_empirical_non_smooth}

\begin{lemma}
\label{adapted_from_improved}
\textbf{(Taken from \cite{chen2023improved})}. Let $P$ be a probability measure on $\mathbb{R}^d$. Consider the density of its Gaussian perturbation
\[
p_\sigma(\vx) \propto \int_{\mathbb{R}^d} \exp \left(- \frac{\|\vx - \vy\|^2}{2\sigma^2} \right) dP(\vy).
\]
Then for $\rvx \sim p_\sigma$, we have the sub-exponential norm bound
\[
\|\nabla^2 \log p_\sigma(\rvx)\|_{F, \psi_1} \leq \frac{d}{\sigma^2},
\]
where $\| \cdot \|_{F, \psi_1} = \| \| \cdot \|_F \|_{\psi_1}$ denotes the sub-exponential norm of the Frobenius norm of a random matrix.
\end{lemma}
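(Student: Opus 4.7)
The plan is to reduce the Hessian of $\log p_\sigma$ to an affine function of the posterior covariance via Tweedie's formula, bound the Frobenius norm by the trace using positive semidefiniteness, and then import the sub-exponential MGF of a chi-squared variable through Jensen's inequality.

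First, I would differentiate under the integral sign to obtain Tweedie's identity
\begin{equation*}
\nabla \log p_\sigma(\vx) \;=\; -\frac{1}{\sigma^2}\bigl(\vx - \mathbb{E}[\vy\mid \vx]\bigr),
\end{equation*}
where the conditional expectation is taken with respect to the posterior $p(\vy\mid \vx)\propto \exp(-\|\vx-\vy\|^2/(2\sigma^2))\,dP(\vy)$. A second differentiation, using the fact that $\nabla_\vx \mathbb{E}[\vy\mid\vx] = \sigma^{-2}\,\mathrm{Cov}(\vy\mid\vx)$, yields
\begin{equation*}
\nabla^2 \log p_\sigma(\vx) \;=\; -\frac{1}{\sigma^2}I \;+\; \frac{1}{\sigma^4}\,\mathrm{Cov}(\vy\mid \vx).
\end{equation*}
The first term is deterministic with Frobenius norm $\sqrt{d}/\sigma^2$, so the whole question reduces to controlling $\sigma^{-4}\|\mathrm{Cov}(\vy\mid \vx)\|_F$ as a random variable in $\vx$.

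The second step exploits that $\mathrm{Cov}(\vy\mid\vx)$ is PSD, so $\|\mathrm{Cov}(\vy\mid\vx)\|_F \le \mathrm{tr}\,\mathrm{Cov}(\vy\mid\vx)=\mathbb{E}[\|\vy-\mathbb{E}[\vy\mid\vx]\|^2\mid\vx]$. Since the conditional mean is the MMSE estimator, this is further upper bounded by $\mathbb{E}[\|\vy-\vx\|^2\mid\vx]$. Writing the joint distribution of $(\vy,\vx)$ as $\vx = \vy + \sigma \vg$ with $\vy\sim P$ and $\vg\sim\mathcal{N}(0,I_d)$ independent, we have $\|\vy-\vx\|^2 = \sigma^2\|\vg\|^2$, hence
\begin{equation*}
\frac{1}{\sigma^4}\|\mathrm{Cov}(\vy\mid\vx)\|_F \;\le\; \frac{1}{\sigma^2}\,\mathbb{E}[\|\vg\|^2\mid \vx].
\end{equation*}

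The final step converts the pointwise bound into a sub-exponential tail bound. Jensen's inequality applied to the convex map $u\mapsto e^{\lambda u}$ gives $\exp(\lambda\,\mathbb{E}[\|\vg\|^2\mid\vx]) \le \mathbb{E}[\exp(\lambda \|\vg\|^2)\mid\vx]$, and taking expectations over $\vx\sim p_\sigma$ yields
\begin{equation*}
\mathbb{E}_{\vx\sim p_\sigma}\bigl[\exp\bigl(\lambda\,\mathbb{E}[\|\vg\|^2\mid\vx]\bigr)\bigr] \;\le\; \mathbb{E}[\exp(\lambda\|\vg\|^2)] \;=\; (1-2\lambda)^{-d/2}
\end{equation*}
for $\lambda<1/2$. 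Choosing $\lambda$ of order $1/d$ makes this expression bounded by $2$, so $\|\mathbb{E}[\|\vg\|^2\mid\vx]\|_{\psi_1}\lesssim d$. Combined with the deterministic contribution, we conclude $\|\nabla^2\log p_\sigma(\vx)\|_{F,\psi_1} \lesssim d/\sigma^2$, matching the claimed bound (up to an absolute constant absorbed into the definition of $\psi_1$). The main technical care needed is the Jensen reduction and verifying the chi-square MGF scaling, since naively bounding the random matrix pointwise would lose the right $d$ dependence; the PSD-to-trace estimate is what makes the Frobenius norm tractable.
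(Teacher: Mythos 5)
Your proposal is correct and follows essentially the same route as the paper's sketch (which itself defers to Lemma 12 of \cite{chen2023improved}): both express the Hessian via Tweedie's identity as $-\sigma^{-2}I + \sigma^{-4}\mathrm{Cov}(\vy\mid\vx)$, dominate the posterior covariance by $\mathbb{E}[\|\vy-\vx\|^2\mid\vx]$ using PSD/Jensen-type inequalities, and then invoke the marginal Gaussianity of $(\vx-\vy)/\sigma$ to get the sub-exponential bound. The only cosmetic difference is that you certify the $\psi_1$ norm directly through the chi-squared MGF, whereas the paper's sketch goes through the $q$-th moment bounds for all integers $q$; these are two equivalent characterizations of sub-exponentiality and lead to the same $d/\sigma^2$ scaling.
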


\begin{proof}
We just provide a sketch here for reference. For the detailed proof please refer Lemma 12 in \cite{chen2023improved}.
First, we will have the following equation for conditional density
$\tilde{P}_\sigma(\vy|\vx)$:
\[
d\tilde{P}_\sigma(\vy|\vx) \propto \exp \left(- \frac{\|\vy - \vx\|^2}{2\sigma^2} \right) dP(\vy).
\]
Now, just writing $\nabla^2 \log p_\sigma$ in terms of $\operatorname{Var}_{\tilde{P}_\sigma(\vy|\vx)} \left( \frac{\vy}{\sigma^2} \right)$ and using the following inequality for any integer $q$:
\[
\mathbb{E}_{p_\sigma(\vx)} \left[ \| \operatorname{Var}_{\tilde{P}_\sigma(\vy|\vx)} ( \vy / \sigma^2 ) \|_F^q \right] \leq \frac{1}{\sigma^{2p}} \mathbb{E}_{p_\sigma(x)} \left[ \mathbb{E}_{\tilde{P}_\sigma(\vy|\vx)} \| (\vy - \vx)/\sigma (\vy - \vx)/\sigma^\top \|_F^q \right].
\]
and using the fact that $\frac{\rvy-\rvx}{\sigma}$ is normally distributed, we can derive the result.

\end{proof}

\noindent
We now use the above lemma to bound the expectation of our target term $\|\vs_t(\rvx_t)-{\vs}_t(\hat{\rvx}_t)\|^2$ and provide the following lemma.

\begin{lemma}
\label{expected_smooth}
 We have:
 \begin{equation}
     \mathbb{E}\|s_t(\rvx_t)-s_t(\hat{\rvx}_t)\|^2 \leq \frac{d^2}{\sigma_t^4}\mathbb{E}\left[ \|\Delta_t\|^2\exp\left(\frac{\|\Delta_t\|^2}{2\sigma_t^2}\right) \right]
 \end{equation}
 where $\Delta_t = \rvx_t-\hat{\rvx}_t$ as defined above.
\end{lemma}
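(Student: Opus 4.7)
My plan is to reduce the squared score-difference to an integral of the Hessian $\nabla^2\log p_t$ along the segment from $\hat\rvx_t$ to $\rvx_t$, and then turn the pointwise evaluation into an expectation under $p_t$ via a change of measure, at which point Lemma~\ref{adapted_from_improved} delivers the $d^2/\sigma_t^4$ factor. Concretely, since $s_t=\nabla\log p_t$, the fundamental theorem of calculus gives $s_t(\rvx_t)-s_t(\hat\rvx_t)=\int_0^1\nabla^2\log p_t(\hat\rvx_t+u\Delta_t)\,\Delta_t\,du$. Squaring, applying Jensen's inequality in $u$, and using the elementary bound $\|Av\|\le\|A\|_F\|v\|$ yields
\begin{equation*}
\|s_t(\rvx_t)-s_t(\hat\rvx_t)\|^2 \le \|\Delta_t\|^2\int_0^1 \bigl\|\nabla^2\log p_t(\hat\rvx_t+u\Delta_t)\bigr\|_F^2\,du.
\end{equation*}
Taking expectations and invoking Fubini reduces the task to bounding $\E\bigl[\|\Delta_t\|^2\,\|\nabla^2\log p_t(\hat\rvx_t+u\Delta_t)\|_F^2\bigr]$ uniformly for $u\in[0,1]$.

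For the inner expectation I would exploit the closed-form identity $\nabla^2\log p_t(\vy)=\sigma_t^{-4}\Cov(\alpha_t\rvx_0\mid\rvx_t=\vy)-\sigma_t^{-2}I$ valid for the Gaussian-convolved density $p_t$, and rewrite the conditional law at the shifted point $\hat\rvx_t+u\Delta_t$ in terms of the conditional law at $\rvx_t$ using the Radon--Nikodym density
\begin{equation*}
\frac{dP(\rvx_0\mid\rvx_t=\hat\rvx_t+u\Delta_t)}{dP(\rvx_0\mid\rvx_t=\rvx_t)}\;\propto\;\exp\!\Bigl(-\tfrac{\|\hat\rvx_t+u\Delta_t-\alpha_t\rvx_0\|^2-\|\rvx_t-\alpha_t\rvx_0\|^2}{2\sigma_t^2}\Bigr).
\end{equation*}
After absorbing the linear cross term by a Cauchy--Schwarz / completing-the-square step, this ratio is dominated by $\exp(\|\Delta_t\|^2/(2\sigma_t^2))$, which pulls the exponential factor out of the inner expectation. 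The remaining $\E_{\rvx\sim p_t}\|\nabla^2\log p_t(\rvx)\|_F^2$ is then controlled by Lemma~\ref{adapted_from_improved}: the sub-exponential norm bound on $\|\nabla^2\log p_t(\rvx)\|_F$ yields a second-moment bound of order $d^2/\sigma_t^4$ via the standard sub-exponential-to-$L^2$ inequality, giving the claimed estimate.

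The main obstacle is this change-of-measure step: the linear cross term $\langle u\Delta_t,\rvx_t-\alpha_t\rvx_0\rangle/\sigma_t^2$ in the Radon--Nikodym density is not automatically dominated by the symmetric Gaussian envelope in $\|\Delta_t\|^2/\sigma_t^2$, so one needs either a symmetrising argument along the segment $u\mapsto\hat\rvx_t+u\Delta_t$ or a direct pointwise bound on $\|\nabla^2\log p_t(\vy)\|_F$ at shifted $\vy$ that preserves both the $d/\sigma_t^2$ scaling and the exponential tail. Constant tracking is also delicate, since squaring the score difference doubles the exponent in any pointwise envelope; one should therefore start from $\exp(\|\Delta_t\|^2/(4\sigma_t^2))$ at the level of $\|s_t(\rvx_t)-s_t(\hat\rvx_t)\|$ in order to recover the stated $\exp(\|\Delta_t\|^2/(2\sigma_t^2))$ after squaring.
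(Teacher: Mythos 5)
You have the right ingredients --- FTC on $s_t=\nabla\log p_t$, Jensen in the interpolation parameter, the interpretation of the Hessian as a posterior covariance, a Gaussian change of measure, and Lemma~\ref{adapted_from_improved} --- and you correctly flag the gap yourself: the Radon--Nikodym ratio between the conditional laws of $\rvx_0$ at $\hat{\rvx}_t+u\Delta_t$ and at $\rvx_t$ contains a linear cross term $\langle u\Delta_t,\,\rvx_t-\alpha_t\rvx_0\rangle/\sigma_t^2$ that is not dominated pointwise by any envelope of the form $\exp(c\|\Delta_t\|^2/\sigma_t^2)$, so the inner bound on $\|\nabla^2\log p_t(\hat{\rvx}_t+u\Delta_t)\|_F^2$ that you need does not go through as written, and neither of the repairs you gesture at (symmetrisation, or a shifted pointwise Hessian envelope) is actually carried out.

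The paper avoids this obstruction with a step your plan omits. After FTC/Jensen it does \emph{not} attempt to control $\|\nabla^2\log p_t(\rvx_t+a\Delta_t)\|_F^2$ directly; it instead changes measure on the \emph{outer} expectation, rewriting $\E\|\nabla\vs_t(\rvx_t+a\Delta_t)\Delta_t\|^2=\E\bigl[\|\nabla\vs_t(\rvx_t)\Delta_t\|^2\,W_a\bigr]$ where $W_a$ is the density ratio of the joint laws of $(\rvx_t+a\Delta_t,\Delta_t)$ and $(\rvx_t,\Delta_t)$, and \emph{then} applies Cauchy--Schwarz to decouple $\bigl(\E\|\nabla\vs_t(\rvx_t)\|^4\bigr)^{1/2}$ from $\bigl(\E[\|\Delta_t\|^4 W_a^2]\bigr)^{1/2}$. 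The first factor is now evaluated at $\rvx_t\sim p_t$, so Lemma~\ref{adapted_from_improved} applies cleanly and supplies the $(d/\sigma_t^2)^2$ scaling; the second is reduced, by conditioning on $(\rvx_0,\Delta_t)$ and the data-processing inequality, to the second moment of a ratio of two same-variance Gaussians, which produces the exponential factor \emph{in expectation}, never pointwise. That top-level Cauchy--Schwarz decoupling --- not the completing-the-square you invoke to absorb a cross term --- is precisely what resolves the obstacle you identified; without it your argument stalls exactly where you say it does.
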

\begin{proof}
    We can bound the difference using the hessian as follows:
    \[
        \vs_t(\vx_t) - {\vs}_t(\hat{\vx}_t) = \int_{0}^1 \nabla \vs_t(\vx_t+a(\hat{\vx}_t-\vx_t))(\hat{\vx}_t-\vx_t)da
    \]
    Thus, we would have:
    \[
   \mathbb{E} \|\vs_t(\rvx_t)-\vs_t(\hat{\rvx}_t)\|^2 \leq \int_{0}^1 \mathbb{E}\|\nabla \vs_t(\rvx_t+a\Delta_t)\Delta_t\|^2da
    \]

    \noindent
    Bounding the term inside the integral in the RHS using change of measure we have:
    \begin{align*}
    \mathbb{E}\|\nabla \vs_t(\rvx_t+a\Delta_t)\Delta_t\|^2 &= \mathbb{E}\left[\|\nabla \vs_t(\rvx_t)\Delta_t\|^2 \frac{d P_{\rvx_t+a\Delta_t,\Delta_t}(\rvx_t,\Delta_t)}{d P_{x_t,\Delta_t}(\rvx_t,\Delta_t)} \right] \\
    &\leq \left( \underbrace{\mathbb{E}\|\nabla \vs_t(\rvx_t)\|^4}_{T_1} \underbrace{\mathbb{E}\left( \|\Delta_t\|^2\frac{d P_{\rvx_t+a\Delta_t,\Delta_t}(\rvx_t,\Delta_t)}{d P_{\rvx_t,\Delta_t}(\rvx_t,\Delta_t)} \right)^2}_{T_2}  \right)^{1/2}
    \end{align*}
\textbf{Bounding $T_1$:} Using Lemma \ref{adapted_from_improved}. Therefore, we can now bound $T_1$ as:
\[
T_1 \leq \mathbb{E}\left(\frac{d}{\sigma_t^2}\right)^4  = \left(\frac{d}{\sigma_t^2}\right)^4
\]
\textbf{Bounding $T_2$:} We have using the data processing inequality:
\begin{align*}
    \mathbb{E}\left( \frac{d P_{\rvx_t+a\Delta_t,\Delta_t}(\rvx_t,\Delta_t)}{d P_{\rvx_t,\Delta_t}(\rvx_t,\Delta_t)} \right)^2 &= \mathbb{E}\left( \frac{d P_{\rvx_t+a\Delta_t|\Delta_t}(\rvx_t|\Delta_t)}{d P_{\rvx_t|\Delta_t}(\rvx_t|\Delta_t)} \right)^2 \\
    &\leq  \mathbb{E}\left( \frac{d P_{\rvx_t+a\Delta_t|\Delta_t,\rvx_0}(\rvx_t|\Delta_t,\rvx_0)}{d P_{\rvx_t|\Delta_t,\rvx_0}(\rvx_t|\Delta_t,\rvx_0)} \right)^2 \\
    &= \mathbb{E}\left( \frac{d P_{\rvx_t+a\Delta_t|\Delta_t,\rvx_0}(\rvx_t|\Delta_t,\rvx_0)}{d P_{\rvx_t|\rvx_0}(\rvx_t|,\rvx_0)} \right)^2
\end{align*}
Therefore, we will have:
\[
\mathbb{E}\left( \|\Delta_t\|^2 \frac{d P_{\rvx_t+a\Delta_t,\Delta_t}(\rvx_t,\Delta_t)}{d P_{\rvx_t,\Delta_t}(\rvx_t,\Delta_t)} \right)^2 \leq \mathbb{E}\left( \|\Delta_t\|^2 \frac{d P_{\rvx_t+a\Delta_t|\Delta_t,\rvx_0}(\rvx_t|\Delta_t,\rvx_0)}{d P_{\rvx_t|\rvx_0}(\rvx_t|,\rvx_0)} \right)^2
\]
Now we know that $\rvx_t+a\Delta_t|(\Delta_t,\rvx_0)\sim \mathcal{N}(\alpha_t^{-1}\rvx_0+a\Delta_t,\sigma_t^2)$ and $\rvx_t|\rvx_0\sim \mathcal{N}(\alpha_t^{-1}\rvx_0,\sigma_t^2I_d)$. Therefore, we have:
\[
\mathbb{E}\left( \|\Delta_t\|^2\frac{d P_{\rvx_t+a\Delta_t|\Delta_t,\rvx_0}(\rvx_t|\Delta_t,\rvx_0)}{d P_{\rvx_t|\rvx_0}(\rvx_t|,\rvx_0)} \right) = \mathbb{E}\left[\|\Delta_t\|^2\exp\left(\frac{a^2\|\Delta_t\|^2}{2\sigma_t^2}\right)\right] 
\]
Therefore, we have:
\begin{align*}
\mathbb{E}\|\nabla s_t(_t+a\Delta_t)\Delta_t\|^2 &\leq \left(\frac{d}{\sigma_t^2}\right)^2 \mathbb{E}\left[\|\Delta_t\|^2\exp\left(\frac{a^2\|\Delta_t\|^2}{2\sigma_t^2}\right)\right]\\
& \leq \left(\frac{d}{\sigma_t^2}\right)^2 \mathbb{E}\left[\|\Delta_t\|^2\exp\left(\frac{\|\Delta_t\|^2}{2\sigma_t^2}\right)\right]
\end{align*}
Now integrating $a$ from $0$ to $1$ gives the desired result. \\
\end{proof}

\noindent
We will now provide a version of Lemma \ref{main_lemma_smooth} which doesn't require smoothness assumption on the score function. Here, we will use the previous lemma to instead bound the target term. \\

\begin{lemma}
\label{main_lemma_non_smooth}
   For any $\delta>0$ with $\varepsilon_{score} = O(\delta)$, we can choose $t'_{k-1}$ such that $h'_k=t_{k}-t'_{k-1}  < \frac{1}{d^2}$, and consequently discretization 
$h_{k} = t_{k}-t_{k-1} < \frac{1}{d^2}$ (since $t'_{k-1}<t_{k-1}$ thus $h_k$<$h'_k$) and have:
\[
  \E_{p_{t_1,...,t_K}}  
  \|
  f(t_{k-1}',t_k,{\rvx}_k)-\hat{f}(t_{k-1}',t_k,{\rvx}_k)\|_2^2  \leq e^4{h'}_k^2\varepsilon^2_{score} 
\]
\end{lemma}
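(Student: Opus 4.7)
The plan is to mirror the proof of Lemma \ref{main_lemma_smooth} line by line, but replace the Lipschitz-based control on $\|\hat{\vs}_t(\rvx_t)-\hat{\vs}_t(\hat{\rvx}_t)\|$ with an \emph{expected} bound coming from Lemma \ref{expected_smooth} applied to the \emph{true} score $\vs_t$. Concretely, set $\Delta_t = \rvx_t-\hat{\rvx}_t$ with $\Delta_{t_k}=0$, and as in Lemma \ref{main_lemma_smooth} compute
\[
    \tfrac{d}{dt}\|\Delta_t\|^2 \;=\; 2\|\Delta_t\|^2 + 2\langle \Delta_t,\,\vs_t(\rvx_t)-\hat{\vs}_t(\hat{\rvx}_t)\rangle.
\]
Since we no longer have a pointwise Lipschitz constant on $\hat{\vs}_t$, I would instead split $\vs_t(\rvx_t)-\hat{\vs}_t(\hat{\rvx}_t)=(\vs_t(\rvx_t)-\vs_t(\hat{\rvx}_t))+(\vs_t(\hat{\rvx}_t)-\hat{\vs}_t(\hat{\rvx}_t))$, apply Cauchy-Schwarz and Young's inequality with weight $h'_k$, and take expectations to get an inequality of the schematic form
\[
    \tfrac{d}{dt}\mathbb{E}\|\Delta_t\|^2 \;\le\; \Bigl(2+\tfrac{2}{h'_k}\Bigr)\mathbb{E}\|\Delta_t\|^2 + 2h'_k\,\mathbb{E}\|\vs_t(\rvx_t)-\vs_t(\hat{\rvx}_t)\|^2 + 2h'_k\,\varepsilon_{score}^2,
\]
where the last piece uses Assumption \ref{assumption_score_est}.

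Next I would invoke Lemma \ref{expected_smooth} to replace the middle term by $\tfrac{d^2}{\sigma_t^4}\mathbb{E}\bigl[\|\Delta_t\|^2\exp(\|\Delta_t\|^2/(2\sigma_t^2))\bigr]$. This is the core place where the non-smoothness enters, and it also produces the dimensional factor $d$ promised by the theorem. Combining everything yields a Gronwall-ready inequality
\[
    \tfrac{d}{dt}\mathbb{E}\|\Delta_t\|^2 \;\le\; \Bigl(2+\tfrac{2}{h'_k}+\tfrac{2h'_k d^2}{\sigma_t^4}\,\Psi_t\Bigr)\mathbb{E}\|\Delta_t\|^2 + 2h'_k\,\varepsilon_{score}^2,
\]
where $\Psi_t$ denotes the conditional expectation of the exponential factor.

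The main obstacle is controlling $\Psi_t$; this requires a bootstrap argument. I would first posit that $\|\Delta_t\|^2 \lesssim \sigma_t^2$ on the sub-interval $[t'_{k-1},t_k]$, so that $\Psi_t = O(1)$; then the Gronwall factor becomes
\[
    \exp\!\Bigl(\bigl(2+\tfrac{2}{h'_k}+O(d^2 h'_k/\sigma_t^4)\bigr)\,h'_k\Bigr),
\]
which under the hypothesis $h'_k < 1/d^2$ (together with $\sigma_t^2$ being bounded below on the sub-interval, which is where the $\delta>0$ and the choice of $t'_{k-1}$ enter) stays bounded by $e^4$. That delivers the target estimate $\mathbb{E}\|\Delta_{t'_{k-1}}\|^2 \le e^4 (h'_k)^2 \varepsilon_{score}^2$.

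Finally, I would close the bootstrap: since $\Delta_{t_k}=0$, the bound just derived shows $\mathbb{E}\|\Delta_t\|^2 = O(h'_k\,\varepsilon_{score})^2 \ll \sigma_t^2$ uniformly on $[t'_{k-1},t_k]$ provided $\varepsilon_{score}=O(\delta)$ and $\sigma_t^2$ is bounded below, so the posited smallness of $\|\Delta_t\|$ is consistent and the exponential factor is indeed $O(1)$. As in the proof of Lemma \ref{main_lemma_smooth}, invoking the change-of-variables identity from Lemma \ref{change_of_var_lemma} converts the bound on $\mathbb{E}\|\Delta_{t'_{k-1}}\|^2$ into the claimed bound on $\mathbb{E}\|f(t'_{k-1},t_k,\rvx_k)-\hat{f}(t'_{k-1},t_k,\rvx_k)\|^2$.
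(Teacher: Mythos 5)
Your plan reproduces the paper's proof essentially line by line: the same change-of-variables reduction via Lemma~\ref{change_of_var_lemma}, the same decomposition $\vs_t(\rvx_t)-\hat{\vs}_t(\hat{\rvx}_t)=(\vs_t(\rvx_t)-\vs_t(\hat{\rvx}_t))+(\vs_t(\hat{\rvx}_t)-\hat{\vs}_t(\hat{\rvx}_t))$, the same two applications of Young's inequality with weight $h'_k$, the same invocation of Lemma~\ref{expected_smooth} to replace the Lipschitz constant by the $d^2/\sigma_t^4$ factor, and the same Gronwall step with the dominant exponent controlled by keeping $d^2 h'^2_k/\sigma_t^4$ of order one. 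So this is the paper's route, not a new one.

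The one place where you are more careful than the paper is the handling of the term $\mathbb{E}\bigl[\|\Delta_t\|^2\exp\bigl(\|\Delta_t\|^2/(2\sigma_t^2)\bigr)\bigr]$: the paper absorbs $\exp(\cdot)-1$ into an extra factor of $2$ under an unstated ``when $\rvx,\rvy$ are close'' assumption, while you explicitly flag this as a bootstrap. Be aware, though, that as stated the bootstrap does not quite close: you posit a \emph{pointwise} bound $\|\Delta_t\|^2\lesssim\sigma_t^2$ in order to treat the exponential as $O(1)$, but the estimate you then derive is only an $L^2$ bound $\mathbb{E}\|\Delta_t\|^2=O(h'_k\varepsilon_{score})^2$, which by itself does not control the exponential moment if $\|\Delta_t\|$ had heavy tails. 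The paper has the same unaddressed step, so this is a shared informality rather than an error in your plan specifically; a fully rigorous version would need a sub-exponential (not just second-moment) control on $\|\Delta_t\|$, or a truncation argument, to legitimately drop the exponential factor. Also note the step-size threshold that actually makes the Gronwall exponent $O(1)$ in the paper's calculation is $h'_k<\sigma^2_{t'_{k-1}}/d$ (matching Theorem~\ref{thm:non_smooth_3.3}); the $h'_k<1/d^2$ in the lemma statement only recovers this if $\sigma^2_{t'_{k-1}}\ge 1/d$, which is exactly the lower bound on $\sigma_t^2$ you correctly identify as entering through $\delta>0$.
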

\begin{proof}
Similar to proof of Lemma \ref{main_lemma_smooth} we begin with:
\begin{equation*}
    f(t_{k-1}^\prime, t_k, \rvx_k) = \vx_{t_{k-1}^\prime},  \hat{f}(t_{k-1}',t_k,{\rvx}_k) = \hat{\vx}_{t^\prime_{k-1}}
\end{equation*}
Now, since $f$ and $\hat{f}$ are deterministic mappings being applied to $\rvy_k$ here, using {Lemma \ref{change_of_var_lemma}}, we can just rewrite this as:

\begin{equation*}
    \begin{aligned}
        &\E_{\rvx_k \sim {p}_{t_1,.., t_{K}}}\left\|f(t_{k-1}',t_k,{\rvx}_k)-\hat{f}(t_{k-1}',t_k,{\rvx}_k)\right\|^2 = \E_{\rvx_{t^\prime_{k-1}}, \hat{\rvx}_{t^\prime_{k-1}}}\left\|\rvx_{t^\prime_{k-1}} - \hat{\rvx}_{t^\prime_{k-1}} \right\|^2 
    \end{aligned}
\end{equation*}
Now, to bound this, we use $\Delta_t$ to denote the difference between $x_t$ and $\hat{x}_t$: $\Delta_t=x_t-\hat{x}_t$. Then, we have the following differential equation based on the evolution of $\rvx_t$ and $\hat{\rvx}_t$:
\[
\frac{d \|\Delta_t\|^2}{dt} = 2\langle\Delta_t, \frac{d \Delta_t}{dt}\rangle = 2\|\Delta_t\|^2 + 2\langle\Delta_t,\vs_t(\rvx_t)-\hat{\vs}_t(\hat{\rvx}_t)\rangle \\
\]
Taking expecation w.r.t. $\rvx_k$ and then using Lemma \ref{change_of_var_lemma}, we have:
\begin{equation*}
\begin{aligned}
\mathbb{E}_{\rvx_k} \frac{d \|\Delta_t\|^2}{dt}  = \frac{d \mathbb{E}_{\rvx_k}\|\Delta_t\|^2}{dt}  &= 2\mathbb{E}_{\rvx_k}\|\Delta_t\|^2 + 2\mathbb{E}_{\rvx_k}\langle\Delta_t,\vs_t(\rvx_t)-\hat{\vs}_t(\hat{\rvx}_t)\rangle \\
    \end{aligned}
\end{equation*}

\noindent
Using Lemma \ref{change_of_var_lemma}, this can be further written as:
\begin{equation*}
\begin{aligned}
&\mathbb{E}_{\rvx_k}\frac{d \|\Delta_t\|^2}{dt} \\
&= 2\mathbb{E}\|\Delta_t\|^2 + 2\mathbb{E}\langle\Delta_t,\vs_t(\rvx_t)-\hat{\vs}_t(\hat{\rvx}_t)\rangle \\
    &\leq 2\mathbb{E}\|\Delta_t\|^2 + 2\mathbb{E}\left[\|\Delta_t\|\|\vs_t(\rvx_t)-\hat{\vs}_t(\hat{\rvx}_t)\| \right] \\
    &\leq 2\mathbb{E}\|\Delta_t\|^2 + 2\mathbb{E}\left[\|\Delta_t\|\left(\|\vs_t(\rvx_t)-{\vs}_t(\hat{\rvx}_t)\| +\|\vs_t(\hat{\rvx}_t)-\hat{\vs}_t(\hat{\rvx}_t)\|\right) \right]\\
    &\leq (2+\frac{1}{h'_k})\mathbb{E}\|\Delta_t\|^2 + 2\mathbb{E}\left[\|\Delta_t\|\left(\|\vs_t(\hat{\rvx}_t)-\hat{\vs}_t(\hat{\rvx}_t)\|\right)\right] + \mathbb{E}\left[\|\vs_t(\rvx_t)-{\vs}_t(\hat{\rvx}_t)\|^2\right] \qquad \text{(Young's Inequality)} \\
    &\leq (2+\frac{1}{h'_k} + \frac{1}{h'_k})\mathbb{E}\|\Delta_t\|^2 + h'_k\mathbb{E}\left[\|\vs_t(\hat{\rvx}_t)-\hat{\vs}_t(\hat{\rvx}_t)\|^2\right] + \mathbb{E}\left[\|\vs_t(\rvx_t)-{\vs}_t(\hat{\rvx}_t)\|^2\right] \qquad \text{(Young's Inequality)} \\
    &\leq \left(2+\frac{1}{h'_k}+\frac{1}{h'_k}\right)\mathbb{E} \|\Delta_t\|^2 + h'_k\mathbb{E}\|\vs_t(\hat{\rvx}_t)-\hat{\vs}_t(\hat{\rvx}_t)\|^2 + \frac{d^2h'_k}{\sigma_t^4}\mathbb{E}\left[\|\Delta_t\|^2\exp\left(\frac{\|\Delta_t\|^2}{2\sigma_t^2}\right) \right] \qquad \text{(Lemma \ref{expected_smooth})}\\
    &\leq \left(2+\frac{1}{h'_k}+\frac{1}{h'_k}\right)\mathbb{E} \|\Delta_t\|^2 + h'_k\varepsilon^2_{score} + \frac{d^2h'_k}{\sigma_t^4}\mathbb{E}\left[\|\Delta_t\|^2\exp\left(\frac{\|\Delta_t\|^2}{2\sigma_t^2}\right) \right] \qquad \qquad  \qquad\text{(Assumption \ref{assumption_score_est})}\\
    &=  \left(2+\frac{1}{h'_k}+\frac{1}{h'_k}\right)\mathbb{E} \|\Delta_t\|^2 + h'_k\varepsilon^2_{score} + \frac{d^2h'_k}{\sigma_t^4}\mathbb{E}\left[\|\Delta_t\|^2 \right]+ \frac{d^2h'_k}{\sigma_t^4}\mathbb{E}\left[\|\Delta_t\|^2\left(\exp\left(\frac{\|\Delta_t\|^2}{2\sigma_t^2}\right) -1 \right)\right]\\
    &\leq \left(2+\frac{1}{h'_k}+\frac{1}{h'_k}\right)\mathbb{E} \|\Delta_t\|^2 + h'_k\varepsilon^2_{score} + \frac{2d^2h'_k}{\sigma_t^4}\mathbb{E}\left[\|\Delta_t\|^2 \right]\\
    \end{aligned}
\end{equation*}

\noindent
where $h'_k = t_k-t'_{k-1}$. Now, applying Gronwall's inequality will result in:

\begin{align*}
\mathbb{E}\left[\left\| x_{t'_{k-1}} - \hat{x}_{t'_{k-1}} \right\|^2\right]
& \leq \exp\left(\left(2 + \frac{2d^2h'_k}{\sigma_{t'_{k-1}}^4} +  \frac{2}{h'_k} \right)h'_k\right)\left(  \int_{t'_{k-1}}^{t_k}  h'_{k}\, \varepsilon^2_{score} dt \right) \quad  \\
&\leq \exp\left(\left(2 + \frac{2d^2h'_k}{\sigma_{t'_{k-1}}^4} +  \frac{2}{h'_k} \right)h'_k\right)\left(    h'^2_{k}\, \varepsilon^2_{score} \right) \quad \\
\end{align*}
The exponential part is given by:
\[
 \exp\left(\left(2 + \frac{2d^2h'_k}{\sigma_{t'_{k-1}}^4} +  \frac{2}{h'_k} \right)h'_k\right) = \exp\left(2h_k' +  \frac{2d^2h'^2_k}{\sigma_{t'_{k-1}}^4} + 2\right).
\]
For large \( d \), the dominant term in the exponential is \( \frac{2d^2h'^2_k}{\sigma_{t'_{k-1}}^4} \). If \( h'^2_k \) does not decay sufficiently with \(\frac{d^2}{\sigma_{t'_{k-1}}^4} \), this term grows very rapidly. Thus, we need to control the first term in the exponential by constraining $h'_k < \frac{\sigma_{t'_{k-1}}^2}{d}$ resulting in
\begin{align}
    \E_{p_{t_1,...,t_K}}  
  \|
  f(t_{k-1}',t_k,{\rvx}_k)-\hat{f}(t_{k-1}',t_k,{\rvx}_k)\|_2^2 \leq e^4 {h'}_k^2\varepsilon^2_{score}
\end{align}
and thus,  the overall term is of the order ${O}(\frac{\sigma_{t'_{k-1}}^4\varepsilon^2_{score}}{d^2})$. \\

\end{proof}

\subsection{Proving Theorem \ref{thm:non_smooth_3.3}} 
\label{proving_thm_3.3}
Now, using the above lemmas we provide the proof of the Theorem \ref{thm:non_smooth_3.3}, which is quite similar in structure to Theorem \ref{thm:smooth_score_case} proof discussed in the previous section. 



\begin{proof}

From {Lemma \ref{lemma_second_KL_data_processing_ineq}}, we have:
\begin{align*}
    \KL{{p}_{t_0}}{\hat{p}_{t_0}}
    \leq & \KL{{p}_{t_1, t_2, ..., t_{K}}}{\hat{p}_{t_1, t_2, ..., t_{K}}}\\
    =& \KL{{p}_{t_K}}{\hat{p}_{t_K}} + 
   \E_{{p}_{t_1,..,t_{K}}} \left[ \sum_{k=1}^K
       \KL{{p}_{k-1|k}(\cdot|{\rvx}_{k})}{\hat{p}_{k-1|k}(\cdot|{\rvx}_k)}\right]
\end{align*}

\noindent
From {Lemma \ref{lemma_first_kl_bw_conditionals}}, the conditional KL divergence between ${p}_{k-1|k}$ and $\hat{p}_{k-1|k}$ is given by:
\[
    \KL{{p}_{k-1|k}(\cdot|{\rvx}_{k})}{\hat{p}_{k-1|k}(\cdot|{\rvx}_k)}
    =  e^{2(t_{k-1}'-t_{k-1})}
    \frac{\|
      f(t_{k-1}',t_k,{\rvx}_k)-\hat{f}(t_{k-1}',t_k, {\rvx}_k)\|_2^2}
    {2 (1  -e^{2(t_{k-1}'-t_{k-1})})}     
  \]
Substituting this into the sum in {Lemma \ref{lemma_second_KL_data_processing_ineq}}, we get:
\[
\E_{{p}_{t_1,..,t_{K}}} \left[ \sum_{k=1}^K
       \KL{{p}_{k-1|k}(\cdot|{\rvx}_{k})}{\hat{p}_{k-1|k}(\cdot|{\rvx}_k)}\right] = 
\sum_{k=1}^K \frac{e^{2(t_k' - t_k)}}{2(1 - e^{2(t_k' - t_k)})} \mathbb{E}_{{p}_{t_1,..,t_{K}}} \| f(t_k', t_k, \rvx_k) - \hat{f}(t_k', t_k, \rvx_k) \|_2^2.
\]
From {Lemma \ref{main_lemma_non_smooth}} , we know that for $t'_k\geq\delta>0$:
\[
\E_{{p}_{t_1,.., t_{K}}}  
  \|
  f(t_{k-1}',t_k,{\rvx}_k)-\hat{f}(t_{k-1}',t_k,{\rvx}_k)\|_2^2 \leq e^4{h'}_k^2\varepsilon^2_{score} = O\left({\frac{\varepsilon^2_{score}\sigma^4_{t'_{k-1}}}{d^2}}{}\right)
\]
 when $h_k < h'_k < \frac{\sigma^2_{t'_{k-1}}}{d}$. Let us denote the upper bound on this term as $Q=e^4{h'}_k^2\varepsilon^2_{score}$ for all $k$. \\
Therefore, we now have:
\[
E_{{p}_{t_1,..,t_{K}}} \left[ \sum_{k=1}^K
       \KL{{p}_{k-1|k}(\cdot|{\rvx}_{k})}{\hat{p}_{k-1|k}(\cdot|{\rvx}_k)}\right]
\leq  Q \sum_{k=1}^K\frac{e^{-2(t_k-t'_k)}}{2(1-e^{-2(t_k-t'_k)})}
\]


\noindent
\textbf{Bounding $KL(\hat{p}_{t_K}\|p_{t_K})$}. Assuming that we start from a normal distribution as an approximate, after running for $K=\frac{1}{h_k}$, where $h_k=t_k-t_{k-1}$ iterations (with $T$ being the total time), using {Lemma \ref{convergence_of_ou}}, we have:
\[
KL(\hat{p}_{t_K}\|p_{t_K}) = KL(p_{t_K}\|\gamma^d) \leq ({d}+m_2) \text{exp}(-T)
\]
Therefore, now have the following bound:
\[
KL(p_{t_\delta}\|\hat{p}_{t_\delta}) \leq (d+m_2) e^{-T} +  Q \sum_{k=1}^K\frac{e^{-2(t_k-t'_k)}}{2(1-e^{-2(t_k-t'_k)})} \leq (d+m_2) e^{-T} +  Q \sum_{k=1}^K\frac{1}{4{(t_k-t'_k)}}
\]
where the last inequality uses the fact $e^x\geq 1+x$ after multiplying the numerator and denominator with $e^{2(t_k-t'_k)}$. Substituting $Q$ results in:
\begin{align*}
    KL(p_{t_\delta}\|\hat{p}_{t_\delta}) \leq (d+m_2) e^{-T} +  e^4{h'}_k^2\varepsilon^2_{score} \sum_{k=1}^K\frac{1}{4{(t_k-t'_k)}}
\end{align*}
Now choosing $K=\frac{1}{h_k}\log\left(\frac{(d+m_2)}{\varepsilon^2_{score}}\right)$, where $h_k = t_k-t_{k-1}$,  and thereby $T=\log\left(\frac{(d+m_2)}{\varepsilon^2_{score}}\right)$, we have: 
\[
KL(p_{t_\delta}\|\hat{p}_{t_\delta}) \leq (d+m_2) e^{-T} +  KQ \cdot O\left(\frac{1}{t_k-t'_k} \right) \leq (d+m_2) e^{-T} +  O\left({{\frac{\varepsilon^2_{score}\sigma^4_{t'_{k-1}}}{d^2}}{}}\cdot \frac{1}{h_k} \cdot \frac{1}{t_k-t'_k}\right) 
\]
Since $h_k<h'_k$, we can substitute $h_k = O\left(\frac{\sigma^2_{t'_{k-1}}}{d}\right)$ and similarly we can also substitute $t_k-t'_k=O\left(\frac{\sigma^2_{t'_{k-1}}}{d}\right)$ it finally reduces to:
\[
KL(p_{t_\delta}\|\hat{p}_{t_\delta}) \leq {O}\left({\varepsilon^2_{score}}\log\left(\frac{(d+m_2)}{\varepsilon^2_{score}}\right)\right) = \tilde{O}(\varepsilon^2_{score})
\]

\end{proof}

\section{Proofs of Theorems and Lemmas in Section \ref{learning_subsec}}
\label{appendix:Thm3_6}

\vspace{-0.1in}
\noindent
\subsection{Error control between ODE solver step and approximate trajectory}
\label{error_control_ode_solver_approx_traj}
We first discuss a lemma which controls the error due to taking a step via some ODE solver $\phi$ and the approximate trajectory during the consistency distillation.

\vspace{0.1in}

\begin{lemma}
\label{lemma_for_gap_f_theta}
    Assuming the exponential integrator as the ODE solver $\phi$ for the consistency distillation training with some discretization $\{t_n\}_{n\in [1,N]}$, we have 
    \[
    \mathbb{E}_{\hat{q}}[\|\hat{\rvx}^\phi_{t_{n-1}}-\hat{\rvx}_{t_{n-1}}\|^2_2] = \tilde{O}(e^{h_{n-1}}L^3h_{n-1}^2d)
    \]
    where $\hat{\rvx}^\phi_{t_{n-1}}$ is a step from $\hat{\rvx}_{t_n}$ using $\Phi$, \textit{i.e.} $\hat{\rvx}^\phi_{t_{n-1}} = \hat{\rvx}_{t_{n-1}}-(t_n-t_{n-1})\hat{s}_{t_n}(x_{t_n})$, $d$ is the dimension, $h_{n-1}=t_n-t_{n-1}$ and $\hat{\rvx}_{t_{n-1}}$ corresponds to the ODE: 
    \vspace{0.1in}
    \[
    d \hat{\rvx}_t = \left(-\hat{\rvx}_t - \hat{\vs}_t(\hat{\rvx}_t)\right)dt
    \]
\end{lemma}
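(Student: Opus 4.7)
The plan is to compare, over the single sub-interval $[t_{n-1},t_n]$, the exact solution of the empirical PF-ODE with the one exponential-integrator step, both initialized from the same $\hat{\rvx}_{t_n}$. Writing the empirical ODE as $d\hat{\rvx}_t/dt = -\hat{\rvx}_t - \hat{\vs}_t(\hat{\rvx}_t)$ and integrating by variation of parameters (with reversed time $\tau = t_n - t$) gives
\begin{equation*}
\hat{\rvx}_{t_{n-1}} = e^{h_{n-1}}\hat{\rvx}_{t_n} + \int_{0}^{h_{n-1}} e^{h_{n-1}-\tau}\, \hat{\vs}_{t_n-\tau}(\hat{\rvx}_{t_n-\tau})\, d\tau,
\end{equation*}
whereas the exponential integrator freezes the score at $t_n$, producing $\hat{\rvx}^\phi_{t_{n-1}} = e^{h_{n-1}}\hat{\rvx}_{t_n} + (e^{h_{n-1}}-1)\hat{\vs}_{t_n}(\hat{\rvx}_{t_n})$. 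Subtracting and applying Cauchy--Schwarz yields
\begin{equation*}
\|\hat{\rvx}^\phi_{t_{n-1}} - \hat{\rvx}_{t_{n-1}}\|^2 \leq e^{2h_{n-1}} h_{n-1} \int_0^{h_{n-1}} \bigl\|\hat{\vs}_{t_n}(\hat{\rvx}_{t_n}) - \hat{\vs}_{t_n-\tau}(\hat{\rvx}_{t_n-\tau})\bigr\|^2 d\tau.
\end{equation*}

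Next I would split the integrand into a space part and a time part,
\begin{equation*}
\bigl\|\hat{\vs}_{t_n}(\hat{\rvx}_{t_n}) - \hat{\vs}_{t_n-\tau}(\hat{\rvx}_{t_n-\tau})\bigr\|^2 \leq 2L^2 \|\hat{\rvx}_{t_n}-\hat{\rvx}_{t_n-\tau}\|^2 + 2\bigl\|\hat{\vs}_{t_n}(\hat{\rvx}_{t_n-\tau})-\hat{\vs}_{t_n-\tau}(\hat{\rvx}_{t_n-\tau})\bigr\|^2,
\end{equation*}
using Assumption \ref{smoothness_assumption} for the space part. For the displacement, I would write $\hat{\rvx}_{t_n}-\hat{\rvx}_{t_n-\tau} = \int_{t_n-\tau}^{t_n}(\hat{\rvx}_s + \hat{\vs}_s(\hat{\rvx}_s))\,ds$ and apply Cauchy--Schwarz again, so that in expectation it becomes $\tau$ times an average of $\mathbb{E}\|\hat{\vs}_s(\hat{\rvx}_s)\|^2 + \mathbb{E}\|\hat{\rvx}_s\|^2$. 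The score-norm expectation is controlled by the triangle inequality against the true score $\vs_s$ (Assumption \ref{assumption_score_est}) plus the standard $\mathbb{E}_{p_s}\|\vs_s(\rvx)\|^2 \lesssim Ld$ bound (which follows from $\Tr(\nabla\vs_s) \geq -Ld$ under Assumption \ref{smoothness_assumption} combined with Stein's identity), and the $\mathbb{E}\|\hat{\rvx}_s\|^2$ term is controlled via Assumption \ref{finite_moment}. This delivers one factor of $L$ and one of $d$.

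For the time-regularity term $\|\hat{\vs}_t(x)-\hat{\vs}_{t'}(x)\|$, I would differentiate the identity $\hat{\vs}_t = \nabla\log\hat{p}_t$ and use the Fokker--Planck equation satisfied by $\hat{p}_t$, giving $\partial_t\hat{\vs}_t$ as a linear combination of $\hat{\vs}_t$, $\nabla\hat{\vs}_t \cdot \vx$, and $\nabla(\nabla\cdot\hat{\vs}_t)$; taking expectations and invoking Assumption \ref{smoothness_assumption} bounds this time-derivative by $\mathcal{O}(L)\cdot\mathbb{E}\|\hat{\vs}_t(\hat{\rvx}_t)\| + \mathcal{O}(L)$, producing an additional factor of $L$ times $\tau^2$ after integration. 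Assembling all pieces, $\mathbb{E}\|\hat{\vs}_{t_n}(\hat{\rvx}_{t_n})-\hat{\vs}_{t_n-\tau}(\hat{\rvx}_{t_n-\tau})\|^2 = \tilde{O}(L^3 d\,\tau^2)$, and integrating over $\tau\in[0,h_{n-1}]$ and folding in the leading $e^{2h_{n-1}}h_{n-1}$ factor gives exactly the claimed bound $\tilde{O}(e^{h_{n-1}}L^3 h_{n-1}^2 d)$.

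The main obstacle is the time-regularity of $\hat{\vs}_t$, since only spatial Lipschitzness is assumed. My plan is to bypass an explicit smoothness-in-time assumption by leveraging the PDE that $\hat{p}_t$ satisfies under the forward OU process, converting $\partial_t\hat{\vs}_t$ into spatial derivatives of $\hat{\vs}_t$, at the cost of the extra power of $L$ that appears in the target bound. A secondary nuisance is keeping the expectation over the law of $\hat{\rvx}_{t_n}$ consistent throughout: since everything is a deterministic functional of $\hat{\rvx}_{t_n}$, Lemma \ref{change_of_var_lemma} allows me to move freely between expectations over $\hat{\rvx}_s$ and $\hat{\rvx}_{t_n}$ without incurring additional loss.
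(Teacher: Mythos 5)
Your high-level route coincides with the paper's: set up the one-step error $e_t = \hat{\rvx}^\phi_t - \hat{\rvx}_t$ (both started from $\hat{\rvx}_{t_n}$), pass to a Gronwall/variation-of-constants bound governed by $\int_{t_{n-1}}^{t_n}\mathbb{E}\|\hat{\vs}_{t_n}(\hat{\rvx}_{t_n}) - \hat{\vs}_t(\hat{\rvx}_t)\|^2\,dt$, and then estimate that integrand. Your space/time decomposition of the integrand is a sensible way to organize the last step, and the space part — Lipschitzness of $\hat{\vs}_t$ plus a displacement bound via the ODE, $\mathbb{E}_{p_s}\|\vs_s\|^2 \lesssim Ld$, and Assumption~\ref{finite_moment} — is sound.

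The time part, however, rests on a false premise. You differentiate ``the identity $\hat{\vs}_t = \nabla\log\hat{p}_t$'' and invoke ``the Fokker--Planck equation satisfied by $\hat{p}_t$,'' but neither applies here. In this paper $\hat{\vs}_t$ is the learned score network — an arbitrary $L$-Lipschitz vector field approximating $\nabla\log p_t$, not the log-gradient of any density — and $\hat{p}_t$ is defined as the marginal law induced by the \emph{deterministic} empirical PF-ODE $d\hat{\rvx}_t = (-\hat{\rvx}_t - \hat{\vs}_t(\hat{\rvx}_t))\,dt$. That law satisfies a first-order Liouville (continuity) equation for a transport flow, not the forward Fokker--Planck equation of the OU process, and there is no reason $\hat{\vs}_t = \nabla\log\hat{p}_t$. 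So $\partial_t\hat{\vs}_t$ cannot be converted into spatial derivatives of $\hat{\vs}_t$ by that route, and the needed time-regularity of $\hat{\vs}_t$ does not follow from Assumptions~\ref{assumption_score_est}--\ref{smoothness_assumption}. Even setting that aside, the term $\nabla(\nabla\cdot\hat{\vs}_t)$ you extract is a second spatial derivative, which the Lipschitz assumption does not control. The paper's own handling of this step is terse — it writes the integrand as $\|\int_t^{t_n}\frac{\partial}{\partial r}\hat{\vs}_r(\hat{\rvx}_r)\,dr\|^2$ (a total derivative along the empirical trajectory) and asserts the $L^3 d h_{n-1}^2$ bound from smoothness — but it does not rely on a Fokker--Planck relation for $\hat{\vs}_t$. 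To close your gap you would either need an explicit time-Lipschitz assumption on $\hat{\vs}_t$, or a detour through the true score $\vs_t = \nabla\log p_t$ (for which a Fokker--Planck identity does hold and second derivatives can be controlled via Lemma~\ref{adapted_from_improved}) combined with a time-derivative analogue of Assumption~\ref{assumption_score_est} to handle the discrepancy $\hat{\vs}_t - \vs_t$.
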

\begin{proof}
Since, $\hat{y}^\phi_{t_{n-1}}$ is just exponential integrator type discretization on the score function applied to the empirical PF ODE (eq. \ref{empirical_pf_ode}), it will follow the ODE:
\[
    d \hat{\rvx}^\phi_t = \left(-\hat{\rvx}^\phi_t - \hat{\vs}_{t_{k+1}}(\hat{\rvx}^\phi_{t_{k+1}})\right)dt
    \]
Now, we denote $e_t = \hat{\rvx}_t^\phi - \hat{\rvx}_t$ and for $t\in [t_{n-1},t_{n}]$ we have the corresponding ODE for its evolution as:

\[
\frac{d e_t}{dt} = \left(e_t+ \hat{\vs}_{t_{n}}(\hat{\rvx}^\phi_{t_{n}}) - \hat{\vs}_{t}(\hat{\rvx}_t)\right)
\]
Now, we have to bound: $T_1 \leq \|e_t\|^2_2$. We have:
\[
\frac{d  \|e_t\|^2_2}{dt} = 2\langle e_t, \frac{de_t}{dt}\rangle  = 2 \|e_t\|_2^2 + 2 \langle e_t, e_t+ \hat{\vs}_{t_{n}}(\hat{y}^\phi_{t_{n}}) - \hat{\vs}_{t}(\hat{\rvx}_t)\rangle
\]
Now, applying cauchy schwartz in the second term ($\langle a,b\rangle \leq \|a\|\|b\|$) and then using $2ab \leq a^2+b^2$:
\[
\frac{d  \|e_t\|^2_2}{dt} \leq  2 \|e_t\|_2^2 + 2 \| e_t \|_2 \| \hat{\vs}_{t_{n}}(\hat{\rvx}^\phi_{t_{n}}) - \hat{\vs}_{t}(\hat{\rvx}_t)\|_2 \leq \|e_t\|_2^2 + \| e_t+ \hat{\vs}_{t_{n}}(\hat{\rvx}^\phi_{t_{n}}) - \hat{\vs}_{t}(\hat{\rvx}_t)\|^2_2
\]
Now, we can observe the following form here: $u'(t) \leq \beta(t) u(t)+ \alpha(t)$ and using the gronwall inequality, we will now have $u(t) \leq u(t_0) e^{(\int \beta(s)ds)}+  \int \alpha(s)e^{\int \beta(r)dr}ds$. Utilizing this into the above equation, we have:
\[
\mathbb{E}_{p_{t_1,...,t_N}} \left [ \|\hat{\rvx}^\phi_{n-1} - \rvx_{n-1}\|^2_2\right] \leq e^{h_{n-1}} \int_{t_{n-1}}^{t_n} \mathbb{E}_{p_{t_1,...,t_N}} \left [\|\hat{\vs}_{t_{n}}(\hat{\rvx}^\phi_{t_{n}}) - \hat{\vs}_{t}(\hat{\rvx}_t)\|^2_2\right ]dt
\]
where we denote $t_n-t_{n-1} = h_{n-1}$. Now, using the smoothness of the estimated score function, we have:
Thus, we have: 
\[
\mathbb{E}_{{p}_{t_1,...,t_N}} \left [\|\hat{\vs}_{t_{n}}(\hat{\rvx}^\phi_{t_{n}}) - \hat{\vs}_{t}(\hat{\rvx}_t)\|^2_2 \right] = \mathbb{E}_{{p}_{t_1,...,t_N}}\left[\|\int_t^{t_n}\frac{\partial}{\partial_r}\hat{\vs}_r (\rvx_r)dr\|_2^2\right] \leq L^2 d h^2_{n-1} (L)
\]
This leads to the following bound:
\[
\mathbb{E}_{p_{t_1,...,t_N}} [\|\hat{\rvx}^\phi_{t_{n-1}}-\rvx_{t_{n-1}}\|^2_2] \leq e^{h_{n-1}}(L^3dh_{n-1}^2)
\]
\end{proof}

\subsection{Proof of Theorem \ref{thm:gap_f_hat_f_theta}.} \label{final_proof_thm_3.5}

Given the above lemmas, we now provide the proof of Theorem \ref{thm:gap_f_hat_f_theta} mentioned in the main paper regarding bounding the error between actual $\hat{f}$ and its estimated version $\hat{f}_\theta$.
\begin{proof}
For any $t_N=\alpha$, we know that $\hat{f}_\theta(\alpha,\alpha,\cdot)=\hat{f}(\alpha,\cdot,\cdot)$ which we can construct via design. Thus, we can rewrite it as: $\E_{q}  
  \|
  \hat{f}_\theta(t'_{n-1},t'_{n-1},{\rvx}'_{n-1})-\hat{f}_\theta(t'_{n-1},t_n,{\rvx}_n)\|_2^2$. Thus, we have:
\begin{align*}
&=  \E_{p_{t_1,...,t_N}}  
  \| 
  \hat{f}_\theta(t'_{n-1},t'_{n-1},{\rvx}'_{n-1})-\hat{f}_\theta(t'_{n-1},t_{k},{\rvx}_n)\|_2^2  \\
  &=  \E_{p_{t_1,...,t_N}}  
  \| 
  \hat{f}_\theta(t'_{n-1},t'_{n-1},{\rvx}'_{n-1}) - \hat{f}_\theta(t'_{n-1},t_{n-1},\hat{\rvx}^\phi_{n-1}) + \hat{f}_\theta(t'_{n-1},t_{n-1},\hat{\rvx}^\phi_{n-1})-\hat{f}_\theta(t'_{n-1},t_n,{\rvx}_n)\|_2^2 
\end{align*}
where $y^\phi_{t_n}$ implies taking a step via the given ODE solver at time $t_n$. Assuming exponential integrator, in this setup, we can have: $\hat{\rvx}^\phi_{n-1} = e^{t_{n}-t_{n-1}}\rvx_{n}+ (e^{t_n-t_{n-1}}-1)\vs_\phi(\cdot)$. Now, we will bound the square root of this term to utilize the triangular inequality as follows:
\begin{align*}
    &=  \left(\E_{p_{t_1,...,t_N}}  
  \| 
  \hat{f}_\theta(t'_{n-1},t'_{n-1},{\rvx}'_{n-1}) - \hat{f}_\theta(t'_{n-1},t_{n-1},\hat{\rvx}^\phi_{n-1}) + \hat{f}_\theta(t'_{n-1},t_{n-1},\hat{\rvx}^\phi_{n-1})-\hat{f}_\theta(t'_{n-1},t_n,{\rvx}_n)\|_2^2 \right)^{1/2} \\
  &\leq   {( \E_{p_{t_1,...,t_N}}  
  \| 
  \hat{f}_\theta(t'_{n-1},t'_{n-1},{\rvx}'_{n-1}) - \hat{f}_\theta(t'_{n-1},t_{n-1},\hat{\rvx}^\phi_{n-1})\|_2^2} )^{1/2}  \\
  &\qquad + {(\E_{p_{t_1,...,t_N}}  
  \|\hat{f}_\theta(t'_{n-1},t_{n-1},\hat{\rvx}^\phi_{n-1})-\hat{f}_\theta(t'_{n-1},t_n,{\rvx}_n)\|_2^2})^{1/2} \\
 &\leq  \underbrace{( \E_{p_{t_1,...,t_N}}  
  \| 
  \hat{f}_\theta(t'_{n-1},t'_{n-1},{\rvx}'_{n-1}) - \hat{f}_\theta(t'_{n-1},t_{n-1},{\rvx}'_{n-1})\|_2^2}_{T_3} )^{1/2}  \\
  & \qquad + \underbrace{( \E_{p_{t_1,..,t_N}}  
  \| 
  \hat{f}_\theta(t'_{n-1},t_{n-1},{\rvx}'_{n-1}) - \hat{f}_\theta(t'_{n-1},t_{n-1},\hat{\rvx}^\phi_{n-1})\|_2^2}_{T_1} )^{1/2}\\ & \qquad+ \underbrace{(\E_{p_{t_1,...,t_N}}  
  \|\hat{f}_\theta(t'_{n-1},t_{n-1},\hat{\rvx}^\phi_{n-1})-\hat{f}_\theta(t'_{n-1},t_n,{\rvx}_n)\|_2^2}_{T_2})^{1/2} \\
\end{align*}
\\

\noindent
\textbf{Bounding $T_2$.} Using \textit{Assumption \ref{cd_assumption}}, it is straightforward to bound it as follows:
\[
T_2 \leq \sum_{k=1}^n\varepsilon_{cm}(t_k-t_{k-1}) = \varepsilon_{cd}(t_n-t_1)
\] 
\textbf{Bounding $T_1$.} Using \textit{Assumption \ref{smooth_f}}, we have:
\begin{align}
    T_1 \leq  L_f \mathbb{E}_{p_{t_1,...,t_N}} \|y_{n-1}-\hat{y}^\phi_{n-1}\|_2 
\end{align}
Now, we can bound the second term in the RHS using lemma 3.8. Using  these, we can write the final bound which is as follows:
\[
\E_{p_{t_1,...,t_N}}  
  \|
  f(t_{n-1}',t_n,{\rvx}_n)-\hat{f}_\theta(t_{n-1}',t_n,{\rvx}_n)\|_2 \leq L_fe^{{h_{n-1}}/{2}}(L^{{3}/{2}}d^{{1}/{2}}h_{n-1}) + \varepsilon_{cd}(t_n-t_1)
\]
\end{proof}

\subsection{Proof for Lemma \ref{lemma_non_smooth_cf_lipschitz}.} 
\label{lemma_3.6_proof}
For the given $\rvx_{t_n}, \rvy_{t_n}$, $n \in [2, N]$, let the ODE solver solution paths using the exact score function $\vs_t(x)$ be $\{\rvx_{t_i}\}_{i=1}^n$, $\{\rvy_{t_i}\}_{i=1}^n$, where:
\begin{align}
\label{xy_evolve}
\rvx_{t_i} = e^{h_i} \rvx_{t_{i+1}} + (e^{h_i} - 1) \vs_{t_i}(\rvx_{t_{i+1}}), \quad
\rvy_{t_i} = e^{h_i} \rvy_{t_{i+1}} + (e^{h_i} - 1) \vs_{t_i}(y_{t_{i+1}}).
\end{align}
Let the solution paths with estimated score $\hat{\vs}_t(\rvx)$ be $\{\hat{\rvx}_{t_i}\}_{i=1}^n$, $\{\hat{\rvy}_{t_i}\}_{i=1}^n$ where
\[
\hat{\rvx}_{t_i} = e^{h_i} \hat{\rvx}_{t_{i+1}} + (e^{h_i} - 1) \hat{\vs}_{ t_{i+1}}(\hat{\rvx}_{t_{i+1}}), \quad
\hat{\rvy}_{t_i} = e^{h_i} \hat{\rvy}_{t_{i+1}} + (e^{h_i} - 1) \hat{\vs}_{t_{i+1}}(\hat{\rvy}_{t_{i+1}}),
\]
and $\hat{\rvx}_{t_n} = \rvx_{t_n}, \hat{\rvy}_{t_n} = \rvy_{t_n}$. Then:
\begin{align*}
\hat{f}_\theta(t_1, t_n, \rvx_{t_n}) &= \sum_{i=2}^n \left[\hat{f}_\theta(t_1, t_i, \hat{\rvx}_{t_i} ) - \hat{f}_\theta(t_1, t_{i-1}, \hat{\rvx}_{t_{i-1}})\right] + f_\theta(\hat{\rvx}_{t_1}, t_1) \\
&= \sum_{i=2}^n \left[\hat{f}_\theta(t_1, t_i, \hat{\rvx}_{t_i} ) - \hat{f}_\theta(t_1, t_{i-1}, \hat{\rvx}_{t_{i-1}})\right] + \hat{x}_{t_1} - x_{t_1} + x_{t_1}.
\end{align*}
Thus,
\begin{align*}
\|\hat{f}_\theta(t_1, t_n, \rvx_{t_n}) - \hat{f}_\theta(t_1, t_n, \rvy_{t_n})\|_2 
&\leq \sum_{i=2}^n \|\hat{f}_\theta(t_1, t_i, \hat{\rvx}_{t_i} ) - \hat{f}_\theta(t_1, t_{i-1}, \hat{\rvx}_{t_{i-1}})\|_2 \notag \\ & \quad + \sum_{i=2}^n \|\hat{f}_\theta(t_1, t_i, \hat{\rvy}_{t_i} ) - \hat{f}_\theta(t_1, t_{i-1}, \hat{\rvy}_{t_{i-1}})\|_2 \notag \\
&\quad + \|{\hat{\rvx}_{t_1} - \rvx_{t_1}}\|_2 + \|{\hat{\rvy}_{t_1} - \rvy_{t_1}}\|_2 + \|{\rvx_{t_1} - \rvy_{t_1}}\|_2
\end{align*}
Taking expectation:
\vspace{0.05in}
\begin{align*}
&\mathbb{E}\|\hat{f}_\theta(t_1, t_n, \rvx_{t_n}) - \hat{f}_\theta(t_1, t_n, \rvy_{t_n})\|_2 \\
&\leq \sum_{i=2}^n \mathbb{E}\|\hat{f}_\theta(t_1, t_i, \hat{\rvx}_{t_i} ) - \hat{f}_\theta(t_1, t_{i-1}, \hat{\rvx}_{t_{i-1}})\|_2 \notag + \sum_{i=2}^n \mathbb{E}\|\hat{f}_\theta(t_1, t_i, \hat{\rvy}_{t_i} ) - \hat{f}_\theta(t_1, t_{i-1}, \hat{\rvy}_{t_{i-1}})\|_2 \notag \\
&\qquad + \mathbb{E}\|{\hat{\rvx}_{t_1} - \rvx_{t_1}}\|_2 + \mathbb{E}\|{\hat{\rvy}_{t_1} - \rvy_{t_1}}\|_2 + \mathbb{E}\|{\rvx_{t_1} - \rvy_{t_1}}\|_2 \\
& \leq 2(t_n-t_1)\varepsilon_{cd} + \mathbb{E}\|{\hat{\rvx}_{t_1} - \rvx_{t_1}}\|_2 + \mathbb{E}\|{\hat{\rvy}_{t_1} - \rvy_{t_1}}\|_2 + \mathbb{E}\|{\rvx_{t_1} - \rvy_{t_1}}\|_2 \ \qquad \text{(Assumption \ref{cd_assumption})}
\end{align*}

\vspace{0.1in}
Now, for the second term we have the following relation from the definition of $\rvx_{t_i}$ and $\hat{\rvx}_{t_i}$:
\begin{align*}
    \hat{\rvx}_{t_1} - \rvx_{t_1}  = \hat{\rvx}_{t_2} - \rvx_{t_2} + (e^{h_1}-1)\left(\hat{\rvx}_{t_2} - \rvx_{t_2} + \hat{\vs}_{t_2}(\hat{\rvx}_{t_2})-{\vs}_{t_2}({\rvx}_{t_2})\right) 
\end{align*}
Therefore, we have:
\begin{align*}
    \hat{\rvx}_{t_1} - \rvx_{t_1} = \sum_{i=2}^n(e^{h_{i-1}}-1)\left(\hat{\rvx}_{t_i} - \rvx_{t_i} + \hat{\vs}_{t_i}(\hat{\rvx}_{t_i})-{\vs}_{t_i}({\rvx}_{t_i})\right) 
\end{align*}
which leads to:
\begin{align*}
    &\mathbb{E}\|\hat{\rvx}_{t_1} - \rvx_{t_1}\|\\
    &= \sum_{i=2}^n(e^{h_{i-1}}-1)\mathbb{E}\|\left(\hat{\rvx}_{t_i} - \rvx_{t_i} + \hat{\vs}_{t_i}(\hat{\rvx}_{t_i})-{\vs}_{t_i}({\rvx}_{t_i})\right)\|_2 \\
    &\leq \sum_{i=2}^n(e^{h_{i-1}}-1)\left(\mathbb{E}\|\hat{\rvx}_{t_i} - \rvx_{t_i}\|_2 + \mathbb{E}\| \hat{\vs}_{t_i}(\hat{\rvx}_{t_i})-{\vs}_{t_i}({\rvx}_{t_i})\|_2 \right)\\
    &\leq \sum_{i=2}^n(e^{h_{i-1}}-1)\left(\mathbb{E}\|\hat{\rvx}_{t_i} - \rvx_{t_i}\|_2 + \mathbb{E}\| {\vs}_{t_i}(\hat{\rvx}_{t_i})-{\vs}_{t_i}({\rvx}_{t_i})\|_2 + \varepsilon_{score}\right)\\
    &\leq \sum_{i=2}^n(e^{h_{i-1}}-1)\left(\mathbb{E}\|\hat{\rvx}_{t_i} - \rvx_{t_i}\|_2 + \frac{d}{\sigma_t^2}\mathbb{E}\left[\| \hat{\rvx}_{t_i} - \rvx_{t_i}\|_2 \exp\left(\frac{\|\hat{\rvx}_{t_i} - \rvx_{t_i}\|^2}{2\sigma_t^2}\right) \right]+\varepsilon_{score}\right) \qquad \text{(lemma \ref{expected_smooth})} \\
   & \leq \sum_{i=2}^nh_{i-1}\left(\mathbb{E}\|\hat{\rvx}_{t_i} - \rvx_{t_i}\|_2 + \frac{d}{\sigma_t^2}\mathbb{E}\left[\| \hat{\rvx}_{t_i} - \rvx_{t_i}\|_2 \exp\left(\frac{\|\hat{\rvx}_{t_i} - \rvx_{t_i}\|^2}{2\sigma_t^2}\right) \right]+\varepsilon_{score}\right)  \\
   & = \sum_{i=2}^nh_{i-1}\left(\mathbb{E}\|\hat{\rvx}_{t_i} - \rvx_{t_i}\|_2 + \frac{d}{\sigma_t^2}\mathbb{E}\left[\| \hat{\rvx}_{t_i} - \rvx_{t_i}\|_2 \exp\left(\frac{\|\hat{\rvx}_{t_i} - \rvx_{t_i}\|^2}{2\sigma_t^2}\right) \right]\right) + (t_n-t_1)\varepsilon_{score} \\
\end{align*}
Now, assuming that $\|\rvx_t-\hat{\rvx}_t\|$ will be small (since score estimation error should be low), we can approximately write the above as:
\begin{align*}
     \mathbb{E}\|\hat{\rvx}_{t_1} - \rvx_{t_1}\| \leq \sum_{i=2}^nh_{i-1}\cdot \frac{d}{\sigma_{t_{i-1}}^2} \cdot  \mathbb{E}\|\hat{\rvx}_{t_i} - \rvx_{t_i}\|_2 + (t_n-t_1)\varepsilon_{score}
\end{align*}


\noindent
Since we can choose arbitrarily small $h_i$ during training, using $h_i<\frac{\sigma^2_{t_i}}{d}$ results in:
\begin{align*}
     \mathbb{E}\|\hat{\rvx}_{t_1} - \rvx_{t_1}\| \leq \sum_{i=2}^n\mathbb{E}\|\hat{\rvx}_{t_i} - \rvx_{t_i}\|_2 + h_i\varepsilon_{score}
\end{align*}
which leads to:
\begin{align*}
    \mathbb{E}\|\hat{\rvx}_{t_1} - \rvx_{t_1}\| \leq (t_n-t_1)\varepsilon_{score}
\end{align*}
Similarly, we will have by using eq. \ref{xy_evolve} :
\begin{align*}
    \mathbb{E}\|\rvx_{t_1}-\rvy_{t_1}\|_2 &\leq \sum_{t=2}^n(e^{h_{i-1}}-1) \mathbb{E}\|\vs_{t_i}(\rvx_{t_i})-\vs_{t_i}(\rvy_{t_i})\| \\ &\leq \sum_{i=1}^{n}h_i\frac{d}{\sigma^2_{t_i}}\mathbb{E}\|\rvx_{t_i}-\rvy_{t_i}\|_2 \\
    &\leq n\mathbb{E}\|x_{t_n}-y_{t_n}\|
\end{align*}
where in the last inequality we have used Lemma \ref{expected_smooth} and the fact that $h_i$ is small, $x_{t_i}, y_{t_i}$ would be close. This leads to:
\begin{align*}
\mathbb{E}\|\hat{f}_\theta(t_1, t_n, \rvx_{t_n}) - \hat{f}_\theta(t_1, t_n, \rvy_{t_n})\|_2 
& \leq 2(t_n-t_1)\varepsilon_{cd} + 2\varepsilon_{score}(t_n-t_1)  + n\mathbb{E}\|\rvx_{t_n}-\rvy_{t_n}\|_2 
\end{align*}


\vspace{0.2in}

\subsection{Proof of Theorem \ref{3.8thm_gap_f_non_smooth}.} 
\label{proof_thm_3.7}
Given the above lemmas and their proofs, we now provide the proof of Theorem \ref{3.8thm_gap_f_non_smooth} mentioned in the main paper regarding bounding the error between actual ${f}$ and its estimated version $\hat{f}_\theta$ for the non-smooth score scenario. Here we will utilize the Lemma \ref{lemma_non_smooth_cf_lipschitz} and Lemma \ref{lemma_for_gap_f_theta} to bound the error.  We now discuss the proof below.

\noindent
\textbf{Notational Remark.} $\mathbb{E}$ in this part corresponds to $\mathbb{E}_{p_{t_1,..,t_k}}$. 

\vspace{-0.2in}
\begin{proof}
For any $t_N=\alpha$, we know that $\hat{f}_\theta(\alpha,\alpha,\cdot)=\hat{f}(\alpha,\cdot,\cdot)$ which we can construct via design. Thus, we can rewrite the target term as: $\E_{p_{t_1,..,t_K}}  
  \|
  \hat{f}_\theta(t'_{n-1},t'_{n-1},{\rvx}'_{n-1})-\hat{f}_\theta(t'_{n-1},t_n,{\rvx}_n)\|_2^2$ and further simplify it as follows:
  \vspace{0.1in}
\begin{align*}
& = \E  
  \| 
  \hat{f}_\theta(t'_{n-1},t'_{n-1},{\rvx}'_{n-1})-\hat{f}_\theta(t'_{n-1},t_{n},{\rvx}_n)\|_2^2 \\
 & = \E  
  \| 
  \hat{f}_\theta(t'_{n-1},t'_{n-1},{\rvx}'_{n-1}) - \hat{f}_\theta(t'_{n-1},t_{n-1},\hat{\rvx}^\phi_{n-1}) + \hat{f}_\theta(t'_{n-1},t_{n-1},\hat{\rvx}^\phi_{n-1})-\hat{f}_\theta(t'_{n-1},t_n,{\rvx}_n)\|_2^2 
\end{align*}
where $\hat{\rvx}^\phi_{t_n}$ implies taking a step via the given ODE solver at time $t_n$. Assuming the exponential integrator for discretization, in this setup, we can have: $\hat{\rvx}^\phi_{n-1} = e^{t_{n}-t_{n-1}}\rvx_{n}+ (e^{t_n-t_{n-1}}-1)\vs_\phi(\cdot)$. Now, we will bound the square root of this term to utilize the triangular inequality as follows:
\begin{align*}
    &  \left(\E  
  \| 
  \hat{f}_\theta(t'_{n-1},t'_{n-1},{\rvx}'_{n-1}) - \hat{f}_\theta(t'_{n-1},t_{n-1},\hat{\rvx}^\phi_{n-1}) + \hat{f}_\theta(t'_{n-1},t_{n-1},\hat{\rvx}^\phi_{n-1})-\hat{f}_\theta(t'_{n-1},t_n,{\rvx}_n)\|_2^2 \right)^{1/2} \\
  & \quad \leq   {( \E  
  \| 
  \hat{f}_\theta(t'_{n-1},t'_{n-1},{\rvx}'_{n-1}) - \hat{f}_\theta(t'_{n-1},t_{n-1},\hat{\rvx}^\phi_{n-1})\|_2^2} )^{1/2}  \\ & \qquad + {(\E  
  \|\hat{f}_\theta(t'_{n-1},t_{n-1},\hat{\rvx}^\phi_{n-1})-\hat{f}_\theta(t'_{n-1},t_n,{\rvx}_n)\|_2^2})^{1/2} \\
 & \quad\leq  \underbrace{( \E  
  \| 
  \hat{f}_\theta(t'_{n-1},t'_{n-1},{\rvx}'_{n-1}) - \hat{f}_\theta(t'_{n-1},t_{n-1},{\rvx}_{n-1})\|_2^2}_{T_3} )^{1/2}  \\ & \qquad + \underbrace{( \E  
  \| 
  \hat{f}_\theta(t'_{n-1},t_{n-1},{\rvx}_{n-1}) - \hat{f}_\theta(t'_{n-1},t_{n-1},\hat{\rvx}^\phi_{n-1})\|_2^2}_{T_1} )^{1/2}\\ & \qquad + \underbrace{(\E  
  \|\hat{f}_\theta(t'_{n-1},t_{n-1},\hat{\rvx}^\phi_{n-1})-\hat{f}_\theta(t'_{n-1},t_n,{\rvx}_n)\|_2^2}_{T_2})^{1/2} \\
\end{align*}

\noindent
Now, upon observing carefully we can see that $T_3$ is just the recursive term and thus, we now focus on bounding $T_1$ and $T_2$. \\

\noindent
\textbf{Bounding $T_2$.} Using \textit{Assumption \ref{cd_assumption}}, it is straightforward to bound it as follows:
\[
T_2 \leq \sum_{k=1}^n\varepsilon_{cm}(t_k-t_{k-1}) = \varepsilon_{cd}(t_n-t_1)
\] 

\noindent
\textbf{Bounding $T_1$.} Using lemma \ref{lemma_non_smooth_cf_lipschitz}, we have:
\begin{align}
    T_1 \leq  n \mathbb{E} \|\rvx_{n-1}-\hat{\rvx}^\phi_{n-1}\|_2 + 2(t_n-t_1)\left(\varepsilon_{cd} + \varepsilon_{score} \right)
\end{align}
Now, using \textbf{\ref{lemma_for_gap_f_theta}}, we can write the final bound which as follows:
\begin{align*}
\E  
  \|
  f(t_{n-1}',t_n,{\rvx}_n)-\hat{f}_\theta(t_{n-1}',t_n,{\rvx}_n)\|_2 \leq ne^{{h_{n-1}}/{2}}(L^{{3}/{2}}d^{{1}/{2}}h_{n-1}) + (t_n-t_1)\left(3\varepsilon_{cd} +2\varepsilon_{score} \right)
\end{align*}
\end{proof}

\end{document}